\documentclass[10pt]{article} % added twocolumn
\usepackage{amsmath, amssymb, bm}
\IfFileExists{placeins.sty}{\usepackage{placeins}}{\newcommand{\FloatBarrier}{}}
\allowdisplaybreaks[4]
\usepackage{graphicx}
\IfFileExists{psfrag.sty}{\usepackage{psfrag}}{}
\IfFileExists{epsf.sty}{\usepackage{epsf}}{}
\usepackage{enumerate}
\usepackage{comment}
\usepackage{natbib}
\usepackage{xcolor}
\usepackage{url} % for URLs
\usepackage{algorithm}
\usepackage{algpseudocode}
\usepackage{amsthm}
\usepackage{amsfonts} %
\usepackage[colorlinks=true, citecolor=blue]{hyperref}

\IfFileExists{tikz.sty}{%
	\usepackage{tikz}
	\usetikzlibrary{decorations.markings,arrows.meta}
}{%
	\def\draw##1;{}%
	\def\fill##1;{}%
	\def\filldraw##1;{}%
	\def\node##1;{}%
	\def\coordinate##1;{}%
	\def\foreach##1in##2##3{}%
}
\IfFileExists{subcaption.sty}{\usepackage{subcaption}}{}
\theoremstyle{plain}
\newtheorem{theorem}{Theorem}
\newtheorem{lemma}[theorem]{Lemma}
\newtheorem{proposition}[theorem]{Proposition}

\theoremstyle{definition}

\theoremstyle{remark}

\IfFileExists{titlesec.sty}{\usepackage{titlesec}}{} % optional, to adjust section spacing if needed
\IfFileExists{enumitem.sty}{\usepackage{enumitem}}{\newcommand{\setlist}[2][]{}} % for description spacing
\IfFileExists{microtype.sty}{\usepackage{microtype}}{} % subtle kerning/spacing fixes
\allowdisplaybreaks[4] % allow equations to break across lines/pages
\setlist[description]{leftmargin=1em,labelindent=0em}
\newcommand{\blind}{0}

\def\spacingset#1{\renewcommand{\baselinestretch}{#1}\small\normalsize}
\spacingset{1}

\begin{document}
	
	\if0\blind
	{
		\title{\bf Adaptive Iterative Hard Thresholding for Online High-dimensional Quantile Regression}
		\author{Zitian Zhou, Nan Lin \\ Washington University in St. Louis  }
		\date{}
		\maketitle
	} \fi
	
	\if1\blind
	{
		\bigskip\bigskip\bigskip
		\begin{center}
			{\LARGE\bf Adaptive Iterative Hard Thresholding for Online High-dimensional Quantile Regression}
		\end{center}
		\medskip
	} \fi
	
	\begin{abstract}
Online high-dimensional regression requires algorithms that can update sequentially while preserving structural sparsity. We propose \textit{Adaptive Iterative Hard Thresholding (AIHT)}, an online sparse-regression framework that alternates stochastic subgradient updates with adaptively scheduled hard-thresholding steps. The key idea is to separate support discovery from local refinement: early in the learning process, AIHT delays thresholding so that weak but informative coordinates have time to accumulate signal, while later it increases the projection frequency to stabilize the sparse estimator and exploit local curvature. We develop the theory for high-dimensional online quantile regression, a challenging setting in which the loss is nonsmooth and the data may exhibit heterogeneity or heavy-tailed noise. Under restricted curvature and gradient-leakage conditions, AIHT remains in an inflated sparse cone, exhibits a two-phase convergence behavior, and attains logarithmic regret for the sliding-window objective. Simulations for online quantile regression, together with threshold-scheduling ablations, support the proposed mechanism and illustrate its advantage over standard online sparse-learning baselines.
\end{abstract}
	
	\noindent%
	{\it Keywords:} Online learning, high-dimensional regression, hard thresholding, sparse regression, quantile regression
	%\vfill
	%\spacingset{1.45}
	
	\section{Introduction}
	\label{intro}
	Many modern machine learning applications operate in streaming environments, where data arrive sequentially, either as individual observations or in mini-batches \citep{zinkevich2003online, shalevshwartz2012online}. In such settings, the number of predictors is often large relative to the sample size available at each time step. This makes high-dimensional regression a fundamental problem in sequential learning (see, e.g., \citealp{buhlmann2011statistics, hastie2015statistical, negahban2012unified, zinkevich2003online, shalevshwartz2012online}). In these regimes, it is important not only to impose structural assumptions such as sparsity, but also to develop methods that remain stable under noise, adapt to heteroscedasticity, and are robust to potential distributional shifts \citep{buhlmann2011statistics, negahban2012unified, hastie2015statistical, koenker1978regression, koenker2005quantile}.
	
	Consider a streaming model in which data $(Y_t, X_t) \in \mathbb{R} \times \mathbb{R}^p$ arrive sequentially over time $t = 1, \dots, T$, where the time horizon $T$ may grow without bound. We consider the sparse linear regression model
	\begin{equation}
		\label{model}
		Y_t = X_t^\top \beta^* + \varepsilon_t,
	\end{equation}
	where the true coefficient vector $\beta^* \in \mathbb{R}^p$ is sparse and satisfies \(\|\beta^\ast\|_0=s_0\ll p\); here $\|\cdot\|_0$ denotes the number of nonzero entries. Our goal is to estimate $\beta^*$ iteratively by minimizing a sequence of convex or locally convex loss functions $\ell_t(\beta)$. Quantile regression is the main theoretical case study in this paper, but the algorithmic construction is not tied to the check loss. A standard approach in online learning is stochastic gradient descent (SGD), which updates the parameter according to
	\begin{equation}
		\beta_{t+1} = \beta_t - \eta_t g_t(\beta_t),
	\end{equation}
	where $\eta_t$ is the step size and $g_t(\beta_t) \in \partial \ell_t(\beta_t)$ is a (sub)gradient of the loss at time $t$. The performance of an online algorithm is typically measured by \emph{regret}, defined as
	\begin{equation}
		R_T = \sum_{t=1}^T \bigl( \ell_t(\beta_t) - \ell_t(\beta^*) \bigr),
	\end{equation}
	which represents the cumulative difference between the incurred loss and that of the best fixed parameter in hindsight. In low-dimensional settings, early work on online learning for model \eqref{model} (e.g., \citealp{zinkevich2003online, shalevshwartz2012online}) primarily focused on Gaussian models with smooth and strongly convex loss functions. Classical results in online convex optimization show that SGD achieves $O(\sqrt{T})$ regret for general convex losses and $O(\log T)$ regret when the loss is strongly convex with appropriately chosen step sizes \citep{zinkevich2003online, hazan2007log, shalevshwartz2012online}.
	
	Extending such results to high-dimensional settings with potentially unstable noise introduces substantial challenges. On the one hand, when the dimensionality $p$ exceeds the effective sample size, the empirical Hessian matrix becomes singular, making global strong convexity unrealistic \citep{buhlmann2011statistics, negahban2012unified}. To address this issue, it is standard to impose sparsity assumptions and incorporate regularization techniques \citep{buhlmann2011statistics, negahban2012unified}. Popular approaches include soft-thresholding methods such as $\ell_1$ (Lasso) penalties \citep{tibshirani1996regression}, as well as non-convex penalties such as SCAD \citep{fan2001variable} and MCP \citep{zhang2010nearly}, and hard thresholding methods that enforce explicit $\ell_0$ constraints. While $\ell_1$ and $\ell_2$ regularization preserve convexity of the loss function, non-convex approaches such as hard thresholding or SCAD/MCP introduce additional analytical challenges and make achieving optimal regret more difficult. For existing work on smooth and strongly convex losses with soft regularization, we refer to \citep{buhlmann2011statistics, hastie2015statistical, negahban2012unified, bickel2009simultaneous}. On the other hand, the choice of loss function plays a critical role in online learning. Despite their computational and theoretical convenience, smooth convex losses such as the squared error are known to be sensitive to heavy-tailed noise, heteroscedasticity, and distributional shifts. These limitations motivate the use of more robust alternatives, among which quantile regression provides a natural and flexible framework for modeling heterogeneous noise and asymmetric distributions \citep{koenker1978regression, koenker2005quantile, koenker2001quantile}.
	
	These considerations motivate the study of high-dimensional quantile regression with explicit sparsity control in an online setting. In this regime, the loss function is given by the quantile (check) loss
	\begin{equation}
		\rho_\tau(u) = u\bigl(\tau - \mathbf{1}\{u < 0\}\bigr),
	\end{equation}
	which provides a robust alternative to the squared loss and allows for modeling heterogeneous and asymmetric noise.  A natural way to incorporate hard sparsity into online optimization is to combine stochastic gradient descent (SGD) with hard thresholding, leading to the classical Iterative Hard Thresholding (IHT) update,
	\begin{equation}
		\beta_{t+1} = H_s\!\left(\beta_t - \eta_t g_t(\beta_t)\right),
	\end{equation}
	where $\eta_t$ is the step size, $H_s(\cdot)$ retains the largest $s$ coordinates in magnitude, and $g_t(\beta) = -X_t\bigl(\tau - \mathbf{1}\{Y_t \le X_t^\top \beta\}\bigr)$ is the subgradient.
	
	However, naively applying hard thresholding after every stochastic update can introduce important difficulties. Since the projection retains only the currently largest coordinates, a true variable with a small current magnitude may be removed before it has accumulated enough signal to enter the active set. This makes support discovery especially fragile when the step size is decaying or when the signal is initially weak. A common remedy is to apply thresholding only periodically, as in \citep{langford2009sparse}, so that coordinates can evolve for several gradient steps before the next sparse projection. Periodic thresholding can mitigate the one-step entry barrier, but it introduces a new scheduling problem: the interval between projections must be long enough to allow informative coordinates to emerge, yet short enough to prevent noise-driven off-support coordinates from accumulating without control \citep{agarwal2012fast, jain2014iterative}. Moreover, effective high-dimensional analysis typically relies on keeping the iterates inside a restricted region where sufficient curvature is present \citep{negahban2012unified, agarwal2012fast}. If thresholding is too infrequent, the unprojected updates may leave this region; if it is too frequent, support discovery may be suppressed. These competing requirements highlight the need for an adaptive thresholding mechanism that balances sparsity enforcement with stable optimization dynamics.
	
	\begin{figure*}[htp]
		\centering
		\includegraphics[width=0.98\linewidth]{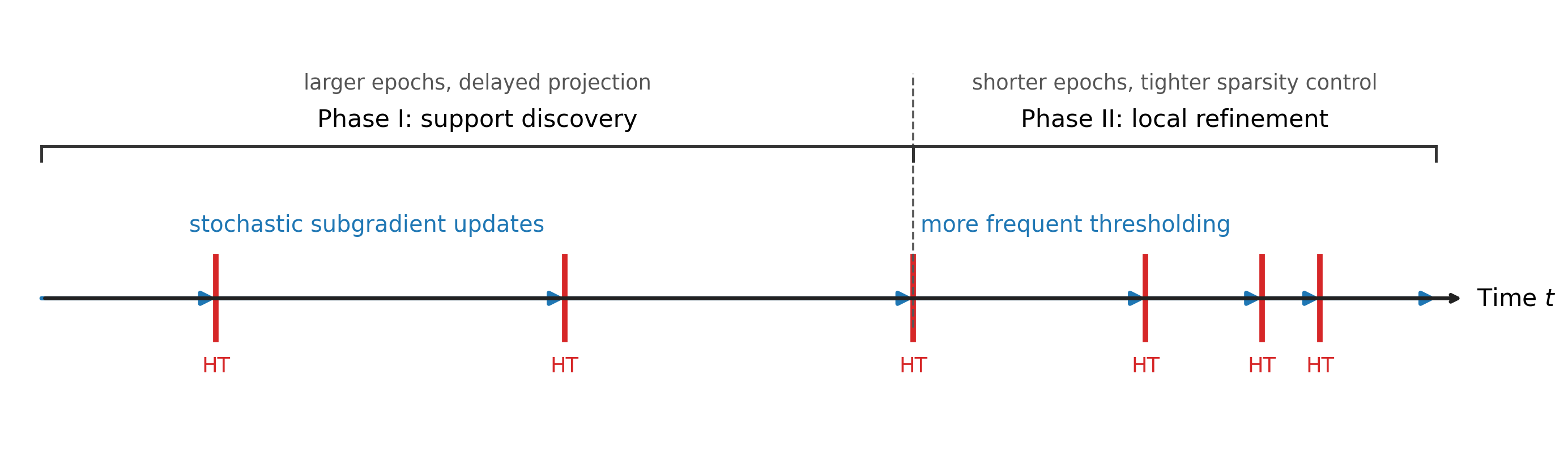}
		\caption{Two-layer structure of AIHT, where hard-thresholding (HT) is applied intermittently with decreasing intervals.}
		\label{fig:ht-schedule}
	\end{figure*}
	
	Based on these considerations, we propose the Adaptive Iterative Hard Thresholding (AIHT) algorithm, which alternates between stochastic subgradient descent updates and intermittent hard thresholding under an adaptive schedule. By monitoring the stability of SGD updates through gradient mappings, we design theoretically grounded step sizes $\eta_t$ and thresholding intervals $k_t$. The algorithm operates in two phases. In Phase I, it employs less frequent thresholding together with relatively larger step sizes, allowing the iterates sufficient time to grow and enabling relevant coordinates to emerge. In Phase II, the algorithm transitions to more frequent thresholding and smaller step sizes, promoting sparsification and refining the estimator; Figure~\ref{fig:ht-schedule} illustrates this schedule. Figure~\ref{fig:comparison} highlights the importance of controlling the thresholding frequency: AIHT converges to a more accurate solution, whereas thresholding at every iteration or at fixed intervals can lead to suboptimal convergence or persistent oscillations. In the first phase, the estimation error decreases rapidly as the algorithm identifies and stabilizes relevant coordinates. After the transition to the refinement phase, AIHT produces more accurate updates and achieves a fast contraction rate of $O((\log t)/t)$.These results yield logarithmic refinement-phase regret for the sliding-window objective; the total regret is \(C_{\mathrm I}(T_0)+C_{\mathrm{burn}}+O(\log T)\), and becomes \(O(\log T)\) when the transition and burn-in costs are bounded independently of \(T\).
	
	\begin{figure}
		\centering
		\includegraphics[width=1\linewidth, height=0.3\linewidth]{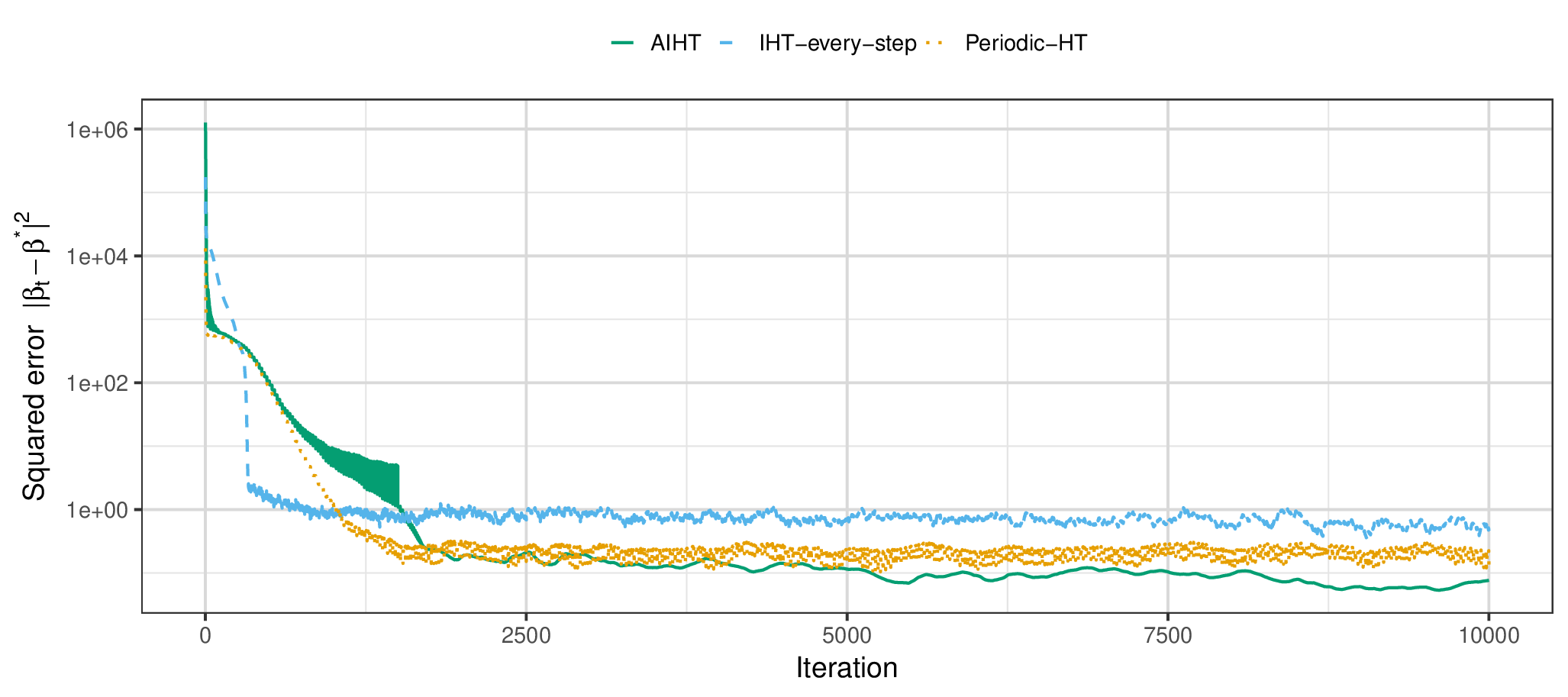}
		\caption{Error comparison among conventional IHT, which performs hard thresholding at every step, periodic IHT, and AIHT.}
		\label{fig:comparison}
	\end{figure}
	Our contributions are threefold. First, we formulate AIHT as an online sparse-regression framework and identify the epoch-mass mechanism that explains why adaptive threshold timing is necessary. Second, for high-dimensional online quantile regression, we establish cone stability, two-phase convergence, and logarithmic regret guarantees. Third, we extend the quantile-regression analysis to settings with distributional shifts and obtain analogous segmentwise guarantees. The quantile case is deliberately demanding because the check loss is nonsmooth and robust to heavy-tailed noise. In addition, our analysis provides a rigorous characterization of a two-phase convergence phenomenon for AIHT.

	\section{Methodology}
\label{sec:methodology}

In this section, we introduce the AIHT update and the notation used throughout the paper. The main algorithmic idea is to separate the stochastic-gradient evolution from the sparse projection: the iterates are allowed to evolve without thresholding within an epoch, and hard thresholding is applied only at selected epoch endpoints. This separation makes the cumulative step-size mass within an epoch, rather than only the number of iterations between projections, a central scheduling quantity.

\subsection{Notation and preliminaries}

Let \(\beta^\ast\in\mathbb{R}^p\) denote the unknown sparse target parameter, and let $S=\operatorname{supp}(\beta^\ast),  |S|=s_0,$ where \(s_0\) is the true sparsity level. The algorithm uses a working sparsity level \(s\ge s_0\). For any vector \(v\in\mathbb{R}^p\) and index set \(A\subseteq\{1,\dots,p\}\), we write \(v_A\) for the restriction of \(v\) to the coordinates in \(A\), and \(A^c\) for the complement of \(A\). Throughout, the estimation error is denoted by $\Delta_t=\beta_t-\beta^\ast.$ The hard-thresholding operator is denoted by \(H_s(\cdot)\). Given \(v\in\mathbb{R}^p\), \(H_s(v)\) retains the \(s\) largest coordinates of \(v\) in magnitude and sets all remaining coordinates to zero, with ties broken arbitrarily. Thus \(H_s(v)\) is an Euclidean projection of \(v\) onto the set of \(s\)-sparse vectors.

The AIHT template can be used with any online regression loss for which a stochastic subgradient is available and the local restricted geometry required by the analysis can be verified. The formal theory in this paper focuses on online quantile regression, but the algorithmic structure is not specific to the check loss.

To stabilize the online updates, AIHT uses a sliding window. Let \(W=W(p)\) be the prescribed buffer length. For \(t\ge 1\), define
\[
\mathcal I_t=\{\max(1,t-W+1),\ldots,t\},
\qquad
W_t=|\mathcal I_t|.
\]
For a generic loss \(\ell_i(\beta)=\ell(Y_i,X_i;\beta)\), the sliding-window empirical objective is 
\[
Q_t(\beta)
=
\frac{1}{W_t}\sum_{i\in\mathcal I_t}\ell_i(\beta).
\]
The algorithm uses a subgradient \(\bar g_t(\beta_t)\in\partial Q_t(\beta_t)\). In the theoretical analysis, we focus on the post-burn-in regime \(t\ge W\), where \(W_t=W\) and
\[
Q_t(\beta)
=
\frac{1}{W}\sum_{i=t-W+1}^{t}\ell_i(\beta).
\]

For the quantile-regression case study, we take \(\ell_i(\beta)=\rho_\tau(Y_i-X_i^\top\beta)\), where \(\rho_\tau(u)=u\{\tau-\mathbf 1(u<0)\}\). Using the standard subgradient convention for the check loss, the sliding-window subgradient is
\begin{equation}
    \label{move_grad}
    \bar g_t(\beta_t)
    =
    -\,\frac{1}{W_t}\sum_{i\in\mathcal I_t}
    X_i\bigl(\tau-\mathbf 1\{Y_i\le X_i^\top\beta_t\}\bigr).
\end{equation}
For \(t\ge W\), this reduces to
\[
\bar g_t(\beta_t)
=
-\,\frac{1}{W}\sum_{i=t-W+1}^{t}
X_i\bigl(\tau-\mathbf 1\{Y_i\le X_i^\top\beta_t\}\bigr),
\]
which is the form used in the main theoretical results.

The online trajectory is partitioned into epochs. Let \(\tau_j\) denote the starting time of epoch \(j\), let \(k_j\) denote its length, and define
\[
\tau_{j+1}=\tau_j+k_j,
\qquad
\mathcal T_j=\{\tau_j,\tau_j+1,\dots,\tau_{j+1}-1\}.
\]
For an index set \(A\subseteq\{1,\ldots,p\}\), let \(P_Av\) denote the vector that agrees with \(v\) on \(A\) and is zero on \(A^c\). In AIHT, each epoch \(j\) is associated with a candidate set \(A_j\), defined below in Section~\ref{subsec:aiht}. Once \(A_j\) is fixed, the update direction within epoch \(j\) is
\[
d_t=P_{A_j}\bar g_t(\beta_t),
\qquad t\in\mathcal T_j.
\] Within epoch \(j\), AIHT first forms the candidate-restricted direction \(d_t=P_{A_j}\bar g_t(\beta_t)\) and then computes
\[
\widetilde\beta_{t+1}=\beta_t-\eta_t d_t.
\]
If \(t<\tau_{j+1}-1\), then \(\beta_{t+1}=\widetilde\beta_{t+1}\). At the final step of the epoch, hard thresholding is applied:
\[
\beta_{\tau_{j+1}}=H_s(\widetilde\beta_{\tau_{j+1}}).
\]
Thus sparse projection is applied only at epoch endpoints. The cumulative step-size mass of epoch \(j\) is \(B_j=\sum_{t\in\mathcal T_j}\eta_t\), which will be used below to explain why adaptive threshold timing is more informative than a fixed number of iterations between projections. 
	
	\subsection{Support-entry failure of naive thresholding}
\label{subsec:support-entry-failure}

We first explain why the timing of hard thresholding matters. The basic obstruction is algorithmic: hard thresholding compares coordinates only at the moment of projection. If a true coordinate is currently absent from the active set, then it must become large enough before the next projection; otherwise it is immediately reset to zero. This can make support discovery difficult when thresholding is applied after every stochastic update.

For comparison, consider the every-step IHT update
\begin{equation}
    \label{eq:every-step-iht-method}
    u_{t+1}=\beta_t-\eta_t g_t(\beta_t),
    \qquad
    \beta_{t+1}=H_s(u_{t+1}),
\end{equation}
where \(g_t\) denotes the gradient or subgradient used at time \(t\). Let \(\widehat S_t=\operatorname{supp}(\beta_t)\). For a vector \(v\in\mathbb R^p\) and coordinate \(m\), define
\[
\lambda_s(v;m)
:=
\text{the \(s\)th largest value among }
\{|v_\ell|:\ell\ne m\}.
\]
Ignoring ties, coordinate \(m\) is retained by \(H_s(v)\) if and only if \(|v_m|>\lambda_s(v;m)\). Thus \(\lambda_s(v;m)\) is the instantaneous entry barrier faced by coordinate \(m\).

\begin{proposition}
    \label{prop:one-step-barrier}
    Consider every-step IHT \eqref{eq:every-step-iht-method}. Suppose \(m\in S\) is missing at time \(t\), so that \(m\notin\widehat S_t\) and \(\beta_{t,m}=0\). Let \(\lambda_t^{(-m)}=\lambda_s(u_{t+1};m)\). Then, ignoring ties,
    \begin{equation}
        \label{eq:one-step-entry}
        m\in\widehat S_{t+1}
        \quad\Longleftrightarrow\quad
        \eta_t |g_{t,m}(\beta_t)|>\lambda_t^{(-m)}.
    \end{equation}
    If, while \(m\notin\widehat S_t\), \(|g_{t,m}(\beta_t)|\le G_m\) and \(\lambda_t^{(-m)}\ge\lambda_0t^{-r}\) for some \(G_m,\lambda_0>0\) and \(r\ge0\), then for \(\eta_t=\alpha t^{-q}\) with \(q>r\), coordinate \(m\) cannot enter at any time
    \begin{equation}
        \label{eq:one-step-lock-time}
        t\ge
        T_{\mathrm{lock}}
        :=
        \left(\frac{\alpha G_m}{\lambda_0}\right)^{1/(q-r)} .
    \end{equation}
    In particular, if the competing active coordinates create a nonvanishing entry barrier, corresponding to \(r=0\), then every decaying step size \(\eta_t\asymp t^{-q}\) with \(q>0\) creates a finite entry deadline.
\end{proposition}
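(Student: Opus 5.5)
The plan is to derive the equivalence \eqref{eq:one-step-entry} directly from the coordinatewise form of the update, and then deduce the lock time from a monotonicity comparison of two power laws.

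\textbf{Step 1: the equivalence.} Since \(\beta_{t,m}=0\), the \(m\)th coordinate of the pre-projection vector is
\[
u_{t+1,m}=\beta_{t,m}-\eta_t g_{t,m}(\beta_t)=-\eta_t g_{t,m}(\beta_t),
\qquad\text{so}\qquad |u_{t+1,m}|=\eta_t|g_{t,m}(\beta_t)|.
\]
I will then invoke the retention criterion stated just before the proposition. Ignoring ties, \(H_s(v)\) keeps coordinate \(m\) exactly when \(|v_m|\) exceeds the \(s\)th largest magnitude among the other coordinates. The reason is that \(m\) belongs to the top \(s\) if and only if fewer than \(s\) other coordinates are strictly larger than it. Applying this with \(v=u_{t+1}\) and \(\lambda_t^{(-m)}=\lambda_s(u_{t+1};m)\) gives \(m\in\widehat S_{t+1}\) if and only if \(\eta_t|g_{t,m}(\beta_t)|>\lambda_t^{(-m)}\), which is \eqref{eq:one-step-entry}.

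\textbf{Step 2: the lock time.} I will argue the contrapositive. Suppose \(m\) is missing at time \(t\) and \(t\ge T_{\mathrm{lock}}\). Combining the assumed bounds gives
\[
\eta_t|g_{t,m}(\beta_t)|\le \alpha G_m t^{-q}.
\]
Next, rewrite the inequality \(\alpha G_m t^{-q}\le \lambda_0 t^{-r}\) as \(t^{q-r}\ge \alpha G_m/\lambda_0\). Because \(q>r\), the map \(t\mapsto t^{q-r}\) is increasing, so this holds exactly when \(t\ge T_{\mathrm{lock}}\). Therefore
\[
\eta_t|g_{t,m}(\beta_t)|\le \lambda_0t^{-r}\le\lambda_t^{(-m)},
\]
and the strict inequality in \eqref{eq:one-step-entry} fails, so \(m\notin\widehat S_{t+1}\).

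The hypotheses are assumed to hold at every time at which \(m\) is missing. An induction on \(t\ge T_{\mathrm{lock}}\) then shows that \(m\) stays missing forever once it is missing at some \(t\ge T_{\mathrm{lock}}\), so it cannot enter at any such time.

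\textbf{Step 3: the case \(r=0\).} Setting \(r=0\) gives \(T_{\mathrm{lock}}=(\alpha G_m/\lambda_0)^{1/q}<\infty\) for every \(q>0\). This is the finite entry deadline. If one only has \(\eta_t\asymp t^{-q}\), I will replace \(\alpha\) by the upper constant in \(\eta_t\le \alpha t^{-q}\); the same argument then goes through.

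\textbf{Main obstacle.} The only delicate points are the tie convention and the boundary case \(t=T_{\mathrm{lock}}\).
\begin{itemize}
\item At the boundary the bound is \(\le\) rather than \(<\). This still blocks entry, because entry requires the strict inequality in \eqref{eq:one-step-entry}.
\item Ties are excluded by hypothesis.
\item I must state clearly that "cannot enter at any time \(t\ge T_{\mathrm{lock}}\)" is meant for every such \(t\) at which \(m\) is currently missing. This is exactly what the per-step argument delivers.
\end{itemize}
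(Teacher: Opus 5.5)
Your proposal is correct and follows the paper's proof essentially line by line. You compute \(|u_{t+1,m}|=\eta_t|g_{t,m}(\beta_t)|\) from \(\beta_{t,m}=0\), read off the entry criterion from the definition of \(\lambda_s(u_{t+1};m)\), and then compare \(\alpha G_m t^{-q}\) with \(\lambda_0 t^{-r}\) using the monotonicity of \(t\mapsto t^{q-r}\); your extra remarks are valid refinements the paper leaves implicit (the strict inequality at \(t=T_{\mathrm{lock}}\), the induction showing the coordinate stays locked out, and using only the upper constant when \(\eta_t\asymp t^{-q}\)).
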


The point is that the condition \(\sum_t\eta_t=\infty\) does not by itself prevent this failure. Under every-step hard thresholding, a missing coordinate cannot retain the update mass from previous failed attempts, because it is set back to zero after each projection. Support entry is therefore reduced to a one-step comparison.

The same obstruction appears even in a noiseless two-dimensional quadratic problem. Consider
\begin{equation}
    \label{eq:toy-quadratic}
    L(\beta)
    =
    \frac12(\beta-\beta^\ast)^\top
    \Sigma(\beta-\beta^\ast),
    \qquad
    \beta^\ast=(\theta,0)^\top,
    \qquad
    \Sigma=
    \begin{pmatrix}
        1 & \rho\\
        \rho & 1
    \end{pmatrix},
\end{equation}
where \(\theta>0\) and \(0<\rho<1\). Let \(s=1\). The true support is \(\{1\}\). If the iterate is restricted to the wrong support \(\{2\}\), the best one-sparse point on that support is \(\bar\beta=(0,\rho\theta)^\top\).

\begin{proposition}
    \label{prop:wrong-fixed-point}
    For the quadratic loss \eqref{eq:toy-quadratic},
    \[
    \nabla L(\bar\beta)
    =
    (-(1-\rho^2)\theta,0)^\top,
    \]
    and hence
    \[
    \bar\beta-\eta\nabla L(\bar\beta)
    =
    \{\eta(1-\rho^2)\theta,\rho\theta\}^\top .
    \]
    If \(\eta<\rho/(1-\rho^2)\), then
    \[
    H_1\{\bar\beta-\eta\nabla L(\bar\beta)\}=\bar\beta.
    \]
    Consequently, for \(\eta_t=\alpha t^{-q}\) with \(q>0\), \(\bar\beta\) becomes an absorbing wrong-support point for every-step IHT once
    \begin{equation}
        \label{eq:wrong-fixed-time}
        t>
        \left\{
        \frac{\alpha(1-\rho^2)}{\rho}
        \right\}^{1/q}.
    \end{equation}
\end{proposition}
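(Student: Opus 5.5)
The argument is a direct computation followed by a monotonicity observation about the step size. First compute the gradient: \(\nabla L(\beta)=\Sigma(\beta-\beta^\ast)\), and at \(\bar\beta=(0,\rho\theta)^\top\) we have \(\bar\beta-\beta^\ast=(-\theta,\rho\theta)^\top\). Hence
\[
\Sigma(\bar\beta-\beta^\ast)=\bigl(-\theta+\rho^2\theta,\;-\rho\theta+\rho\theta\bigr)^\top=\bigl(-(1-\rho^2)\theta,\,0\bigr)^\top .
\]
The vanishing second coordinate just says that \(\bar\beta\) is the restricted minimizer on support \(\{2\}\). Subtracting \(\eta\) times this vector from \(\bar\beta\) gives the displayed pre-projection vector \(\{\eta(1-\rho^2)\theta,\rho\theta\}^\top\).

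Next apply \(H_1\). Both entries are nonnegative, since \(\theta>0\) and \(0<\rho<1\). So \(H_1\) keeps the second coordinate exactly when \(\eta(1-\rho^2)\theta<\rho\theta\), that is, \(\eta<\rho/(1-\rho^2)\). In that case the output is \((0,\rho\theta)^\top=\bar\beta\), and the first coordinate's update mass is discarded. This is the two-dimensional instance of Proposition~\ref{prop:one-step-barrier}: the entry barrier is \(\lambda=\rho\theta\), so \(r=0\), and the missing coordinate's gradient is \(G_m=(1-\rho^2)\theta\).

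For the absorbing claim, take \(\eta_t=\alpha t^{-q}\). The condition \(\eta_t<\rho/(1-\rho^2)\) is equivalent to \(t^{q}>\alpha(1-\rho^2)/\rho\), which is exactly \eqref{eq:wrong-fixed-time}. Let \(t_0\) exceed this threshold with \(\beta_{t_0}=\bar\beta\). Because \(\eta_t\) is decreasing in \(t\), the inequality persists for every \(t\ge t_0\). Induction then gives \(\beta_{t+1}=H_1(\beta_t-\eta_t\nabla L(\beta_t))=\bar\beta\) for all \(t\ge t_0\), since the same computation applies at every step with \(\eta\) replaced by \(\eta_t\).

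There is no real obstacle. The only points needing care are the tie case \(\eta(1-\rho^2)=\rho\), which is excluded by the strict inequality, and noting that "absorbing" means that once the iterate sits at \(\bar\beta\) after the threshold time, every-step IHT never leaves it. It does not mean that every trajectory reaches \(\bar\beta\).
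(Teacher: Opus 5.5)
Your proposal is correct and follows the paper's proof essentially step for step. You compute $\nabla L(\bar\beta)=\Sigma(\bar\beta-\beta^\ast)$, compare the two nonnegative coordinates under $H_1$ to get $\eta<\rho/(1-\rho^2)$, and convert this into the time threshold for $\eta_t=\alpha t^{-q}$; your explicit induction using the monotonicity of $\eta_t$ to justify ``absorbing'' is a small, useful addition that the paper leaves implicit.
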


Propositions~\ref{prop:one-step-barrier} and \ref{prop:wrong-fixed-point} motivate delaying hard thresholding. If no projection is applied for several updates, then a missing coordinate is not judged by the single-step quantity \(\eta_t|g_{t,m}|\). Instead, its possible movement over an epoch is governed by the accumulated update mass during that epoch. Recalling the notation from above, this mass is
\[
B_j=\sum_{t\in\mathcal T_j}\eta_t.
\]
Thus the relevant scheduling variable is not only the number of iterations between projections, but the cumulative step-size mass available before the next projection.

A fixed thresholding period is a natural first attempt, but it is not scale-adaptive when the step size decays. If \(\mathcal T_j=\{\tau_j,\ldots,\tau_j+K-1\}\), then
\[
B_j
=
\sum_{t=\tau_j}^{\tau_j+K-1}\eta_t .
\]
The same value of \(K\) can correspond to very different amounts of optimization movement at different stages of the run.

\begin{proposition}
    \label{prop:fixed-period-vanishing-mass}
    Let \(\eta_t=\alpha t^{-q}\) with \(q\in(0,1]\). If a periodic hard-thresholding method uses a fixed period \(k_j\equiv K\), then
    \[
    B_j
    =
    \sum_{t=\tau_j}^{\tau_j+K-1}\eta_t
    \to 0
    \qquad
    \text{as } \tau_j\to\infty.
    \]
    If \(0<q<1\), then
    \begin{equation}
        \label{eq:fixedK-mass-asymp}
        B_j
        =
        \alpha K\tau_j^{-q}\{1+o(1)\}.
    \end{equation}
    If \(q=1\), then
    \begin{equation}
        \label{eq:fixedK-mass-q1}
        B_j
        =
        \alpha\log\left(1+\frac{K}{\tau_j}\right)+o(\tau_j^{-1})
        =
        \alpha K\tau_j^{-1}\{1+o(1)\}.
    \end{equation}
    Consequently, for any fixed positive support-entry threshold \(B_{\mathrm{entry}}>0\), fixed-period thresholding eventually satisfies \(B_j<B_{\mathrm{entry}}\).
\end{proposition}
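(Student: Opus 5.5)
The argument is elementary: it combines monotone sandwich bounds for the finite sum $B_j=\sum_{t=\tau_j}^{\tau_j+K-1}\alpha t^{-q}$ with a Taylor expansion. Since $t\mapsto t^{-q}$ is decreasing and there are exactly $K$ terms, I will first record
\[
\alpha K(\tau_j+K-1)^{-q}\;\le\; B_j\;\le\;\alpha K\tau_j^{-q}.
\]
The upper bound already gives $B_j\to0$ as $\tau_j\to\infty$ for any $q>0$, with $K$ fixed. For the lower bound, $K$ is fixed, so $(\tau_j+K-1)^{-q}/\tau_j^{-q}=(1+(K-1)/\tau_j)^{-q}\to1$. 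The two bounds together give $B_j=\alpha K\tau_j^{-q}\{1+o(1)\}$. This squeeze handles \eqref{eq:fixedK-mass-asymp} for $0<q<1$ and also the second equality in \eqref{eq:fixedK-mass-q1}.

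For $q=1$ I will compare the sum with an integral. Because $1/t$ is decreasing,
\[
\int_{\tau_j}^{\tau_j+K}\frac{dx}{x}\;\le\;\sum_{t=\tau_j}^{\tau_j+K-1}\frac1t\;\le\;\int_{\tau_j-1}^{\tau_j+K-1}\frac{dx}{x}.
\]
Both sides equal $\log(1+K/\tau_j)+O(\tau_j^{-2})$. To see this, expand $\log(1+K/(\tau_j-1))-\log(1+K/\tau_j)$, which is $O(K/\tau_j^2)$. A more direct route is to write each term $1/t$ with $t=\tau_j+i$ as $\tau_j^{-1}-i\tau_j^{-2}+O(\tau_j^{-3})$ and sum the $K$ terms. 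Either way, the error is $O(\tau_j^{-2})=o(\tau_j^{-1})$, which gives the first equality in \eqref{eq:fixedK-mass-q1}. Expanding $\log(1+K/\tau_j)=K/\tau_j+O(\tau_j^{-2})$ then gives the second equality.

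For the final consequence, epochs have positive length, so $\tau_{j+1}=\tau_j+K$ and hence $\tau_j\to\infty$ as $j\to\infty$. Then $B_j\le\alpha K\tau_j^{-q}$ falls below any fixed $B_{\mathrm{entry}}>0$ once $\tau_j>(\alpha K/B_{\mathrm{entry}})^{1/q}$, and this holds for every later epoch as well.

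I expect the only step needing care to be the $o(\tau_j^{-1})$ error term for $q=1$. The point there is to keep the error second order, which is why I compare the sum directly to $\log(1+K/\tau_j)$ rather than to the cruder bound $K/\tau_j$. The rest is monotonicity.
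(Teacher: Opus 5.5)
Your proof is correct and takes essentially the same route as the paper. The paper uses the uniform expansion $(\tau_j+\ell)^{-q}=\tau_j^{-q}\{1+O(K/\tau_j)\}$, of which your monotone sandwich is the inequality form, and it handles $q=1$ by the same integral comparison with $1/x$; your version adds the explicit bound $B_j\le\alpha K\tau_j^{-q}$, an $O(\tau_j^{-2})$ error for $q=1$, and the explicit crossing time $\tau_j>(\alpha K/B_{\mathrm{entry}})^{1/q}$.
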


Proposition~\ref{prop:fixed-period-vanishing-mass} says that a period that is adequate early in the run may become too short later, because the cumulative movement before projection vanishes. However, simply making epochs longer is not safe either. If too much update mass accumulates between projections, then off-support components may grow enough to leave the restricted region where high-dimensional curvature is available.

To formalize this upper constraint, define the inflated cone ratio
\[
r_t
=
\frac{\|\Delta_{t,S^c}\|_1}
{\|\Delta_{t,S}\|_1+\zeta},
\qquad \zeta\ge0.
\]
The small constant \(\zeta\) prevents degeneracy when the on-support error is close to zero. The condition \(r_t\le c\) is equivalent to \(\|\Delta_{t,S^c}\|_1\le c\{\|\Delta_{t,S}\|_1+\zeta\}\).

\begin{proposition}
    \label{prop:epoch-mass-cone-barrier}
    Consider one unprojected epoch \(\mathcal T_j=\{\tau_j,\ldots,\tau_{j+1}-1\}\), with update
    \[
    \Delta_{t+1}
    =
    \Delta_t-\eta_t g_t(\beta_t).
    \]
    Suppose that throughout the epoch the gradient satisfies
    \begin{equation}
        \label{eq:method-gradient-guard}
        \|g_{t,S^c}(\beta_t)\|_1
        \le
        L_{\mathrm{off}}\{\|\Delta_{t,S}\|_1+\zeta\},
        \qquad
        \|g_{t,S}(\beta_t)\|_1
        \le
        L_{\mathrm{on}}\{\|\Delta_{t,S}\|_1+\zeta\},
    \end{equation}
    and \(\eta_tL_{\mathrm{on}}\le1/2\). If \(r_{\tau_j}\le c_{\mathrm{small}}\), then, as long as \(r_t\le c_{\mathrm{large}}\),
    \begin{equation}
        \label{eq:ratio-recursion-method}
        r_{t+1}
        \le
        r_t
        +
        2\eta_t\{L_{\mathrm{off}}+c_{\mathrm{large}}L_{\mathrm{on}}\}.
    \end{equation}
    Consequently, a sufficient cone-stability condition is
    \begin{equation}
        \label{eq:Bcone-method}
        B_j
        :=
        \sum_{t\in\mathcal T_j}\eta_t
        \le
        B_{\mathrm{cone}}
        :=
        \frac{c_{\mathrm{large}}-c_{\mathrm{small}}}
        {2\{L_{\mathrm{off}}+c_{\mathrm{large}}L_{\mathrm{on}}\}} .
    \end{equation}

    Conversely, this type of upper bound cannot be removed in general. Even when \(L_{\mathrm{on}}=0\), if
    \[
    B_j>\frac{c_{\mathrm{large}}-c_{\mathrm{small}}}{L_{\mathrm{off}}},
    \]
    there exists a sequence of gradients satisfying \eqref{eq:method-gradient-guard} for which an iterate starting in the small cone exits the large cone during the epoch.
\end{proposition}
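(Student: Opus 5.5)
The plan is to prove the one-step recursion \eqref{eq:ratio-recursion-method} directly from the update, and then sum it over the epoch. Write \(a_t=\|\Delta_{t,S}\|_1+\zeta\) and \(b_t=\|\Delta_{t,S^c}\|_1\), so that \(r_t=b_t/a_t\).

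\textbf{One-step bounds.} The triangle inequality applied to \(\Delta_{t+1}=\Delta_t-\eta_t g_t(\beta_t)\), together with the guard \eqref{eq:method-gradient-guard}, gives
\[
b_{t+1}\le b_t+\eta_tL_{\mathrm{off}}a_t,
\qquad
a_{t+1}\ge a_t-\eta_t\|g_{t,S}\|_1\ge(1-\eta_tL_{\mathrm{on}})a_t\ge a_t/2,
\]
where the last inequality uses \(\eta_tL_{\mathrm{on}}\le 1/2\). Dividing the two bounds,
\[
r_{t+1}\le\frac{r_t+\eta_tL_{\mathrm{off}}}{1-\eta_tL_{\mathrm{on}}}.
\]
Set \(x=\eta_tL_{\mathrm{on}}\in[0,1/2]\). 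Then \(1/(1-x)\le 1+2x\), so
\[
r_{t+1}\le (r_t+\eta_tL_{\mathrm{off}})(1+2\eta_tL_{\mathrm{on}})
= r_t+\eta_tL_{\mathrm{off}}+2\eta_tL_{\mathrm{on}}r_t+2\eta_t^2L_{\mathrm{off}}L_{\mathrm{on}}.
\]
Using \(r_t\le c_{\mathrm{large}}\) and \(\eta_tL_{\mathrm{on}}\le1/2\), the last term is at most \(\eta_tL_{\mathrm{off}}\). Hence
\[
r_{t+1}\le r_t+2\eta_tL_{\mathrm{off}}+2\eta_tc_{\mathrm{large}}L_{\mathrm{on}},
\]
which is \eqref{eq:ratio-recursion-method}. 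This bookkeeping of constants is the only delicate point, and the factor 2 absorbs it. The degenerate case \(a_t=0\), which can only occur when \(\zeta=0\), I would either exclude or treat by continuity.

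\textbf{Cone stability by induction.} Next I would argue by induction over \(t\in\mathcal T_j\) that
\[
r_t\le c_{\mathrm{small}}+2\{L_{\mathrm{off}}+c_{\mathrm{large}}L_{\mathrm{on}}\}\sum_{\tau_j\le u<t}\eta_u.
\]
Under the mass condition \eqref{eq:Bcone-method}, the right-hand side is at most \(c_{\mathrm{large}}\) for every \(t\) up to \(\tau_{j+1}\). So the hypothesis \(r_t\le c_{\mathrm{large}}\) needed at each step is self-sustaining, and the iterate never leaves the large cone during the epoch.

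\textbf{Converse by construction.} Take \(p=2\), \(S=\{1\}\) and \(L_{\mathrm{on}}=0\), so that \(g_{t,S}=0\) and \(a_t\) stays constant. Start with \(r_{\tau_j}=c_{\mathrm{small}}\) and \(\Delta_{\tau_j,2}\ge 0\). Choose
\[
g_{t,2}=-L_{\mathrm{off}}\,a_{\tau_j}.
\]
This satisfies the guard with equality, and each step increases \(b_t\) by exactly \(\eta_tL_{\mathrm{off}}a_{\tau_j}\). Therefore
\[
r_{\tau_{j+1}}=c_{\mathrm{small}}+L_{\mathrm{off}}B_j,
\]
which exceeds \(c_{\mathrm{large}}\) whenever \(B_j>(c_{\mathrm{large}}-c_{\mathrm{small}})/L_{\mathrm{off}}\). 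This shows the upper bound on \(B_j\) cannot be removed in general.
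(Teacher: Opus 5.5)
Your proposal is correct and follows the paper's proof essentially step for step. You use the same one-step bounds on the off- and on-support $\ell_1$ masses and the inequality $1/(1-x)\le 1+2x$, absorb the cross term $2\eta_t^2L_{\mathrm{off}}L_{\mathrm{on}}$ via $\eta_tL_{\mathrm{on}}\le 1/2$, and sum over the epoch under the mass budget. Your converse is the paper's construction specialized to $p=2$: $g_{t,S}=0$, together with a constant off-support gradient of $\ell_1$ size $L_{\mathrm{off}}(\|\Delta_{\tau_j,S}\|_1+\zeta)$ aligned with the off-support error, which gives $r_t=c_{\mathrm{small}}+L_{\mathrm{off}}\sum_{u<t}\eta_u$.
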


Taken together, the preceding propositions show that threshold timing is governed by two competing requirements. The epoch mass must be large enough to allow missing coordinates to accumulate signal before the next projection, but small enough to keep off-support leakage under control:
\[
B_{\mathrm{entry}}
\ \lesssim\
B_j
\ \lesssim\
B_{\mathrm{cone}}.
\]
This is the scheduling principle behind AIHT. During support discovery, thresholding should not be so frequent that it repeatedly erases weak true coordinates. During local refinement, thresholding should become more frequent so that the iterates remain close to the sparse region where restricted curvature can be exploited.
	
\subsection{Sparse cone and restricted strong convexity}
\label{subsec:cone-rsc}

The previous subsection shows that the timing of hard thresholding should be governed by the cumulative step-size mass within an epoch. We now explain why the iterates must also remain in a restricted sparse region. In high-dimensional regression, the sliding-window loss is not globally strongly convex when the ambient dimension \(p\) is larger than the effective sample size \(W\). Curvature can only be expected along directions compatible with the sparse structure of the target parameter.

Let \(\Delta=\beta-\beta^\ast\). For a constant \(c>0\), define the sparse cone
\begin{equation}
\label{eq:sparse-cone}
\Gamma_H(c,S)
=
\Bigl\{\Delta\in\mathbb R^p:
\|\Delta_{S^c}\|_1\le c\|\Delta_S\|_1
\Bigr\}.
\end{equation}
This cone contains error vectors whose off-support mass is controlled by their on-support mass. Restricted strong convexity over such a cone is the high-dimensional substitute for global strong convexity.

For online quantile regression, local curvature comes from the conditional density of the noise near the target quantile and from the restricted eigenvalue behavior of the design. Under the assumptions stated in Section~\ref{sec:main-theory}, the sliding-window quantile loss satisfies a restricted curvature inequality along local sparse directions. In particular, for \(\bar s\)-sparse directions \(\Delta\) with \(\|\Delta\|_2\) bounded, the appendix states a high-probability bound of the form
\[
\bigl\langle
\bar g_t(\beta^\ast+\Delta)-\bar g_t(\beta^\ast),\Delta
\bigr\rangle
\ge
a^\ast\|\Delta\|_2^2
-
c^\ast\sqrt{\frac{\bar s\log(pT/\delta)}{W}}\|\Delta\|_2.
\]
Equivalently, after absorbing the linear tolerance,
\[
\bigl\langle
\bar g_t(\beta^\ast+\Delta)-\bar g_t(\beta^\ast),\Delta
\bigr\rangle
\ge
\mu\|\Delta\|_2^2
-
C_{\mathrm{rsc}}\frac{\bar s\log(pT/\delta)}{W}.
\]
A formal statement is given in Appendix Lemma~\ref{app:lem:rsc}.

On the cone \(\Gamma_H(c,S)\), we have \(\|\Delta\|_1\le (1+c)\sqrt{s_0}\|\Delta\|_2\). Thus, when the sliding window is sufficiently informative, the quadratic term dominates the stochastic tolerance and the loss behaves locally like a strongly convex objective along sparse directions. This local curvature is the source of the fast refinement rate and the logarithmic sliding-window regret established later.

The remaining difficulty is algorithmic. AIHT performs several unprojected updates between two hard-thresholding times. If these updates are allowed to move all \(p\) coordinates, then coordinatewise stochastic noise can accumulate in \(\ell_1\) over many irrelevant coordinates. To avoid this dimension-dependent leakage, AIHT uses a candidate-restricted update direction. At each epoch, the algorithm updates only a controlled working set consisting of the current active coordinates and a small number of candidate coordinates selected from the largest subgradient components. This screening step is not a sparse projection; it only restricts the direction of the unprojected updates. The hard-thresholding projection is still applied only at epoch endpoints.
    
\subsection{Gradient leakage geometry}
\label{subsec:gradient-leakage}

Restricted curvature is useful only while the error vector remains inside a sparse cone. A hard-thresholding step promotes sparsity at the end of an epoch, but within an epoch AIHT uses ordinary subgradient movement on a candidate set. These updates may create off-support coordinates, so we need to control how much off-support movement can accumulate before the next projection.

We decompose vectors relative to the true support \(S=\operatorname{supp}(\beta^\ast)\). For any \(v\in\mathbb R^p\), \(v_S\) denotes the coordinates on \(S\), and \(v_{S^c}\) denotes the coordinates outside \(S\). When the current error \(\Delta=\beta-\beta^\ast\) lies in the sparse cone, the off-support part of the quantile subgradient is controlled coordinatewise by two terms: a deterministic leakage term proportional to the on-support error, and a stochastic score fluctuation determined by the sliding-window length.

The following lemma gives the basic coordinatewise leakage bound.

\begin{lemma}[Gradient leakage for quantile loss]
\label{lem:grad-leak-quantile}
Suppose \textnormal{(A1)--(A4)} hold and let \(t\ge W\). Fix any set \(J\subseteq\{1,\ldots,p\}\) such that \(S\subseteq J\) and \(|J|\le \bar s=s+m+s_0\). For any \(\beta\) with \(\Delta:=\beta-\beta^\ast\), \(\operatorname{supp}(\Delta)\subseteq J\), \(\Delta\in\Gamma(c,S)\), and \(\|\Delta\|_2\le R\), there exist constants \(\rho>0\) and \(C_0>0\) such that, for any \(\delta\in(0,1)\) and any \(W\ge C\log(p/\delta)\), with probability at least \(1-\delta\),
\[
\|\bar g_{t,S^c}(\beta)\|_\infty
\le
\rho\|\Delta_S\|_1
+
C_0\sqrt{\frac{\log(p/\delta)}{W}}.
\]
\end{lemma}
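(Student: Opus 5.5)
We decompose the sliding-window subgradient at \(\beta=\beta^\ast+\Delta\) into a score term at the truth and an increment:
\[
\bar g_t(\beta)=\bar g_t(\beta^\ast)+\bigl\{\bar g_t(\beta)-\bar g_t(\beta^\ast)\bigr\}.
\]
We then bound each piece on the coordinates \(S^c\).

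\textbf{The score term.} Each coordinate \(\bar g_{t,k}(\beta^\ast)=-W^{-1}\sum_{i}X_{ik}(\tau-\mathbf 1\{\varepsilon_i\le 0\})\) is an average of \(W\) mean-zero variables, since the \(\tau\)th conditional quantile of the noise is zero under (A1)--(A4). These variables are sub-Gaussian (or bounded) because \(|\tau-\mathbf 1\{\cdot\}|\le1\) and the design is sub-Gaussian. Bernstein's (or Hoeffding's) inequality, combined with a union bound over the \(p\) coordinates, gives
\[
\|\bar g_{t}(\beta^\ast)\|_\infty\le C\sqrt{\log(p/\delta)/W}
\]
with probability at least \(1-\delta/2\). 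The condition \(W\ge C\log(p/\delta)\) keeps the Bernstein bound in its sub-Gaussian regime.

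\textbf{The increment, population part.} We split the increment into its conditional expectation plus a centered fluctuation. For coordinate \(k\in S^c\), the population part is
\[
\mathbb E\bigl[X_{ik}\{F_{\varepsilon|X}(X_i^\top\Delta)-F_{\varepsilon|X}(0)\}\bigr].
\]
Using the bounded conditional density \(f\le \bar f\) from the assumptions, its absolute value is at most \(\bar f\,\mathbb E|X_{ik}X_i^\top\Delta|\). Since the design entries are bounded (or have bounded moments), this is at most \(\bar f\max_{k,l}\mathbb E|X_{ik}X_{il}|\,\|\Delta\|_1\). Cone membership \(\Delta\in\Gamma(c,S)\) gives \(\|\Delta\|_1\le(1+c)\|\Delta_S\|_1\). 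This yields the deterministic leakage term \(\rho\|\Delta_S\|_1\) with \(\rho=\bar f(1+c)\max_{k,l}\mathbb E|X_{ik}X_{il}|\). If the design is instead controlled only by a restricted eigenvalue, we use \(\|\Delta\|_2\le R\) together with the coordinatewise Cauchy--Schwarz bound \(\mathbb E|X_{ik}X_i^\top\Delta|\le \sigma_{\max}\|\Delta\|_2\le\sigma_{\max}\|\Delta\|_1\).

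\textbf{The increment, centered fluctuation (main obstacle).} This is the empirical process
\[
\sup_{\Delta}\Bigl|W^{-1}\sum_i X_{ik}\bigl[\mathbf 1\{\varepsilon_i\le X_i^\top\Delta\}-\mathbf 1\{\varepsilon_i\le0\}-\mathbb E(\cdot)\bigr]\Bigr|,
\]
taken uniformly over admissible \(\Delta\) supported in \(J\). The difficulty is that \(\Delta\) is data-dependent, so a fixed-\(\Delta\) concentration bound does not suffice.

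The class of indicators \(\{\mathbf 1\{\varepsilon\le x^\top\Delta\}:\operatorname{supp}\Delta\subseteq J\}\) consists of half-spaces in \(\bar s\) dimensions, so its VC dimension is \(O(\bar s)\). We plan to apply a symmetrization and VC bound for each fixed \(J\), then a union bound over \(k\in S^c\). This gives a fluctuation of order \(\sqrt{\bar s\log(p/\delta)/W}\), which would spoil the stated rate by a factor of \(\sqrt{\bar s}\).

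To avoid that loss, we use the bracketing/Talagrand bound with the variance of the increment. The variance is at most \(\bar f\,\mathbb E|X_i^\top\Delta|\lesssim\|\Delta\|_2\le R\), and we localize to show the fluctuation is of order \(\sqrt{\bar s\log(p/\delta)/W}\cdot\sqrt{R}\). The factor \(\sqrt{\bar s/W}\) is then absorbed into \(C_0\sqrt{\log(p/\delta)/W}\) whenever \(W\gtrsim \bar s\log(p/\delta)\). If that localization fails, we fall back to stating the result for a fixed \(J\) and fixed \(\Delta\), i.e. for \(\beta\) measurable with respect to data outside the current window. In that case Hoeffding's inequality per coordinate with a union bound over \(p\) gives exactly the \(\sqrt{\log(p/\delta)/W}\) rate.

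Combining the three pieces on an event of probability at least \(1-\delta\), and taking the maximum over \(k\in S^c\), gives the claimed bound.
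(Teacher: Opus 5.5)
Your fallback argument is a complete proof of the lemma as stated, and its skeleton matches the paper's. You bound the score at the truth by sub-Gaussian concentration plus a union bound over $p$ coordinates. You split the increment $\bar g_t(\beta)-\bar g_t(\beta^\ast)$ into a mean and a centered fluctuation. The mean is bounded by the density bound times a design quantity, and the cone turns $\|\Delta\|_1$ into $(1+c)\|\Delta_S\|_1$. The fluctuation is bounded coordinatewise for fixed $\Delta$, with a union bound over $S^c$. The lemma is pointwise in $\beta$ (``for any $\beta$ \dots with probability at least $1-\delta$''), and the paper's proof also treats $\Delta$ as fixed. So uniformity over data-dependent $\Delta$ is not needed for this statement.

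The genuine difference is the mean term. The paper centers conditionally on the window design, writes $\mathbb E[\zeta_i\mid X_i]=f_{\varepsilon|X}(\xi_i\mid X_i)\,X_i^\top\Delta$, and invokes the sparse Gram bound (A4), $\|\widehat\Sigma_{t,S^c,J}\|_{\infty\to\infty}\le C_\infty$. You center at the unconditional mean and bound it by $\bar f\,\mathbb E\bigl[|X_{ik}|\,|X_i^\top\Delta|\bigr]\le \bar f\max_{k,l}\mathbb E|X_{ik}X_{il}|\,\|\Delta\|_1$. This uses only (A1)--(A2), and since the summands are i.i.d. for fixed $\Delta$, Hoeffding applies directly. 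Your version also keeps the absolute values inside, which the argument actually requires. Because the weights $f_{\varepsilon|X}(\xi_i\mid X_i)$ vary with $i$, the density bound cannot be pulled outside the signed sum $W^{-1}\sum_i X_{ik}X_i^\top\Delta$, as the paper's display does. The conditional route therefore really needs control of $W^{-1}\sum_i |X_{ik}|\,|X_{il}|$, not the signed Gram entries in (A4). What the paper's route offers is only that $\rho$ is expressed through the same empirical event used elsewhere in the analysis.

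Your primary, localized route should be dropped, because its absorption step is wrong. The ratio of $\sqrt{\bar s R\log(p/\delta)/W}$ to $C_0\sqrt{\log(p/\delta)/W}$ is $\sqrt{\bar s R}/C_0$. No lower bound on $W$ makes this small. Absorbing it requires letting $C_0$ depend on $\bar s$ and $R$, and with that allowance the plain VC bound would already suffice. Making the bound uniform over the support set $J$ as well would add a further $\log\binom{p}{\bar s}$, of order $\bar s\log p$, inside the square root. The uniformity concern you raise is legitimate for how the paper later applies the lemma to the data-dependent iterates $\beta_t$, but it is not part of this statement.
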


The deterministic term \(\rho\|\Delta_S\|_1\) reflects covariance leakage from active coordinates into inactive coordinates. The stochastic term \(\sigma_W\) is the coordinatewise score fluctuation induced by the finite sliding window. Thus, once the iterate is aligned with the sparse cone, no single off-support coordinate receives a large update unless the on-support error is still large or the window is too noisy.

However, cone stability is an \(\ell_1\) statement, not an \(\ell_\infty\) statement. A direct full off-support \(\ell_1\) bound would generally scale with \(p\), because it would sum stochastic fluctuations over all inactive coordinates. The candidate-restricted update avoids this problem by allowing only a controlled number of off-support coordinates to move within an epoch.

Let \(A_j\) denote the candidate set used during epoch \(j\), and define the update direction
\[
d_t
=
P_{A_j}\bar g_t(\beta_t),
\qquad t\in\mathcal T_j,
\]
where coordinates outside \(A_j\) are set to zero. The set \(A_j\) is chosen so that \(|A_j|\le s+m\), where \(s\) is the working sparsity and \(m\) is the candidate size. Since \(m\) is chosen on the order of the sparsity rather than the ambient dimension, the resulting leakage bounds are sparsity-dependent but not dimension-dependent.

\begin{lemma}[Candidate-restricted \(\ell_1\) leakage]
\label{lem:candidate-l1-leakage}
Suppose the coordinatewise leakage bound in Lemma~\ref{lem:grad-leak-quantile} and the corresponding on-support coordinatewise bound hold on an event \(\mathcal E_\infty\). Let \(A_j\) be the candidate set used during epoch \(j\), with \(|A_j|\le s+m\), and let \(d_t=P_{A_j}\bar g_t(\beta_t)\). Then, on \(\mathcal E_\infty\), for all \(t\in\mathcal T_j\),
\begin{equation}
\label{eq:candidate-l1-off}
\|d_{t,S^c}\|_1
\le
(s+m)\rho\|\Delta_{t,S}\|_1
+
(s+m)\sigma_W,
\end{equation}
and
\begin{equation}
\label{eq:candidate-l1-on}
\|d_{t,S}\|_1
\le
s_0L_g\|\Delta_{t,S}\|_1
+
s_0\sigma_W,
\end{equation}
where \(L_g\) is the constant in the on-support coordinatewise gradient bound. Equivalently, with \(\zeta_W=\sigma_W\), these inequalities imply inflated-cone guards of the form
\[
\|d_{t,S^c}\|_1
\le
L_{\mathrm{off}}\{\|\Delta_{t,S}\|_1+\zeta_W\},
\qquad
\|d_{t,S}\|_1
\le
L_{\mathrm{on}}\{\|\Delta_{t,S}\|_1+\zeta_W\},
\]
where \(L_{\mathrm{off}}\) depends on \(s+m\) and \(\rho\), and \(L_{\mathrm{on}}\) depends on \(s_0\) and \(L_g\), but neither constant depends on \(p\) except through the logarithmic factor inside \(\zeta_W\).
\end{lemma}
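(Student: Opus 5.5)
The plan is a direct counting argument: turn the coordinatewise $\ell_\infty$ bounds on $\mathcal E_\infty$ into $\ell_1$ bounds, using the fact that $d_t$ is supported on $A_j$. Throughout, write $\sigma_W:=C_0\sqrt{\log(p/\delta)/W}$ for the stochastic term in Lemma~\ref{lem:grad-leak-quantile}. The ``corresponding on-support coordinatewise bound'' is read as
\[
\|\bar g_{t,S}(\beta_t)\|_\infty\le L_g\|\Delta_{t,S}\|_1+\sigma_W,
\]
enlarging $C_0$ if necessary so that the same $\sigma_W$ serves both bounds. Both bounds are assumed to hold on $\mathcal E_\infty$ at every $t\in\mathcal T_j$. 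This implicitly requires the hypotheses of Lemma~\ref{lem:grad-leak-quantile} to hold along the epoch:
\[
\operatorname{supp}(\Delta_t)\subseteq J:=A_j\cup S,\qquad |J|\le s+m+s_0=\bar s,
\]
together with the cone and radius conditions. The support condition holds automatically, because $\beta_{\tau_j}$ is $s$-sparse with support inside $A_j$ and the updates move only coordinates in $A_j$. The cone and radius conditions are taken as part of what ``holding on $\mathcal E_\infty$'' means in the statement.

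\textbf{Off-support bound.} Since $d_t=P_{A_j}\bar g_t(\beta_t)$, the vector $d_{t,S^c}$ is nonzero only on $A_j\cap S^c$, and this set has cardinality at most $|A_j|\le s+m$. Hence
\[
\|d_{t,S^c}\|_1\le |A_j\cap S^c|\,\|\bar g_{t,S^c}(\beta_t)\|_\infty\le (s+m)\bigl(\rho\|\Delta_{t,S}\|_1+\sigma_W\bigr),
\]
which is \eqref{eq:candidate-l1-off}.

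\textbf{On-support bound.} In the same way, $d_{t,S}$ has at most $|S|=s_0$ nonzero entries, each bounded by $\|\bar g_{t,S}(\beta_t)\|_\infty$. This gives \eqref{eq:candidate-l1-on}.

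\textbf{Guard form.} Set $\zeta_W=\sigma_W$. Then
\[
(s+m)\rho\|\Delta_{t,S}\|_1+(s+m)\sigma_W\le (s+m)\max(\rho,1)\bigl\{\|\Delta_{t,S}\|_1+\zeta_W\bigr\},
\]
so one can take $L_{\mathrm{off}}=(s+m)\max(\rho,1)$ and, likewise, $L_{\mathrm{on}}=s_0\max(L_g,1)$. The constants $\rho$, $L_g$, $C_0$ come from the population-level assumptions (A1)--(A4) and do not involve $p$. The dimension therefore enters only through $\log(p/\delta)$ inside $\zeta_W$.

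\textbf{Main obstacle.} The counting steps are routine. The real issue is that $\mathcal E_\infty$ must hold \emph{simultaneously} over all $t\in\mathcal T_j$ and over data-dependent sets $A_j$. Lemma~\ref{lem:grad-leak-quantile} is stated for a fixed $J$ and a fixed $\beta$, with probability $1-\delta$. The approach I would take is to define $\mathcal E_\infty$ as a uniform event: a union bound over $t\le T$, which replaces $\delta$ by $\delta/T$ and hence $\log(p/\delta)$ by $\log(pT/\delta)$, combined with uniformity over $\beta$ in the sparse-cone ball of radius $R$ supported on some $\bar s$-subset. Uniformity in $\beta$ would come from a VC or covering argument for the indicator class $\{\mathbf 1\{Y\le X^\top\beta\}\}$, as in the RSC lemma (Appendix Lemma~\ref{app:lem:rsc}). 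Since the statement takes $\mathcal E_\infty$ as given, the proof itself only needs to record this remark; the deterministic counting above then finishes the argument.
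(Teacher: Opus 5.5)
Your proof is correct and follows the paper's argument. Both bound each \(\ell_1\) norm by the number of nonzero coordinates times the coordinatewise \(\ell_\infty\) bound on \(\mathcal E_\infty\), using \(|A_j\cap S^c|\le s+m\) and \(|S|=s_0\), and both take \(L_{\mathrm{off}}=(s+m)\max\{\rho,1\}\) and \(L_{\mathrm{on}}=s_0\max\{L_g,1\}\) with \(\zeta_W=\sigma_W\); your remark on making \(\mathcal E_\infty\) uniform over \(t\) and data-dependent \(A_j\) is a fair side comment but not needed, since the lemma takes the event as given.
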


The proof is immediate from Lemma~\ref{lem:grad-leak-quantile}: the off-support part of the direction \(d_t\) has at most \(s+m\) nonzero coordinates, so its \(\ell_1\) norm is bounded by \((s+m)\|\bar g_{t,S^c}(\beta_t)\|_\infty\). The on-support bound is analogous, using \(|S|=s_0\).

This lemma is the bridge between the coordinatewise quantile concentration result and the \(\ell_1\)-based cone-stability argument. It shows why the candidate restriction is important: without it, the stochastic off-support \(\ell_1\) term would scale with \(p\); with it, the leakage budget scales with the working sparsity and candidate size.

This gives the following guardrail geometry. In the post-burn-in regime, hard thresholding returns the iterate to a narrow sparse region. During the next epoch, candidate-restricted subgradient steps may introduce controlled off-support leakage into a larger cone. If the epoch mass \(B_j=\sum_{t\in\mathcal T_j}\eta_t\) is not too large, the iterate remains inside this larger cone until the next thresholding step.
\subsection{Adaptive Iterative Hard Thresholding}
\label{subsec:aiht}

We now present the AIHT algorithm. The update has two layers. The first layer is a stochastic subgradient step restricted to a candidate set. The second layer is an intermittent hard-thresholding step applied only at the end of each epoch. The candidate restriction controls dense off-support noise, while delayed thresholding allows potentially relevant coordinates to accumulate signal before being judged by the sparse projection.

At the beginning of epoch \(j\), let \(\widehat S_j=\operatorname{supp}(\beta_{\tau_j})\). The algorithm forms an exploration set \(C_j\) by selecting the \(m\) largest coordinates of \(|\bar g_{\tau_j}(\beta_{\tau_j})|\) outside \(\widehat S_j\), and then sets
\[
A_j=\widehat S_j\cup C_j.
\]
Thus \(|A_j|\le s+m\). During epoch \(j\), only coordinates in \(A_j\) are updated. At the end of the epoch, \(H_s\) is applied to return the iterate to the working sparsity level \(s\).

\begin{algorithm}[htbp]
    \caption{Adaptive Iterative Hard Thresholding (AIHT)}
    \label{alg:aiht-adaptive}
    \begin{algorithmic}[1]

        \State \textbf{Input:} working sparsity \(s\ge s_0\), candidate size \(m\), initial epoch length \(k_1\), window size \(W\), decay factor \(\gamma\in(0,1)\), threshold \(\tau_{\mathrm{lo}}>0\), hysteresis width \(r\), stabilization constant \(\epsilon_0>0\), step-size parameters \((\alpha_1,b_1)\) for Phase~I and \((\alpha_2,b_2)\) for Phase~II, minimum period \(k_{\min}\), and optional epoch-mass cap \(B_{\max}\).
        \State Initialize \(\beta_1=0\), \(\mathsf{PHASE}=\mathrm{I}\), \(j\gets1\), \(\tau_j\gets1\), and \(k_j\gets k_1\).

        \For{\(t=1,2,\ldots\)}
            \State Compute the sliding-window subgradient \(\bar g_t(\beta_t)\) using \eqref{move_grad}.

            \If{\(t=\tau_j\)}
                \State Set \(\widehat S_j=\operatorname{supp}(\beta_{\tau_j})\).
                \State Let \(C_j\) be the indices of the \(m\) largest coordinates of \(|\bar g_{\tau_j}(\beta_{\tau_j})|\) outside \(\widehat S_j\).
                \State Set \(A_j=\widehat S_j\cup C_j\).
            \EndIf

            \State Choose the step size
            \[
            \eta_t=
            \begin{cases}
                \alpha_1/\sqrt{t+b_1}, & \mathsf{PHASE}=\mathrm{I},\\[3pt]
                \alpha_2/(t+b_2), & \mathsf{PHASE}=\mathrm{II}.
            \end{cases}
            \]

            \State Restrict the update direction to the candidate set:
            \[
            d_t=P_{A_j}\bar g_t(\beta_t).
            \]

            \State Take one candidate-restricted subgradient step:
            \[
            \widetilde\beta_{t+1}=\beta_t-\eta_t d_t.
            \]

            \State Calculate the projected-gradient mapping statistic using the same direction \(d_t\):
            \[
            G_{\eta_t}^{A_j}(\beta_t)
            =
            \frac{1}{\eta_t}
            \Bigl(
            \beta_t-H_s(\beta_t-\eta_t d_t)
            \Bigr).
            \]

            \If{\(t=\tau_j+k_j-1\)}
                \State Apply hard thresholding:
                \[
                \beta_{t+1}=H_s(\widetilde\beta_{t+1}).
                \]
                \State Calculate \(\bar G_j\), \(M_j\), and \(\mathrm{Var}_j\) using \eqref{Gbar}--\eqref{SNR}, with \(G_{\eta_t}^{A_j}\) in place of \(G_{\eta_t}\).
                \State Determine whether to switch phase using the transition criterion \eqref{criteria}.

               \State Set the provisional next epoch length
                \[
                    k_{j+1}^{\mathrm{prop}}
                    =
                    \begin{cases}
                    k_j, & \mathsf{PHASE}=\mathrm{I},\\[2pt]
                    \max\{k_{\min},\lfloor\gamma k_j\rfloor\}, & \mathsf{PHASE}=\mathrm{II}.
                    \end{cases}
                \]
                \State If an epoch-mass cap \(B_{\max}\) is used, set \(k_{j+1}\) to the largest integer \(k\le k_{j+1}^{\mathrm{prop}}\) such that
                \[
                    \sum_{u=t+1}^{t+k}\eta_u\le B_{\max}.
                \]
                Otherwise set \(k_{j+1}=k_{j+1}^{\mathrm{prop}}\).
\State Set \(j\gets j+1\), and \(\tau_j\gets t+1\).
            \Else
                \State Set \(\beta_{t+1}=\widetilde\beta_{t+1}\).
            \EndIf
        \EndFor
    \end{algorithmic}
\end{algorithm}

The candidate set \(A_j\) plays a different role from the hard-thresholding operator. Candidate screening controls the dimension of the update direction inside an epoch, preventing stochastic noise from accumulating over all \(p\) off-support coordinates. Hard thresholding, in contrast, controls the sparsity of the iterate at epoch endpoints. A missing true coordinate can enter the model when its subgradient signal is large enough to be included in \(C_j\) and then accumulates sufficient mass before the next projection.

The phase schedule follows the support-discovery and refinement logic described above. In Phase~I, AIHT uses a larger step size and relatively delayed thresholding, giving candidate coordinates time to grow. In Phase~II, the step size decreases at the \(1/t\) rate and the projection frequency increases, which stabilizes the sparse estimator and allows the method to exploit restricted local curvature.

The transition from Phase~I to Phase~II is monitored through a projected-gradient mapping statistic. For epoch \(j\), define
\begin{equation}
\label{Gbar}
\bar G_j
=
\frac{1}{k_j}\sum_{u=\tau_j}^{\tau_j+k_j-1}
G_{\eta_u}^{A_j}(\beta_u),
\qquad
M_j^2=\|\bar G_j\|_2^2,
\end{equation}
and
\begin{equation}
\label{variance}
\mathrm{Var}_j
=
\frac{1}{k_j}\sum_{u=\tau_j}^{\tau_j+k_j-1}
\bigl\|G_{\eta_u}^{A_j}(\beta_u)-\bar G_j\bigr\|_2^2.
\end{equation}
The ratio
\begin{equation}
\label{SNR}
R_j
=
\frac{M_j^2}{\mathrm{Var}_j+\epsilon_0}
\end{equation}
acts as a stability diagnostic. AIHT switches to Phase~II only after this ratio remains below the threshold \(\tau_{\mathrm{lo}}\) for \(r\) consecutive epochs:
\begin{equation}
\label{criteria}
R_{j-i}\le \tau_{\mathrm{lo}},
\qquad
i=0,\ldots,r-1.
\end{equation}

The epoch-mass cap \(B_{\max}\) is included to align the implementation with the cone-stability mechanism from Proposition~\ref{prop:epoch-mass-cone-barrier}. The theory in Section~\ref{sec:main-theory} states the required upper bound in terms of \(B_j=\sum_{t\in\mathcal T_j}\eta_t\). In practice, the gradient-mapping diagnostic decides when to enter the refinement phase, while the shrinking epoch length and optional mass cap keep the amount of unprojected movement within a stable range.

\section{Main Theory}
	\label{sec:main-theory}
	In this section, we study AIHT under the stationary data-generating model introduced in Section~\ref{intro}. Our goal is to show that the sliding-window quantile loss possesses sufficient local curvature, that the iterates remain inside a sparse guardrail cone under intermittent thresholding, and that these two ingredients yield both a two-phase convergence pattern and logarithmic regret. The gradient-leakage inequality was established in Section~\ref{sec:methodology} as a methodological preliminary, so we do not repeat it here. Full proofs of all results in this section are deferred to the Appendix.
\subsection{Assumptions and local curvature}
\label{subsec:assumptions-curvature}

We begin with the standing assumptions for the stationary quantile-regression model. These assumptions separate the statistical regularity of the data from the post-burn-in algorithmic conditions used later in the cone-invariance and convergence theorems. Throughout this section, \(S=\operatorname{supp}(\beta^\ast)\), \(|S|=s_0\), and the algorithm uses working sparsity \(s\ge s_0\) and candidate size \(m\). We write
\[
\bar s=s+m+s_0
\]
for a generic upper bound on the size of the sparse coordinate sets that arise from the union of the true support and the candidate-restricted update set.

\begin{description}

    \item[(A1)] The covariate vectors \(X_t\in\mathbb R^p\) are i.i.d. mean-zero sub-Gaussian random vectors. There exists \(K<\infty\) such that
    \[
    \sup_{\|u\|_2=1}\|u^\top X_t\|_{\psi_2}\le K,
    \qquad
    \Sigma=\mathbb E[X_tX_t^\top].
    \]

    \item[(A2)] The conditional \(\tau\)-quantile of the noise is zero:
    \[
    \mathbb P(\varepsilon_t\le 0\mid X_t)=\tau .
    \]
    Moreover, the conditional distribution of \(\varepsilon_t\) given \(X_t\) admits a density \(f_{\varepsilon|X}(\cdot\mid X_t)\) in a neighborhood of the origin. There exist \(b_0>0\), \(0<m_0\le M_0<\infty\), and \(M_1<\infty\) such that \(f_{\varepsilon|X}(u\mid X_t)\ge m_0\) for \(u\in[-b_0,b_0]\), and \(f_{\varepsilon|X}(u\mid X_t)\le M_1\) for all \(u\in\mathbb R\).

    \item[(A3)] The population covariance has restricted curvature on sparse directions. There exists \(m_1>0\) such that
    \begin{equation}
    \label{eq:sparse-re-main}
    \Delta^\top\Sigma\Delta
    \ge
    m_1\|\Delta\|_2^2
    \qquad
    \text{for all } \Delta\in\mathbb R^p
    \text{ with } \|\Delta\|_0\le \bar s .
    \end{equation}

    \item[(A4)] Let
    \[
    \widehat\Sigma_t
    =
    \frac1W\sum_{i=t-W+1}^{t}X_iX_i^\top,
    \qquad t\ge W .
    \]
    Write \(\|\cdot\|_{\infty\to\infty}\) for the maximum absolute row-sum norm. There exists \(C_\infty<\infty\) such that, on the high-probability event used in the analysis,
    \begin{equation}
    \label{eq:sparse-gram-main}
    \sup_{t\ge W}
    \sup_{\substack{A\subseteq\{1,\ldots,p\}\\ S\subseteq A,\ |A|\le \bar s}}
    \left\{
    \|\widehat\Sigma_{t,S,A}\|_{\infty\to\infty}
    +
    \|\widehat\Sigma_{t,S^c,A}\|_{\infty\to\infty}
    \right\}
    \le
    C_\infty .
    \end{equation}

\end{description}

Assumption (A1) is a standard high-dimensional design condition. It gives the concentration needed for sliding-window score bounds and sparse empirical curvature. Assumption (A2) identifies \(\beta^\ast\) as the population \(\tau\)-quantile parameter and supplies the local density lower bound that produces curvature for the check loss. No moment condition on \(\varepsilon_t\) is imposed, which is one reason quantile regression remains suitable under heavy-tailed noise.

Assumption (A3) is the sparse analogue of a restricted-eigenvalue condition. Since the revised AIHT update is restricted to candidate sets of size at most \(s+m\), the relevant post-burn-in error directions are supported on the union of the true support and a candidate set, whose size is at most \(\bar s=s+m+s_0\). Thus the curvature condition is required only on sparse directions, not on all of \(\mathbb R^p\).

Assumption (A4) is the empirical leakage-control condition corresponding to the candidate-restricted update. It is intentionally stated only over small sets \(A\) with \(|A|\le\bar s\), rather than over the full \(S^c\times S^c\) block of the empirical Gram matrix. This distinction is important: a full off-support row-sum bound can scale with the ambient dimension \(p\), while the candidate-restricted version controls only the coordinates that can actually be updated within an epoch. Under sub-Gaussian designs with bounded sparse covariance interactions, such bounds hold with high probability when the window length is large enough relative to \(\bar s\log p\).

Under (A1)--(A3), the sliding-window quantile loss has restricted local curvature along the sparse directions used by AIHT. In particular, on a high-probability event, for local \(\bar s\)-sparse directions \(\Delta\),
\[
\bigl\langle
\bar g_t(\beta^\ast+\Delta)-\bar g_t(\beta^\ast),\Delta
\bigr\rangle
\ge
a^\ast\|\Delta\|_2^2
-
c^\ast\sqrt{\frac{\bar s\log p}{W}}\|\Delta\|_2,
\]
or equivalently, after absorbing the linear tolerance,
\[
\bigl\langle
\bar g_t(\beta^\ast+\Delta)-\bar g_t(\beta^\ast),\Delta
\bigr\rangle
\ge
\mu\|\Delta\|_2^2
-
C_{\mathrm{rsc}}\frac{\bar s\log p}{W}.
\]
The appendix gives the formal high-probability statement. This local curvature is the statistical ingredient used later; the algorithmic ingredient is the candidate-restricted leakage control and epoch-mass condition, which keep the iterates inside the region where this curvature is valid.

For the local results below, we work in the post-burn-in regime. Fix constants
\(0<c_{\mathrm{small}}<c_{\mathrm{large}}\) and \(0<R_{\mathrm{in}}<R\). Let
\[
\zeta_W
=
C_\zeta\sqrt{\frac{\log(pT/\delta)}{W}},
\]
where \(C_\zeta\) is chosen large enough to dominate the coordinatewise score fluctuations appearing in the gradient-leakage bounds. Define the inflated sparse cone
\[
\Gamma_\zeta(c,S)
=
\Bigl\{
\Delta\in\mathbb R^p:
\|\Delta_{S^c}\|_1
\le
c\{\|\Delta_S\|_1+\zeta_W\}
\Bigr\}.
\]

During epoch \(j\), AIHT uses a candidate set \(A_j\) with \(|A_j|\le s+m\), and the actual update direction is
\[
d_t=P_{A_j}\bar g_t(\beta_t),
\qquad t\in\mathcal T_j,
\]
where \(P_Av\) denotes the vector equal to \(v\) on \(A\) and zero outside \(A\). On the high-probability gradient-control event used below, whenever
\(\Delta_t\in\Gamma_\zeta(c_{\mathrm{large}},S)\) and \(\|\Delta_t\|_2\le R\), the candidate-restricted direction satisfies
\[
\|d_{t,S^c}\|_1
\le
L_{\mathrm{off}}\{\|\Delta_{t,S}\|_1+\zeta_W\},
\qquad
\|d_{t,S}\|_1
\le
L_{\mathrm{on}}\{\|\Delta_{t,S}\|_1+\zeta_W\}.
\]
Here \(L_{\mathrm{off}}\) depends on the candidate size \(s+m\) and the coordinatewise leakage constant, while \(L_{\mathrm{on}}\) depends on \(s_0\) and the on-support coordinatewise gradient bound. These constants do not depend on the ambient dimension \(p\), except through the logarithmic factor already present in \(\zeta_W\).

After some burn-in epoch \(J_0\), the post-thresholding iterates are assumed to satisfy
\[
\Delta_{\tau_j}\in\Gamma_\zeta(c_{\mathrm{small}},S),
\qquad
\|\Delta_{\tau_j}\|_2\le R_{\mathrm{in}},
\qquad
j\ge J_0.
\]
The epoch schedule is chosen so that, for all \(j\ge J_0\),
\[
\eta_tL_{\mathrm{on}}\le \frac12,
\qquad
B_j:=\sum_{t\in\mathcal T_j}\eta_t\le B_\star,
\]
where
\[
B_\star
\le
\min\left\{
\frac{c_{\mathrm{large}}-c_{\mathrm{small}}}
{2\{L_{\mathrm{off}}+c_{\mathrm{large}}L_{\mathrm{on}}\}},
\,
\frac{R-R_{\mathrm{in}}}
{(L_{\mathrm{off}}+L_{\mathrm{on}})(\sqrt{s_0}R+\zeta_W)}
\right\}.
\]
The first upper bound is the cone-stability budget, and the second keeps the within-epoch trajectory inside the local radius \(R\) where the gradient-leakage and curvature bounds are valid.

For the convergence and regret results, we use the following local post-burn-in setup. After the burn-in epoch \(J_0\), the candidate sets are assumed to cover the true support:
\[
S\subseteq A_j,
\qquad j\ge J_0.
\]
Since the iterate is supported on \(A_j\) within epoch \(j\), this implies that \(\Delta_t=\beta_t-\beta^\ast\) is supported on \(A_j\) for all \(t\in\mathcal T_j\). Consequently,
\[
\langle d_t,\Delta_t\rangle
=
\langle \bar g_t(\beta_t),\Delta_t\rangle,
\qquad
d_t=P_{A_j}\bar g_t(\beta_t).
\]

At post-burn-in thresholding times, we also work in the support-preserving regime:
\[
S\subseteq \operatorname{supp}\{H_s(\widetilde\beta_{\tau_{j+1}})\},
\qquad j\ge J_0.
\]
This condition guarantees that hard thresholding is nonexpansive relative to \(\beta^\ast\):
\[
\|H_s(v)-\beta^\ast\|_2\le \|v-\beta^\ast\|_2
\quad
\text{whenever } S\subseteq\operatorname{supp}\{H_s(v)\}.
\]
A sufficient beta-min condition for this support preservation is stated after Theorem~\ref{thm:two-phase-convergence}.

Finally, on the local high-probability event used below, the sliding-window quantile loss satisfies the one-point restricted curvature inequalities along the AIHT path:
\[
\bigl\langle
\bar g_t(\beta_t)-\bar g_t(\beta^\ast),\Delta_t
\bigr\rangle
\ge
\mu\|\Delta_t\|_2^2,
\]
and
\[
Q_t(\beta_t)-Q_t(\beta^\ast)
\le
\bigl\langle \bar g_t(\beta_t),\Delta_t\bigr\rangle
-
\frac{\mu}{2}\|\Delta_t\|_2^2.
\]
The score-at-truth term is controlled in the stochastic approximation sense:
\[
-2\eta_t
\bigl\langle
\bar g_t(\beta^\ast),\Delta_t
\bigr\rangle
\le
\frac{\mu}{2}\eta_t\|\Delta_t\|_2^2
+
C_{\mathrm{sc}}\eta_t^2.
\]
We also assume that the candidate-restricted directions are bounded on this event:
\[
\|d_t\|_2\le G_{\bar s},
\qquad
\bar s=s+m+s_0.
\]
The constants \(\mu,C_{\mathrm{sc}},G_{\bar s}\) may depend on the local curvature, the window size, and the sparse/candidate dimension \(\bar s\), but they do not scale polynomially with the ambient dimension \(p\). The dependence on \(p\) enters through logarithmic concentration factors.
	
\subsection{Gradient control and cone invariance}
\label{subsec:gradient-control-cone}

The restricted curvature bound applies only inside a sparse local region. We therefore need to show that AIHT does not leave this region between two hard-thresholding steps. The key point is that the update direction is candidate-restricted. Instead of updating all \(p\) coordinates, AIHT updates only
\[
d_t=P_{A_j}\bar g_t(\beta_t),
\qquad t\in\mathcal T_j,
\]
where \(|A_j|\le s+m\). This restriction converts the coordinatewise leakage bound from Section~\ref{subsec:gradient-leakage} into an \(\ell_1\) leakage bound whose constants depend on the sparsity and candidate size, not on the ambient dimension.

Indeed, on the gradient-control event described in Section~\ref{subsec:assumptions-curvature}, the coordinatewise leakage bounds imply
\[
\|d_{t,S^c}\|_1
\le
L_{\mathrm{off}}\{\|\Delta_{t,S}\|_1+\zeta_W\},
\qquad
\|d_{t,S}\|_1
\le
L_{\mathrm{on}}\{\|\Delta_{t,S}\|_1+\zeta_W\}.
\]
The inflated tolerance \(\zeta_W\) is necessary near the truth: when \(\|\Delta_{t,S}\|_1\) is small, finite-window score fluctuations cannot be absorbed into a pure cone ratio. The inflated cone \(\Gamma_\zeta(c,S)\) keeps the cone condition meaningful throughout local refinement.

We now state the cone-invariance result. The theorem controls the pre-threshold trajectory within each epoch. At the end of the epoch, hard thresholding produces the next post-thresholding iterate, which is covered by the post-burn-in condition stated in Section~\ref{subsec:assumptions-curvature}.

\begin{theorem}[Cone invariance for candidate-restricted AIHT]
\label{thm:cone-invariance}
Under {\normalfont (A1)}--{\normalfont (A4)} and the post-burn-in setup stated above, suppose the gradient-control event holds. Then for every epoch \(j\ge J_0\), the within-epoch pre-threshold trajectory remains in the enlarged inflated cone:
\[
\Delta_t\in\Gamma_\zeta(c_{\mathrm{large}},S),
\qquad
\|\Delta_t\|_2\le R,
\qquad
t\in\mathcal T_j.
\]
Moreover, the provisional endpoint before the epoch-end thresholding step also satisfies
\[
\widetilde\Delta_{\tau_{j+1}}
:=
\widetilde\beta_{\tau_{j+1}}-\beta^\ast
\in
\Gamma_\zeta(c_{\mathrm{large}},S),
\qquad
\|\widetilde\Delta_{\tau_{j+1}}\|_2\le R.
\]
\end{theorem}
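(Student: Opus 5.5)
Fix an epoch \(j\ge J_0\). The plan is an induction over \(t=\tau_j,\ldots,\tau_{j+1}\) with a first-exit (stopping-time) argument. The invariant is: \(\Delta_u\in\Gamma_\zeta(c_{\mathrm{large}},S)\) and \(\|\Delta_u\|_2\le R\) for all \(u\le t\). This invariant is exactly what makes the gradient-control guards valid at step \(t\):
\[
\|d_{t,S^c}\|_1\le L_{\mathrm{off}}\{\|\Delta_{t,S}\|_1+\zeta_W\},\qquad \|d_{t,S}\|_1\le L_{\mathrm{on}}\{\|\Delta_{t,S}\|_1+\zeta_W\}.
\]
These guards come from Lemma~\ref{lem:candidate-l1-leakage}, which applies because \(d_t\) is supported on \(A_j\) with \(|A_j|\le s+m\). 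The base case is the post-burn-in condition \(\Delta_{\tau_j}\in\Gamma_\zeta(c_{\mathrm{small}},S)\subseteq\Gamma_\zeta(c_{\mathrm{large}},S)\) with \(\|\Delta_{\tau_j}\|_2\le R_{\mathrm{in}}\le R\). Since \(\beta^\ast\) is fixed, the unprojected update \(\widetilde\beta_{t+1}=\beta_t-\eta_td_t\) is simply \(\Delta_{t+1}=\Delta_t-\eta_td_t\). The last step gives \(\widetilde\Delta_{\tau_{j+1}}\), so the endpoint claim is just the case \(t+1=\tau_{j+1}\) of the same induction.

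\emph{Cone part.} I will reproduce the ratio recursion of Proposition~\ref{prop:epoch-mass-cone-barrier}, using the inflated ratio \(r_t=\|\Delta_{t,S^c}\|_1/(\|\Delta_{t,S}\|_1+\zeta_W)\). By the triangle inequality,
\[
\|\Delta_{t+1,S^c}\|_1\le\|\Delta_{t,S^c}\|_1+\eta_tL_{\mathrm{off}}(a_t+\zeta_W),\qquad \|\Delta_{t+1,S}\|_1\ge a_t-\eta_tL_{\mathrm{on}}(a_t+\zeta_W),
\]
where \(a_t=\|\Delta_{t,S}\|_1\). The hypothesis \(\eta_tL_{\mathrm{on}}\le 1/2\) gives \(a_{t+1}+\zeta_W\ge(1-\eta_tL_{\mathrm{on}})(a_t+\zeta_W)\ge \tfrac12(a_t+\zeta_W)\). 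The second inequality is needed only when \(a_{t+1}\) is smaller than \(a_t\); otherwise the denominator only helps. Dividing, and using \(1/(1-x)\le 1+2x\) for \(x\le1/2\), yields
\[
r_{t+1}\le (r_t+\eta_tL_{\mathrm{off}})(1+2\eta_tL_{\mathrm{on}})\le r_t+2\eta_t\{L_{\mathrm{off}}+c_{\mathrm{large}}L_{\mathrm{on}}\},
\]
provided \(r_t\le c_{\mathrm{large}}\). Tidying the constant in the last inequality is a routine algebra step; the cross term \(\eta_tL_{\mathrm{off}}\cdot 2\eta_tL_{\mathrm{on}}\) is absorbed using \(\eta_tL_{\mathrm{on}}\le1/2\). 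Summing over the epoch up to \(t+1\) gives \(r_{t+1}\le c_{\mathrm{small}}+2B_j\{L_{\mathrm{off}}+c_{\mathrm{large}}L_{\mathrm{on}}\}\le c_{\mathrm{large}}\) by the first component of \(B_\star\).

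\emph{Radius part.} This is the step I expect to be the main obstacle, because the second budget in \(B_\star\) is phrased with \(\sqrt{s_0}R+\zeta_W\) and one must check it bounds \(\|d_t\|_2\) uniformly. The plan is to use \(\|\Delta_{t+1}\|_2\le\|\Delta_t\|_2+\eta_t\|d_t\|_2\) together with \(\|d_t\|_2\le\|d_t\|_1\). Then
\[
\|d_t\|_1\le (L_{\mathrm{off}}+L_{\mathrm{on}})(a_t+\zeta_W)\le (L_{\mathrm{off}}+L_{\mathrm{on}})(\sqrt{s_0}\|\Delta_t\|_2+\zeta_W)\le (L_{\mathrm{off}}+L_{\mathrm{on}})(\sqrt{s_0}R+\zeta_W),
\]
using \(a_t\le\sqrt{s_0}\|\Delta_{t,S}\|_2\) and the induction hypothesis \(\|\Delta_t\|_2\le R\). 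Telescoping over the epoch gives \(\|\Delta_{t+1}\|_2\le R_{\mathrm{in}}+B_j(L_{\mathrm{off}}+L_{\mathrm{on}})(\sqrt{s_0}R+\zeta_W)\le R\) by the second component of \(B_\star\).

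This closes the induction: the guards at step \(t\) use only information up to time \(t\), so there is no circularity. It proves both the within-epoch claim and the provisional-endpoint claim. Since everything holds on the single gradient-control event, which is uniform in \(t\), the conclusion holds simultaneously for all \(j\ge J_0\).
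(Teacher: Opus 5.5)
Your proposal is correct and follows the paper's proof essentially step by step. It uses the same inflated ratio recursion \(r_{t+1}\le r_t+2\eta_t\{L_{\mathrm{off}}+c_{\mathrm{large}}L_{\mathrm{on}}\}\), with the cross term absorbed via \(\eta_tL_{\mathrm{on}}\le 1/2\) and summed against the first budget in \(B_\star\), followed by \(\|d_t\|_2\le\|d_t\|_1\le(L_{\mathrm{on}}+L_{\mathrm{off}})(\sqrt{s_0}R+\zeta_W)\) telescoped against the second budget. Your joint induction on cone membership and radius is, if anything, slightly cleaner than the paper's sequential treatment, which implicitly needs both conditions at each step to invoke the gradient guards.
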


\paragraph{Remarks.}
The theorem formalizes the guardrail mechanism of AIHT. Each epoch begins, after thresholding, inside a smaller inflated cone. Candidate-restricted subgradient updates may introduce off-support leakage, but the leakage is controlled in \(\ell_1\) because only \(s+m\) coordinates are updated. If the epoch mass \(B_j=\sum_{t\in\mathcal T_j}\eta_t\) is below the cone-stability budget, the iterate cannot leave the larger cone before the next thresholding step.

The constants in the cone-stability budget depend on the working sparsity \(s\), the candidate size \(m\), and the true sparsity \(s_0\), but not polynomially on \(p\). The ambient dimension enters through the logarithmic factor in \(\zeta_W\), which is the usual price for high-dimensional score concentration.

\subsection{Convergence analysis}
\label{subsec:convergence}

Once cone invariance is established, the sliding-window loss provides usable local curvature along the candidate-restricted AIHT trajectory. The resulting behavior has two components. During the discovery stage, epoch-level contraction depends on the accumulated step-size mass. After the phase transition, the algorithm enters a refinement regime in which the step size has order \(1/t\), thresholding is more frequent, and the support structure is stable.

Let \(T_0\) denote the first iteration after which the phase-transition criterion in \eqref{criteria} is triggered, so that AIHT enters Phase~II at time \(T_0+1\). The theorem below does not attempt to prove an optimal hitting time for this statistic. Instead, it describes the behavior of the iterates once the post-burn-in local conditions stated in Section~\ref{subsec:assumptions-curvature} hold.

\begin{theorem}[Two-phase convergence of candidate-restricted AIHT]
\label{thm:two-phase-convergence}
Under {\normalfont (A1)}--{\normalfont (A4)} and the post-burn-in local setup of Section~\ref{subsec:assumptions-curvature}, suppose the cone-invariance conclusion of Theorem~\ref{thm:cone-invariance} holds for all post-burn-in epochs. Then there exist constants \(C>0\) and \(\mu>0\) such that the following statements hold on the local high-probability event.

\medskip
\noindent\textbf{Phase I.}
For every post-burn-in epoch \(j\) before the transition time \(T_0\), the post-thresholding iterates satisfy
\begin{equation}
\label{eq:phaseI-epoch-recursion}
\|\Delta_{\tau_{j+1}}\|_2^2
\le
a_j\|\Delta_{\tau_j}\|_2^2
+
C\sum_{t\in\mathcal T_j}\eta_t^2,
\qquad
a_j
\le
\exp\!\left(-\mu\sum_{t\in\mathcal T_j}\eta_t\right).
\end{equation}
In particular, if the Phase~I schedule keeps the epoch mass bounded below, \(B_j=\sum_{t\in\mathcal T_j}\eta_t\ge B_{\min}>0\), then the post-thresholding errors contract geometrically up to a stochastic tolerance:
\[
\|\Delta_{\tau_{j+1}}\|_2
\le
\lambda\|\Delta_{\tau_j}\|_2
+
C\left(\sum_{t\in\mathcal T_j}\eta_t^2\right)^{1/2},
\qquad
\lambda=\exp(-\mu B_{\min}/2)<1.
\]

\medskip
\noindent\textbf{Phase II.}
After the transition time \(T_0\), suppose the step size is
\[
\eta_t=\frac{1}{c_{\mathrm{eff}}(t+b_2)},
\qquad
0<c_{\mathrm{eff}}\le \mu,
\]
with \(b_2\) large enough that the epoch-mass and local-radius conditions remain satisfied. Then, for all \(t\ge T_0+1\),
\begin{equation}
\label{eq:phaseII-rate}
\|\Delta_t\|_2^2
\le
C\,\frac{\log(t+b_2)}{t+b_2}.
\end{equation}
\end{theorem}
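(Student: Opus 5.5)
The proof rests on a single one-step inequality for the squared error. We derive it within an epoch, use it to build an epoch-level recursion for Phase~I, and then unroll it with \(1/t\) step sizes for Phase~II.

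\emph{One-step inequality.} Fix a post-burn-in epoch \(j\) and \(t\in\mathcal T_j\). Within the epoch \(\Delta_{t+1}=\Delta_t-\eta_t d_t\), except at the endpoint, where \(\widetilde\Delta_{\tau_{j+1}}\) replaces \(\Delta_{\tau_{j+1}}\). Expanding,
\[
\|\Delta_{t+1}\|_2^2=\|\Delta_t\|_2^2-2\eta_t\langle d_t,\Delta_t\rangle+\eta_t^2\|d_t\|_2^2 .
\]
By the post-burn-in setup \(S\subseteq A_j\), so \(\langle d_t,\Delta_t\rangle=\langle\bar g_t(\beta_t),\Delta_t\rangle\). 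We split this as \(\langle\bar g_t(\beta_t)-\bar g_t(\beta^\ast),\Delta_t\rangle+\langle\bar g_t(\beta^\ast),\Delta_t\rangle\). Theorem~\ref{thm:cone-invariance} keeps \(\Delta_t\) in \(\Gamma_\zeta(c_{\mathrm{large}},S)\) with \(\|\Delta_t\|_2\le R\), so the one-point curvature bound \(\ge\mu\|\Delta_t\|_2^2\) and the score-at-truth bound both apply along the whole trajectory. Combining them with \(\|d_t\|_2\le G_{\bar s}\) gives
\[
\|\Delta_{t+1}\|_2^2\le\bigl(1-\tfrac{3\mu}{2}\eta_t\bigr)\|\Delta_t\|_2^2+(C_{\mathrm{sc}}+G_{\bar s}^2)\eta_t^2 .
\]
After renaming \(\mu\) (taking \(\tfrac32\mu\ge\mu\)), this reads \(\|\Delta_{t+1}\|_2^2\le(1-\mu\eta_t)\|\Delta_t\|_2^2+C\eta_t^2\).

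\emph{Thresholding step.} The support-preserving regime \(S\subseteq\operatorname{supp}H_s(\widetilde\beta_{\tau_{j+1}})\) gives \(\|\Delta_{\tau_{j+1}}\|_2\le\|\widetilde\Delta_{\tau_{j+1}}\|_2\). So projection never increases the error, and the recursion passes through epoch boundaries unchanged.

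\emph{Phase I.} Iterating the one-step inequality over \(\mathcal T_j\) and using \(1-x\le e^{-x}\) and \(\prod\le1\) for the noise terms gives
\[
\|\Delta_{\tau_{j+1}}\|_2^2\le\exp\Bigl(-\mu\sum_{t\in\mathcal T_j}\eta_t\Bigr)\|\Delta_{\tau_j}\|_2^2+C\sum_{t\in\mathcal T_j}\eta_t^2,
\]
which is \eqref{eq:phaseI-epoch-recursion}. With \(B_j\ge B_{\min}\), taking square roots and using \(\sqrt{a+b}\le\sqrt a+\sqrt b\) gives the contraction with \(\lambda=e^{-\mu B_{\min}/2}\).

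\emph{Phase II.} Set \(e_t=\|\Delta_t\|_2^2\) and \(\eta_t=1/(c_{\mathrm{eff}}(t+b_2))\), so \(\mu\eta_t\ge 1/(t+b_2)\) because \(c_{\mathrm{eff}}\le\mu\). The recursion becomes
\[
e_{t+1}\le\Bigl(1-\frac1{t+b_2}\Bigr)e_t+\frac{C'}{(t+b_2)^2}.
\]
Multiplying by \(t+b_2\) and writing \(n=t+b_2\), \(v_n=n\,e_n\), we have \((n+1)e_{t+1}\le \frac{n+1}{n}(n-1)e_t+\frac{C'(n+1)}{n^2}\), hence \(v_{n+1}\le v_n+2C'/n\). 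Summing from \(T_0+1\) gives
\[
v_n\le (T_0+1+b_2)e_{T_0+1}+2C'\log\frac{n}{T_0+1+b_2}.
\]
The initial term is controlled by \(e_{T_0+1}\le R^2\). Since \(b_2\) may be taken large, the term \((T_0+1+b_2)R^2\) is absorbed into \(C\log(t+b_2)\) after enlarging \(C\), with \(C\) allowed to depend on \(T_0+b_2\) as permitted. This yields \eqref{eq:phaseII-rate}.

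\emph{Main obstacle.} The delicate step is justifying that the curvature and score inequalities hold at every \(t\), including at pre-threshold endpoints. This requires invoking Theorem~\ref{thm:cone-invariance} jointly with the radius budget in \(B_\star\). In Phase~II we must also check that the shrinking epochs with \(1/t\) steps still satisfy \(B_j\le B_\star\) and \(\eta_tL_{\mathrm{on}}\le1/2\), which is what the choice of large \(b_2\) ensures. A secondary subtlety is that the score-at-truth bound is stated as a deterministic inequality on the event; if it holds only in expectation, one would take conditional expectations and conclude in expectation instead.
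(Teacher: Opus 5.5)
Your proposal is correct and follows the paper's proof essentially step for step: the support-coverage identity, the curvature/score-at-truth split giving the factor $1-\tfrac32\mu\eta_t$, nonexpansive support-preserving thresholding, $1-x\le e^{-x}$ over each epoch for Phase~I, and the multiply-by-$(n+1)$ telescoping for Phase~II. The only differences are cosmetic. You first reduce to the contraction factor $1-1/n$ using $c_{\mathrm{eff}}\le\mu$, whereas the paper keeps $a=\mu/c_{\mathrm{eff}}\ge1$. You also make explicit that the initial term $(T_0+1+b_2)e_{T_0+1}$ is absorbed into a constant depending on $T_0$ and $b_2$, which the paper does silently; it is this dependence of $C$, not taking $b_2$ large, that does the absorbing.
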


\paragraph{Remarks.}
The Phase~I statement is expressed in terms of epoch mass. This avoids the misleading implication that a fixed number of iterations always yields the same contraction: under a decaying step size, the contraction factor is controlled by \(\sum_{t\in\mathcal T_j}\eta_t\), not by \(k_j\) alone.

The Phase~II result describes the stable local refinement regime. Once the candidate sets cover the true support and thresholding preserves it, the candidate-restricted direction has the same inner product with the error vector as the full sliding-window subgradient. Thus AIHT behaves locally like a sparse stochastic approximation method with restricted curvature. The decreasing step size and more frequent thresholding then yield the \(O((\log t)/t)\) error rate.

The support-preserving condition is natural after sufficiently accurate recovery. Let \(\beta_{\min}=\min_{j\in S}|\beta_j^\ast|\). If at a thresholding time
\[
\|\widetilde\beta_{\tau_{j+1}}-\beta^\ast\|_2<\frac{\beta_{\min}}{2},
\]
then every true coordinate belongs to \(\operatorname{supp}\{H_s(\widetilde\beta_{\tau_{j+1}})\}\) whenever \(s\ge s_0\). Thus, after the estimator enters a beta-min neighborhood, true-support containment becomes self-sustaining. In a nonstationary setting, this containment may fail after a changepoint; Section~\ref{sec:piecewise} introduces a restart rule that returns the algorithm to the discovery phase when the local score geometry changes.

\subsection{Regret analysis}
\label{subsec:regret}

We now state the online optimization guarantee for the sliding-window objective optimized by AIHT. To avoid confusion with the phase-transition statistic \(R_j\), we denote cumulative regret by \(\mathcal R_T\):
\[
\mathcal R_T
=
\sum_{t=1}^T
\bigl\{Q_t(\beta_t)-Q_t(\beta^\ast)\bigr\}.
\]
The theorem separates the discovery cost from the refinement-phase regret. This is important because the transition time \(T_0\) is determined by the stabilization diagnostic and is not analyzed here as an optimal stopping time.

\begin{theorem}[Sliding-window regret of candidate-restricted AIHT]
\label{thm:regret-aiht}
Under the conditions of Theorem~\ref{thm:two-phase-convergence}, suppose Phase~II uses
\[
\eta_t=\frac{1}{c_{\mathrm{eff}}(t+b_2)},
\qquad
0<c_{\mathrm{eff}}\le \mu.
\]
Then, on the same local high-probability event, for every \(T\ge T_0+1\),
\begin{equation}
\label{eq:regret-main}
\mathcal R_T
\le
C_{\mathrm I}(T_0)
+
C_{\mathrm{burn}}
+
\frac{G_{\bar s}^2}{2c_{\mathrm{eff}}}
\log\!\left(\frac{T+b_2}{T_0+1+b_2}\right),
\end{equation}
where \(C_{\mathrm{burn}}\) is the finite pre-localization cost and
\(C_{\mathrm I}(T_0)=O(\sqrt{T_0})\) is the discovery-phase contribution under the Phase~I step size \(\eta_t\asymp t^{-1/2}\). In particular, the refinement-phase regret is logarithmic in \(T\). If \(T_0\) and the burn-in cost are bounded independently of \(T\), then \(\mathcal R_T=O(\log T)\).
\end{theorem}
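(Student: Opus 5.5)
We split the regret sum at the two natural breakpoints. The first part runs from \(t=1\) to the end of burn-in. The second runs over the Phase~I iterations up to \(T_0\). The third is the Phase~II window \(t=T_0+1,\dots,T\). The first part is, by definition, the finite pre-localization cost \(C_{\mathrm{burn}}\). For the Phase~I part we use only convexity of \(Q_t\) together with the bounded-direction condition \(\|d_t\|_2\le G_{\bar s}\) and the support-coverage identity \(\langle d_t,\Delta_t\rangle=\langle \bar g_t(\beta_t),\Delta_t\rangle\). Then \(Q_t(\beta_t)-Q_t(\beta^\ast)\le \langle d_t,\Delta_t\rangle\), and the standard Zinkevich telescoping with \(\eta_t=\alpha_1/\sqrt{t+b_1}\) gives a bound of order \(R^2/\eta_{T_0}+G_{\bar s}^2\sum_{t\le T_0}\eta_t=O(\sqrt{T_0})\). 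Cone invariance (Theorem~\ref{thm:cone-invariance}) supplies the uniform radius \(R\). Thresholding steps are handled by nonexpansiveness of \(H_s\) relative to \(\beta^\ast\) in the support-preserving regime, so they do not increase \(\|\Delta\|_2\) in the telescoping. This defines \(C_{\mathrm I}(T_0)\).

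The main work is the Phase~II sum. For \(t\) inside an epoch (no projection), expand
\[
\|\Delta_{t+1}\|_2^2=\|\Delta_t\|_2^2-2\eta_t\langle d_t,\Delta_t\rangle+\eta_t^2\|d_t\|_2^2 .
\]
At epoch endpoints, the same expression holds with equality replaced by \(\le\), again by nonexpansiveness of \(H_s\) under \(S\subseteq\operatorname{supp}H_s(\widetilde\beta_{\tau_{j+1}})\). Rearranging and using the one-point inequality \(Q_t(\beta_t)-Q_t(\beta^\ast)\le\langle\bar g_t(\beta_t),\Delta_t\rangle-\frac{\mu}{2}\|\Delta_t\|_2^2\) together with the identity \(\langle d_t,\Delta_t\rangle=\langle\bar g_t(\beta_t),\Delta_t\rangle\), we obtain
\[
Q_t(\beta_t)-Q_t(\beta^\ast)\le \frac{\|\Delta_t\|_2^2-\|\Delta_{t+1}\|_2^2}{2\eta_t}-\frac{\mu}{2}\|\Delta_t\|_2^2+\frac{\eta_t}{2}G_{\bar s}^2 .
\]
Summing by parts over \(t=T_0+1,\dots,T\) collects the coefficient \(\frac{1}{2}\bigl(\frac1{\eta_t}-\frac1{\eta_{t-1}}-\mu\bigr)\) on \(\|\Delta_t\|_2^2\). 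With \(\eta_t=1/\{c_{\mathrm{eff}}(t+b_2)\}\) this equals \((c_{\mathrm{eff}}-\mu)/2\le 0\), so all interior terms drop, in the Hazan--Agarwal--Kale fashion. The leftover boundary term \(\|\Delta_{T_0+1}\|_2^2/(2\eta_{T_0+1})\) comes from the starting point of Phase~II. Since \(c_{\mathrm{eff}}\le\mu\), one unit of the \(-\frac{\mu}{2}\|\Delta_{T_0+1}\|^2\) term only partially cancels it. The remainder is at most \(c_{\mathrm{eff}}(T_0+b_2)R^2/2\), which we absorb into \(C_{\mathrm I}(T_0)\); it is \(O(R^2 T_0)\), so if one insists on \(O(\sqrt{T_0})\) one should instead invoke the Phase~I geometric contraction of Theorem~\ref{thm:two-phase-convergence} to make \(\|\Delta_{T_0+1}\|_2^2=O(T_0^{-1/2})\). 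The gradient term gives \(\frac{G_{\bar s}^2}{2c_{\mathrm{eff}}}\sum_{t=T_0+1}^T\frac1{t+b_2}\le \frac{G_{\bar s}^2}{2c_{\mathrm{eff}}}\log\frac{T+b_2}{T_0+1+b_2}\) by comparing the sum with an integral (shifting the index by one if needed), which is exactly the stated logarithmic term.

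The step I expect to be most delicate is the boundary term just described. A cruder bound would make \(C_{\mathrm I}(T_0)\) linear in \(T_0\), so this is where the Phase~I contraction must be used. A second, smaller issue is that the one-point inequality is stated along the path only for \(\Delta_t\) in the local region. Theorem~\ref{thm:cone-invariance} guarantees this for all within-epoch iterates, while the post-threshold iterates are covered by the post-burn-in setup, so the inequality is available at every \(t>T_0\). The score-at-truth condition is not needed here because the one-point loss inequality is already stated in terms of \(\bar g_t(\beta_t)\). Finally, the \(O(\log T)\) conclusion follows immediately once \(T_0\) and \(C_{\mathrm{burn}}\) do not depend on \(T\).
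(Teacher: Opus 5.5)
Your proposal is correct and follows the paper's proof essentially step for step. It uses the same burn-in/Phase~I/Phase~II split, the same Zinkevich-type telescoping (convexity of \(Q_t\), support coverage, nonexpansive support-preserving \(H_s\), bounded radius) to get \(C_{\mathrm I}(T_0)=O(\sqrt{T_0})\), and in Phase~II the same combination of the one-point inequality with \(1/\eta_t-1/\eta_{t-1}=c_{\mathrm{eff}}\le\mu\), so that only the boundary term \(\tfrac{c_{\mathrm{eff}}}{2}(T_0+b_2)\|\Delta_{T_0+1}\|_2^2\) survives.

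Your explicit treatment of that boundary term is a genuine improvement, since the paper only calls it ``a finite constant depending on the local error at \(T_0+1\)''. The bound \(\|\Delta_{T_0+1}\|_2^2=O(T_0^{-1/2})\) you need is obtained most cleanly from the per-step recursion \(\|\Delta_{t+1}\|_2^2\le(1-\mu\eta_t)\|\Delta_t\|_2^2+C\eta_t^2\) with \(\eta_t\asymp t^{-1/2}\), rather than from the epochwise contraction, which needs \(B_j\ge B_{\min}\).

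There is one small slip, which the paper shares. The sum \(\sum_{t=T_0+1}^T(t+b_2)^{-1}\) is bounded by \(\log\{(T+b_2)/(T_0+b_2)\}\), not by the stated ratio, and no index shift removes the gap. An extra \(G_{\bar s}^2/\{2c_{\mathrm{eff}}(T_0+1+b_2)\}\) must therefore be absorbed into the constants.
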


\paragraph{Remarks.}
The logarithmic term is the same time dependence as in strongly convex online optimization, but here it is obtained only after the trajectory has entered the sparse local region. The constant \(G_{\bar s}\) is the bound on the candidate-restricted update direction, so it depends on the sparse/candidate dimension \(\bar s=s+m+s_0\), rather than on the full ambient dimension \(p\).

The regret is stated for the sliding-window objective \(Q_t\), which is exactly the objective used to compute the AIHT update. Extending the bound to instantaneous losses \(\ell_t\) would require an additional comparison argument and is not needed for the main mechanism studied here.

\section{AIHT under Distributional Shift}
\label{sec:piecewise}

The stationary theory in Section~\ref{sec:main-theory} assumes that the sparse target parameter is fixed. We now consider a piecewise-stationary stream in which the target may change at unknown times. A changepoint can invalidate the support coverage, cone stability, and local refinement conditions used in Section~\ref{sec:main-theory}. Thus, after a structural change, the algorithm should return to the support-discovery regime rather than continuing with the old candidate set and the old Phase~II step size.

We handle distributional shift by adding a restart rule to candidate-restricted AIHT. When a change is detected, the algorithm performs a hard restart: it sets the next iterate to zero, flushes the local buffer, resets the local clock and epoch counter, returns to Phase~I, and clears the candidate set and phase-transition diagnostic history. This makes the post-change run a new local instance of AIHT. After enough clean observations from the new segment have accumulated, the local post-burn-in theory from Section~\ref{sec:main-theory} can be applied with the new target and support.

We use an adjacent-window score contrast as the change detector. Let \(h\) be a detection-window length. For \(t\ge 2h\), define
\[
\mathcal I_t^-=\{t-2h+1,\ldots,t-h\},
\qquad
\mathcal I_t^+=\{t-h+1,\ldots,t\}.
\]
For any vector \(\beta\), define the two window scores
\[
\bar g_t^-(\beta)
=
-\frac1h\sum_{i\in\mathcal I_t^-}
X_i\{\tau-\mathbf 1(Y_i\le X_i^\top\beta)\},
\qquad
\bar g_t^+(\beta)
=
-\frac1h\sum_{i\in\mathcal I_t^+}
X_i\{\tau-\mathbf 1(Y_i\le X_i^\top\beta)\}.
\]
The detector evaluates both scores at the frozen reference point \(\breve\beta_t=\beta_{t-2h}\). Since \(\breve\beta_t\) is computed before both detection windows, the detector avoids using the same observations both to construct the reference point and to test for a change. The score contrast is
\begin{equation}
\label{eq:score-contrast}
D_t
=
\|\bar g_t^+(\breve\beta_t)-\bar g_t^-(\breve\beta_t)\|_\infty .
\end{equation}
For a confidence level \(\delta\in(0,1)\), let
\[
\lambda_h
=
C_D\sqrt{\frac{\log(pT/\delta)}{h}},
\qquad
b_h=2\lambda_h,
\]
where \(C_D\) is chosen large enough for the detector concentration event in Lemma~\ref{lem:score-detector-concentration}. The restart rule is
\[
D_t>b_h
\quad\Longrightarrow\quad
\text{hard restart at time }t+1.
\]
\subsection{Piecewise-stationary model and assumptions}
\label{subsec:piecewise-model}

Let
\[
1=\nu_0<\nu_1<\cdots<\nu_K<\nu_{K+1}=T+1
\]
be unknown changepoints. On segment \(k\), the target parameter is \(\theta_k\), and
\[
Y_t=X_t^\top\theta_k+\varepsilon_t,
\qquad
\mathbb P(\varepsilon_t\le0\mid X_t)=\tau,
\qquad
t\in[\nu_k,\nu_{k+1}-1].
\]
Let \(S_k=\operatorname{supp}(\theta_k)\), with \(|S_k|\le s_0\). For segment \(k\), write \(\mathbb E_k\) for expectation under its stationary distribution, and define the population score map
\[
\Psi_k(\beta)
=
\mathbb E_k\left[
-X\{\tau-\mathbf 1(Y\le X^\top\beta)\}
\right].
\]
At the segment truth, \(\Psi_k(\theta_k)=0\).

We use the following segmentwise assumptions.

\begin{description}

\item[(D1)]
On every clean segment, the stationary assumptions {\normalfont (A1)}--{\normalfont (A4)} and the post-burn-in local conditions from Section~\ref{sec:main-theory} hold with common constants after replacing \(\beta^\ast\) by \(\theta_k\) and \(S\) by \(S_k\). In particular, after a hard restart from \(\beta=0\) and after a clean recovery period, candidate-restricted AIHT satisfies the local cone, support-coverage, support-preserving thresholding, curvature, score-control, and direction-boundedness conditions used in Theorems~\ref{thm:two-phase-convergence} and~\ref{thm:regret-aiht}. The constants \(\mu\), \(c_{\mathrm{eff}}\), \(G_{\bar s}\), \(L_{\mathrm{off}}\), \(L_{\mathrm{on}}\), \(c_{\mathrm{small}}\), \(c_{\mathrm{large}}\), and the epoch-mass budget are common across segments. In addition, on the same event, the clean sliding-window excess loss of all iterates produced during the detection delay, discovery phase, and local refinement phase is bounded above by \(M_{\mathrm{loc}}\) per round.

\item[(D2)]
For each changepoint \(\nu_k\), define the score jump at the previous segment target by
\[
\mathfrak J_k
=
\|\Psi_k(\theta_{k-1})-\Psi_{k-1}(\theta_{k-1})\|_\infty
=
\|\Psi_k(\theta_{k-1})\|_\infty .
\]
There exist constants \(r_{\mathrm{det}}>0\) and \(L_{\mathrm{det}}>0\) such that both \(\Psi_{k-1}\) and \(\Psi_k\) are \(L_{\mathrm{det}}\)-Lipschitz in the ball \(B(\theta_{k-1},r_{\mathrm{det}})\):
\[
\|\Psi_\ell(\beta)-\Psi_\ell(\beta')\|_\infty
\le
L_{\mathrm{det}}\|\beta-\beta'\|_2,
\qquad
\ell\in\{k-1,k\}.
\]
The score jump is separated from the stochastic detector fluctuation:
\[
\min_{1\le k\le K}\mathfrak J_k
\ge
4\lambda_h+2L_{\mathrm{det}}r_{\mathrm{det}}.
\]

\item[(D3)]
Let \(T_{\mathrm{rec}}(r_{\mathrm{det}})\) be the number of clean observations after a hard restart needed for candidate-restricted AIHT to refill the sliding window, enter the post-burn-in local regime of Section~\ref{sec:main-theory}, and reach
\[
\|\beta_t-\theta_k\|_2\le r_{\mathrm{det}}.
\]
Each segment length satisfies
\[
\nu_{k+1}-\nu_k
\ge
T_{\mathrm{rec}}(r_{\mathrm{det}})+3h,
\qquad
k=0,\ldots,K.
\]

\end{description}

Assumption (D1) is the segmentwise analogue of the local stationary theory. It says that after a hard restart and a sufficient number of clean observations, the same candidate-restricted AIHT mechanism analyzed in Section~\ref{sec:main-theory} applies on the new segment. Assumption (D2) is a detectability condition: the expected score under the new segment, evaluated near the old target, must change by more than the detector's stochastic fluctuation. Assumption (D3) ensures that each segment is long enough for recovery before the next changepoint and leaves enough room for the frozen-reference detector.

\subsection{Restart-enabled candidate-restricted AIHT}
\label{subsec:restart-aiht}

Let \(\widehat\nu\) denote the latest restart time and let \(a_t=t-\widehat\nu+1\) be the local age. After a restart, AIHT uses local-age step sizes
\[
\eta_t
=
\begin{cases}
\alpha_1(a_t+b_1)^{-1/2}, & \mathsf{PHASE}=\mathrm{I},\\[2mm]
\{c_{\mathrm{eff}}(a_t+b_2)\}^{-1}, & \mathsf{PHASE}=\mathrm{II}.
\end{cases}
\]
Candidate sets, epoch lengths, projected-gradient statistics, and hard-thresholding steps are computed as in Algorithm~\ref{alg:aiht-adaptive}, but using only observations in the local buffer after the latest restart. The detector is evaluated only when \(a_t\ge2h\), so that both adjacent detection windows lie after the most recent restart.

\begin{algorithm}[htbp]
\caption{Restart-enabled candidate-restricted AIHT}
\label{alg:restart-mass-aiht}
\begin{algorithmic}[1]
\State \textbf{Input:} working sparsity \(s\), candidate size \(m\), detection window \(h\), threshold \(b_h=2C_D\sqrt{\log(pT/\delta)/h}\), and the AIHT tuning parameters from Algorithm~\ref{alg:aiht-adaptive}.
\State Initialize \(\beta_1=0\), \(\mathsf{PHASE}=\mathrm{I}\), latest restart time \(\widehat\nu=1\), local age \(a_1=1\), and the first local epoch.
\For{\(t=1,2,\ldots,T\)}
    \State Observe \((X_t,Y_t)\) and update the local buffer.
    \If{\(a_t\ge2h\)}
        \State Set \(\breve\beta_t=\beta_{t-2h}\) and compute \(D_t\) from \eqref{eq:score-contrast}.
        \If{\(D_t>b_h\)}
            \State Hard restart at \(t+1\): set \(\beta_{t+1}=0\), \(\mathsf{PHASE}=\mathrm{I}\), flush the local buffer, set \(\widehat\nu=t+1\), reset the local age, reset the local epoch counter and epoch length, and clear the candidate set and phase-transition diagnostic history.
            \State Continue to the next time point.
        \EndIf
    \EndIf
    \State Perform one candidate-restricted AIHT update as in Algorithm~\ref{alg:aiht-adaptive}, using the local age \(a_t\) in the step-size schedule.
\EndFor
\end{algorithmic}
\end{algorithm}

The hard restart convention is mainly for theoretical clarity. It prevents obsolete support coordinates from the previous segment from being carried into the new local cone. Warm-started restarts may be useful in practice, but their analysis requires an additional basin-of-attraction condition for the post-change support.

The next lemma is the concentration event used by the detector. It is stated conditionally on the frozen reference point, which is measurable before both detection windows.

\begin{lemma}[Detector score concentration]
\label{lem:score-detector-concentration}
Under {\normalfont (D1)}, with probability at least \(1-\delta\), the following holds simultaneously for all \(t\le T\). If \(\mathcal I_t^\pm\) is contained in segment \(k_\pm\), then
\[
\left\|
\bar g_t^\pm(\breve\beta_t)-\Psi_{k_\pm}(\breve\beta_t)
\right\|_\infty
\le
\frac{\lambda_h}{2}.
\]
\end{lemma}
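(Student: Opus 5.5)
The plan is to fix a time \(t\) and a side \(\pm\), condition on the frozen reference point, apply a coordinatewise bounded-difference inequality, and then take a union bound over coordinates, sides, and times.

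\textbf{Step 1 (conditioning).} Fix \(t\le T\) and a side \(\pm\) with \(\mathcal I_t^\pm\) contained in segment \(k_\pm\). The reference point \(\breve\beta_t=\beta_{t-2h}\) is a function only of observations indexed up to \(t-2h\), together with possible restart decisions made by then. Both detection windows use indices strictly after \(t-2h\). Conditioning on \(\mathcal F_{t-2h}\), the pairs \((X_i,Y_i)\), \(i\in\mathcal I_t^\pm\), are therefore i.i.d. from the stationary law of segment \(k_\pm\), and they are independent of \(\breve\beta_t\). Treating \(\breve\beta_t=\beta\) as deterministic, each coordinate
\[
\bar g^\pm_{t,\ell}(\beta)-\Psi_{k_\pm,\ell}(\beta)=-\frac1h\sum_{i\in\mathcal I_t^\pm}\Bigl[X_{i\ell}\{\tau-\mathbf 1(Y_i\le X_i^\top\beta)\}-\mathbb E_{k_\pm}[\cdot]\Bigr]
\]
is an average of \(h\) i.i.d. mean-zero terms.

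\textbf{Step 2 (coordinatewise concentration).} Each summand is \(X_{i\ell}\) times a factor bounded by \(\max(\tau,1-\tau)\le1\). By (A1), which holds segmentwise with common constants under (D1), \(X_{i\ell}\) is sub-Gaussian with \(\|X_{i\ell}\|_{\psi_2}\le K\). The product is then sub-Gaussian with \(\psi_2\)-norm at most \(K\), and centering inflates this by at most an absolute constant. A Hoeffding-type inequality for sub-Gaussian averages gives
\[
\mathbb P\Bigl(\bigl|\bar g^\pm_{t,\ell}(\beta)-\Psi_{k_\pm,\ell}(\beta)\bigr|>u\Bigr)\le 2\exp\bigl(-c\,h u^2/K^2\bigr).
\]
This bound is uniform in \(\beta\), since the indicator enters only through a \([-1,1]\)-valued multiplier.

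\textbf{Step 3 (union bound).} Set \(u=\lambda_h/2=(C_D/2)\sqrt{\log(pT/\delta)/h}\). The conditional failure probability for a single \((t,\pm,\ell)\) is at most \(2\exp\{-cC_D^2\log(pT/\delta)/(4K^2)\}\). Integrating over \(\mathcal F_{t-2h}\) gives the same unconditional bound. We then take a union over \(p\) coordinates, two sides, and at most \(T\) times, which is \(4pT\) events. Choosing \(C_D^2\ge 8K^2/c\) gives total probability at most \(4pT(\delta/pT)^2\le\delta\) when \(pT\ge4\); otherwise we enlarge \(C_D\) slightly. The restriction to windows lying inside a single segment and after the latest restart is harmless: it only removes events from the union.

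\textbf{Main obstacle.} The delicate point is the measurability in Step 1. Restarts are triggered by detector statistics that use observations up to time \(t\), so both the local buffer and the time at which the detector is evaluated (\(a_t\ge2h\)) are random. I would handle this by defining the good event over all deterministic pairs \((t,\pm)\) with the reference point \(\beta_{t-2h}\). This quantity is \(\mathcal F_{t-2h}\)-measurable regardless of whether the algorithm actually evaluates the detector at \(t\). The bound then applies simultaneously to whatever times the algorithm does use, and no conditioning on future restart events is needed.
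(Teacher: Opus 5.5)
Your proposal is correct and follows essentially the same route as the paper's proof. Both condition on the frozen reference point $\breve\beta_t=\beta_{t-2h}$, apply a coordinatewise sub-Gaussian Hoeffding bound that is uniform in the reference point because the indicator factor is bounded, and take a union bound over coordinates, the two sides, and all $t\le T$. Your treatment of the random restart times, by defining the good event over all deterministic pairs $(t,\pm)$, is slightly more careful than the paper's, and so is your explicit choice of $C_D$. One small slip: the union is over $2pT$ events, not $4pT$, each with failure probability at most $2(\delta/pT)^2$; your final total $4pT(\delta/pT)^2$ is nonetheless correct.
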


\begin{lemma}[No false restart on clean adjacent windows]
\label{lem:no-false-clean}
On the event of Lemma~\ref{lem:score-detector-concentration}, if both adjacent windows \(\mathcal I_t^-\) and \(\mathcal I_t^+\) are contained in the same segment, then
\[
D_t\le \lambda_h<b_h.
\]
Thus the detector does not restart on clean adjacent windows.
\end{lemma}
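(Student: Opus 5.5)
The argument is a triangle inequality through the common population score, followed by the concentration event of Lemma~\ref{lem:score-detector-concentration}. Suppose both adjacent windows \(\mathcal I_t^-\) and \(\mathcal I_t^+\) lie in the same segment \(k\), so that \(k_-=k_+=k\). Both window scores are evaluated at the same frozen reference point \(\breve\beta_t=\beta_{t-2h}\), and the map \(\Psi_k\) is identical for the two windows. We can therefore insert \(\Psi_k(\breve\beta_t)\) and write
\[
D_t
=
\bigl\|\bar g_t^+(\breve\beta_t)-\Psi_k(\breve\beta_t)
-\bigl\{\bar g_t^-(\breve\beta_t)-\Psi_k(\breve\beta_t)\bigr\}\bigr\|_\infty
\le
\bigl\|\bar g_t^+(\breve\beta_t)-\Psi_k(\breve\beta_t)\bigr\|_\infty
+
\bigl\|\bar g_t^-(\breve\beta_t)-\Psi_k(\breve\beta_t)\bigr\|_\infty .
\]

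On the event of Lemma~\ref{lem:score-detector-concentration}, which holds simultaneously for all \(t\le T\), each of the two terms is at most \(\lambda_h/2\). This gives \(D_t\le\lambda_h\). Because \(\lambda_h>0\), we have \(b_h=2\lambda_h>\lambda_h\ge D_t\). The restart condition \(D_t>b_h\) is therefore never met on clean adjacent windows, and no restart is triggered.

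The only point that needs care is that the concentration lemma applies at the random point \(\breve\beta_t\). This is exactly why the lemma is stated conditionally on the frozen reference point: \(\breve\beta_t=\beta_{t-2h}\) is determined by data observed before both detection windows, so it is measurable before \(\mathcal I_t^-\cup\mathcal I_t^+\). Nothing else in the argument is delicate. In particular, no Lipschitz or curvature property of \(\Psi_k\) is needed, because both scores are compared with the same population quantity at the same point.
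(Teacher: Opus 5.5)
Your proof is correct and is the same as the paper's. Both insert the common population score \(\Psi_k(\breve\beta_t)\), apply the triangle inequality, bound each term by \(\lambda_h/2\) on the concentration event, and conclude \(D_t\le\lambda_h<2\lambda_h=b_h\).
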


\begin{lemma}[Detection after recovery]
\label{lem:detect-change}
Fix a changepoint \(\nu_k\) and set \(t_k=\nu_k+h-1\). Suppose the frozen reference point satisfies
\[
\|\breve\beta_{t_k}-\theta_{k-1}\|_2\le r_{\mathrm{det}}.
\]
Then, under {\normalfont (D2)}, on the event of Lemma~\ref{lem:score-detector-concentration},
\[
D_{t_k}>b_h.
\]
Consequently, the changepoint is detected no later than \(\nu_k+h\).
\end{lemma}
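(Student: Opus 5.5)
The plan is a triangle-inequality argument on the two adjacent windows at \(t_k=\nu_k+h-1\). By construction \(\mathcal I_{t_k}^-=\{\nu_k-h,\ldots,\nu_k-1\}\) lies entirely in segment \(k-1\), and \(\mathcal I_{t_k}^+=\{\nu_k,\ldots,\nu_k+h-1\}\) lies entirely in segment \(k\). Assumption (D3) guarantees that neither window crosses another changepoint, because each segment has length at least \(3h\). Also, \(\breve\beta_{t_k}=\beta_{t_k-2h}\) is computed before both windows. Hence Lemma~\ref{lem:score-detector-concentration} applies with \(k_-=k-1\) and \(k_+=k\), and gives
\[
\|\bar g_{t_k}^-(\breve\beta_{t_k})-\Psi_{k-1}(\breve\beta_{t_k})\|_\infty\le\lambda_h/2,
\qquad
\|\bar g_{t_k}^+(\breve\beta_{t_k})-\Psi_{k}(\breve\beta_{t_k})\|_\infty\le\lambda_h/2 .
\]

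Next I lower bound the population contrast at the reference point. Since \(\|\breve\beta_{t_k}-\theta_{k-1}\|_2\le r_{\mathrm{det}}\), the reference point lies in the ball \(B(\theta_{k-1},r_{\mathrm{det}})\). The Lipschitz property in (D2) then gives
\[
\|\Psi_\ell(\breve\beta_{t_k})-\Psi_\ell(\theta_{k-1})\|_\infty\le L_{\mathrm{det}}r_{\mathrm{det}},
\qquad \ell\in\{k-1,k\}.
\]
Using \(\Psi_{k-1}(\theta_{k-1})=0\) and the definition of \(\mathfrak J_k\), I obtain
\[
\|\Psi_k(\breve\beta_{t_k})-\Psi_{k-1}(\breve\beta_{t_k})\|_\infty\ge \mathfrak J_k-2L_{\mathrm{det}}r_{\mathrm{det}}\ge 4\lambda_h .
\]
The last inequality uses the separation condition in (D2).

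Combining the two steps by the reverse triangle inequality gives
\[
D_{t_k}\ge 4\lambda_h-\lambda_h/2-\lambda_h/2=3\lambda_h>2\lambda_h=b_h .
\]
So the restart rule fires at time \(t_k\), and a hard restart is performed at \(t_k+1=\nu_k+h\). If an earlier restart already occurred after \(\nu_k\), detection has happened even sooner, so the changepoint is detected no later than \(\nu_k+h\) in either case.

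The main delicate step is checking that the detector is actually evaluated at \(t_k\) with the intended windows. This requires the local age to satisfy \(a_{t_k}\ge 2h\), meaning no restart occurred within the preceding \(2h\) steps. Otherwise the statement is satisfied trivially, or the windows would need re-indexing. I would argue this using Lemma~\ref{lem:no-false-clean}: no false restarts occur on clean windows inside segment \(k-1\). Together with (D3), which makes the segment long enough for recovery plus \(3h\), this keeps the last restart at least \(2h\) before \(t_k\). The hypothesis \(\|\breve\beta_{t_k}-\theta_{k-1}\|_2\le r_{\mathrm{det}}\) is taken as given, so that step needs no further argument.
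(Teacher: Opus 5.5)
Your proposal is correct and follows the paper's proof essentially step for step. You place the two windows at \(t_k\) in segments \(k-1\) and \(k\), use the Lipschitz bound in (D2) to get \(\|\Psi_k(\breve\beta_{t_k})-\Psi_{k-1}(\breve\beta_{t_k})\|_\infty\ge\mathfrak J_k-2L_{\mathrm{det}}r_{\mathrm{det}}\ge4\lambda_h\), and subtract the two \(\lambda_h/2\) concentration errors to reach \(D_{t_k}\ge3\lambda_h>b_h\). Your additional checks (window containment via (D3), and the local-age condition \(a_{t_k}\ge2h\) via Lemma~\ref{lem:no-false-clean}) are left implicit in the paper's proof of this lemma and are only touched on in the segment-by-segment induction of Theorem~\ref{thm:segmentwise-restart}; they add care without changing the argument.
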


For the regret statement, define the clean sliding-window objective on segment \(k\). For \(t\in[\nu_k,\nu_{k+1}-1]\), let
\[
\mathcal I_{t,k}^{\mathrm{clean}}
=
\{i:\max(\nu_k,t-W+1)\le i\le t\},
\qquad
W_{t,k}^{\mathrm{clean}}=|\mathcal I_{t,k}^{\mathrm{clean}}|.
\]
Then
\[
Q_{t,k}^{\mathrm{clean}}(\beta)
=
\frac{1}{W_{t,k}^{\mathrm{clean}}}
\sum_{i\in\mathcal I_{t,k}^{\mathrm{clean}}}
\rho_\tau(Y_i-X_i^\top\beta).
\]
The dynamic regret relative to the piecewise-stationary target is
\[
\mathcal R_T^{\mathrm{dyn}}
=
\sum_{k=0}^{K}
\sum_{t=\nu_k}^{\nu_{k+1}-1}
\left\{
Q_{t,k}^{\mathrm{clean}}(\beta_t)
-
Q_{t,k}^{\mathrm{clean}}(\theta_k)
\right\}.
\]

Let \(\widehat\nu_0=1\), and for \(k\ge1\), let \(\widehat\nu_k\) denote the restart time associated with changepoint \(\nu_k\).

\begin{theorem}[Segmentwise recovery and dynamic regret]
\label{thm:segmentwise-restart}
Under {\normalfont (D1)}--{\normalfont (D3)}, restart-enabled candidate-restricted AIHT satisfies the following statements with probability at least \(1-\delta\).

\textnormal{(i) Restart delay.}
For every \(k=1,\ldots,K\), the restart associated with segment \(k\) occurs within \(h\) observations:
\[
\nu_k\le \widehat\nu_k\le \nu_k+h.
\]

\textnormal{(ii) Segmentwise recovery.}
After the restart associated with segment \(k\), the local run of AIHT obeys the same two-phase recovery behavior as in Theorem~\ref{thm:two-phase-convergence}, with \(\theta_k\) and \(S_k\) replacing \(\beta^\ast\) and \(S\). In particular, after the clean recovery period, during the local Phase~II refinement regime,
\[
\|\beta_t-\theta_k\|_2^2
\le
C\frac{\log(a_t^{(k)}+b_2)}{a_t^{(k)}+b_2},
\qquad
a_t^{(k)}=t-\widehat\nu_k+1.
\]

\textnormal{(iii) Dynamic regret.}
Let \(\mathcal R_{\mathrm I}\) denote the finite discovery-phase regret cost of one clean local run after restart, including buffer refill, burn-in, support discovery, and transition to the local regime. Then
\begin{equation}
\label{eq:dynamic-regret-main}
\mathcal R_T^{\mathrm{dyn}}
\le
KhM_{\mathrm{loc}}
+
(K+1)\mathcal R_{\mathrm I}
+
\frac{G_{\bar s}^2}{2c_{\mathrm{eff}}}
\sum_{k=0}^{K}
\log\{1+\nu_{k+1}-\widehat\nu_k\}.
\end{equation}
Consequently,
\begin{equation}
\label{eq:dynamic-regret-simplified}
\mathcal R_T^{\mathrm{dyn}}
\le
KhM_{\mathrm{loc}}
+
(K+1)\mathcal R_{\mathrm I}
+
\frac{G_{\bar s}^2}{2c_{\mathrm{eff}}}
(K+1)\log(T+1).
\end{equation}
\end{theorem}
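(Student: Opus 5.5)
The proof works on the intersection of the detector concentration event of Lemma~\ref{lem:score-detector-concentration} with the segmentwise local events of (D1), with the failure probability split between them. It then proceeds by induction over the segments $k=0,1,\ldots,K$. The inductive hypothesis at stage $k$ has three parts: the algorithm performs a hard restart at some $\widehat\nu_k\in[\nu_k,\nu_k+h]$; there is no restart in $(\widehat\nu_k,\nu_{k+1})$; and by time $\nu_{k+1}-2h$ the iterate satisfies $\|\beta_t-\theta_k\|_2\le r_{\mathrm{det}}$. The base case is $k=0$, with $\widehat\nu_0=1$.

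\textbf{No false restarts within a segment.} After the restart $\widehat\nu_k$, the detector is evaluated only once the local age reaches $2h$. Both windows $\mathcal I_t^\pm$ then lie in $[\widehat\nu_k,t]\subseteq[\nu_k,\nu_{k+1}-1]$ as long as $t<\nu_{k+1}$. Lemma~\ref{lem:no-false-clean} therefore gives $D_t\le\lambda_h<b_h$, so no restart occurs inside the segment.

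\textbf{Recovery.} The local run from $\widehat\nu_k$ is a clean instance of AIHT restarted at $\beta=0$, with a flushed buffer and a reset clock. By (D1) it satisfies the post-burn-in conditions, so Theorems~\ref{thm:cone-invariance} and~\ref{thm:two-phase-convergence} apply with $(\theta_k,S_k)$ in place of $(\beta^\ast,S)$ and with time measured by the local age $a_t^{(k)}$. This gives part (ii). Since $\widehat\nu_k\le\nu_k+h$, (D3) yields
\[
\nu_{k+1}-2h-\widehat\nu_k\ \ge\ T_{\mathrm{rec}}(r_{\mathrm{det}}).
\]
I still need to check that the reference iterate stays within $r_{\mathrm{det}}$ after it first enters that ball. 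In Phase~II this follows from the monotone bound \eqref{eq:phaseII-rate}, once the constant is absorbed into the definition of $T_{\mathrm{rec}}$.

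\textbf{Detection and the next restart.} Take $t_{k+1}=\nu_{k+1}+h-1$. The frozen reference is $\breve\beta_{t_{k+1}}=\beta_{\nu_{k+1}-h-1}$. This time lies within segment $k$, after recovery, and before any possible restart (the $3h$ slack in (D3) accommodates the off-by-one). Hence $\|\breve\beta_{t_{k+1}}-\theta_k\|_2\le r_{\mathrm{det}}$, and Lemma~\ref{lem:detect-change} gives $D_{t_{k+1}}>b_h$. So a restart occurs at some time no later than $\nu_{k+1}+h$.

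I also need the restart to come no earlier than $\nu_{k+1}$. Before $\nu_{k+1}$, all evaluated windows are clean, and Lemma~\ref{lem:no-false-clean} rules out a restart. Restarts triggered in $[\nu_{k+1},\nu_{k+1}+h]$ by windows straddling the changepoint are allowed; the first such one defines $\widehat\nu_{k+1}$. This closes the induction and proves (i).

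One subtlety remains: a restart at $\widehat\nu_{k+1}$ must not be followed by a spurious second restart. This holds because after $\widehat\nu_{k+1}$ the detector waits for local age $2h$, so its windows contain only post-restart data, which lie in segment $k+1$.

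\textbf{Dynamic regret.} Split the rounds of segment $k$ into two parts.
\begin{itemize}
\item The delay rounds $[\nu_k,\widehat\nu_k)$ number at most $h$. By (D1) each contributes at most $M_{\mathrm{loc}}$ in clean excess loss, for a total of at most $KhM_{\mathrm{loc}}$ over $k\ge1$.
\item The post-restart rounds $[\widehat\nu_k,\nu_{k+1})$ form a clean local run. Apply Theorem~\ref{thm:regret-aiht} to it, using local time and the clean window objective $Q^{\mathrm{clean}}_{t,k}$, which coincides with the local sliding-window objective because the buffer was flushed. This bounds the run's regret by $\mathcal R_{\mathrm I}$ plus
\[
\frac{G_{\bar s}^2}{2c_{\mathrm{eff}}}\log(1+\nu_{k+1}-\widehat\nu_k),
\]
after bounding the ratio inside the logarithm crudely.
\end{itemize}
Summing over $k$ gives \eqref{eq:dynamic-regret-main}. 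Bounding $\nu_{k+1}-\widehat\nu_k\le T$ then gives \eqref{eq:dynamic-regret-simplified}.

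\textbf{Main obstacle.} I expect the hardest part to be the measurability and independence bookkeeping. Lemma~\ref{lem:detect-change} and the (D1) events are used along a data-dependent restart sequence, so the union bound must be taken over all possible restart times up to $T$; the $\log(pT/\delta)$ factor in $\lambda_h$ is what pays for this. I would handle it by defining the concentration events for every candidate restart time in advance and intersecting them.
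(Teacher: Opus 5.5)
Your proposal is correct and follows the paper's proof essentially step for step. It uses the same intersection of the detector-concentration and (D1) events, and the same segment-by-segment induction: Lemma~\ref{lem:no-false-clean} gives $\widehat\nu_k\ge\nu_k$, while (D3) with Lemma~\ref{lem:detect-change} at $t_k=\nu_k+h-1$ gives $\widehat\nu_k\le\nu_k+h$. It also uses the same delay/discovery/refinement split of $\mathcal R_T^{\mathrm{dyn}}$ via Theorems~\ref{thm:two-phase-convergence} and~\ref{thm:regret-aiht}.

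Your extra checks are details the paper absorbs into (D1) and (D3): no second restart before local age $2h$, the reference iterate staying in rather than merely reaching the $r_{\mathrm{det}}$-ball, and events indexed by data-dependent restart times. One small correction: for $k\ge1$, $Q^{\mathrm{clean}}_{t,k}$ coincides with the flushed-buffer objective only once $t-W+1\ge\widehat\nu_k$, so the mismatch during buffer refill must be charged to $\mathcal R_{\mathrm I}$ rather than treated as zero.
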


\paragraph{Interpretation.}
The bound is the stationary regret theorem applied segment by segment, plus the detection-delay cost. Each changepoint contributes at most \(h\) observations before restart. After restart, the algorithm discards stale data, re-enters support discovery, rebuilds candidate sets, and then returns to stable local refinement once the new support is covered. Thus concept drift is handled by restarting the same two-phase mechanism rather than by solving a new batch problem.

\section{Simulation}
\label{sec:simulation}

We evaluate AIHT in three complementary experiments. The first experiment compares estimation accuracy and robustness against two standard online baselines: dense online SGD and Truncated Gradient (TG; \citealp{langford2009sparse}). The second experiment isolates the role of the hard-thresholding schedule by comparing AIHT with every-step and fixed-period hard-thresholding variants. The third experiment evaluates the restart-enabled version of AIHT under distributional shift.

Online SGD is included as a dense baseline that uses the same quantile subgradient information but does not enforce sparsity. TG is included as a soft-thresholding sparse online comparator. The schedule-ablation and distributional-shift experiments focus more directly on the hard-thresholding mechanism studied in Sections~\ref{sec:methodology}--\ref{sec:piecewise}.

\begin{figure*}[htbp]
    \centering
    \includegraphics[width=1\linewidth]{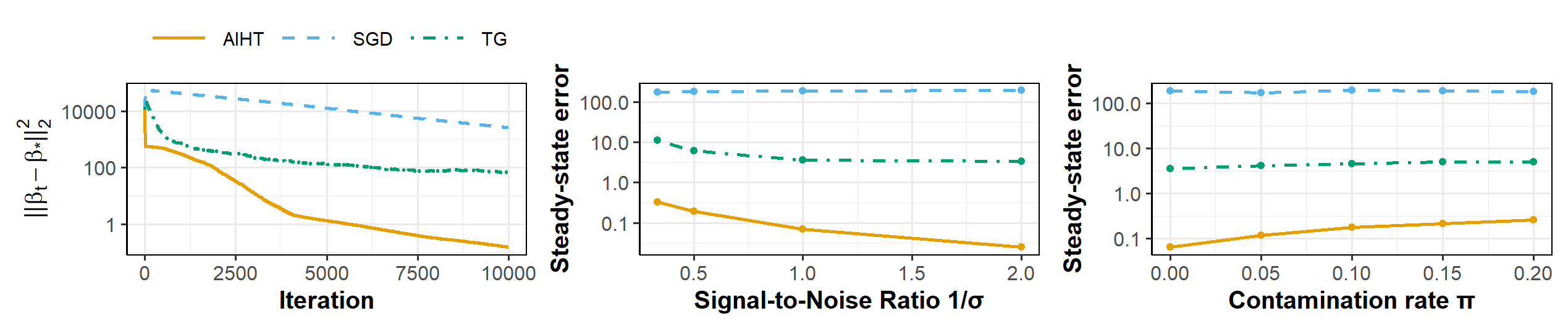}
    \caption{Performance of AIHT, Online SGD, and TG. Left: instantaneous estimation error
    \(\|\beta_t-\beta^\ast\|_2^2\) over \(10{,}000\) iterations. Middle: steady-state error
    (mean of the last \(2000\) iterates) as a function of the signal-to-noise ratio \(1/\sigma\).
    Right: steady-state error under Huber \(\pi\)-contamination. AIHT converges faster and attains
    substantially smaller steady-state error across all noise levels and contamination rates.}
    \label{fig:sim02}
\end{figure*}

\subsection{Experimental setup}
\label{subsec:simulation-setup}

Following model~\eqref{model}, we simulate predictors
\[
X_t\sim N(0,I_p)
\]
with ambient dimension \(p=2000\) and true sparsity \(s_0=20\). The nonzero coefficients are generated as
\[
\beta_i^\ast=5+\xi_i,\qquad i\le s_0,
\]
where \(\xi_i\) are small independent perturbations. We set the target quantile level to \(\tau=0.5\) and run each method for \(T_{\max}=10^4\) iterations.

The baseline noise distribution is
\[
\varepsilon_t\sim N(-q_\tau\sigma,\sigma^2),
\]
where \(q_\tau\) is the \(\tau\)-quantile of a standard normal random variable. This centering ensures that the \(\tau\)-quantile of \(\varepsilon_t\) is zero. Since \(\tau=0.5\) in the simulations, \(q_\tau=0\). To evaluate robustness to heavy-tailed contamination, we also consider Huber \(\pi\)-contamination: with probability \(1-\pi\), the noise is drawn from the baseline normal distribution, and with probability \(\pi\), it is drawn from a scaled Student-\(t_2\) distribution with scale factor \(5\sigma\). The contamination level is varied over
\[
\pi\in\{0,0.05,0.10,0.15,0.20\}.
\]

All methods use the same quantile subgradient information within each experiment. In the main accuracy and robustness comparison, the update uses the one-observation online quantile subgradient
\[
g_t(\beta_t)
=
-X_t\{\tau-\mathbf 1(Y_t\le X_t^\top\beta_t)\}.
\]
The schedule-ablation and distributional-shift experiments below use the sliding-window subgradient in \eqref{move_grad}, matching the implementation in Algorithm~\ref{alg:aiht-adaptive}.

For AIHT, we use \(\eta_0=5\), \(\alpha_1=\eta_0\), \(b_1=0\), \(\alpha_2=\eta_0\), \(b_2=50\), initial epoch length \(k_1=20\), and shrinkage factor \(\gamma=0.9\). Hard-thresholding methods use working sparsity \(s=40\). Online SGD uses the same quantile subgradient with step size \(\eta_0/\sqrt{t}\), but does not impose sparsity. TG is implemented as an online soft-thresholding baseline,
\[
\beta^{t+1}
=
\operatorname{shrink}(\beta^t-\eta_t g_t,\lambda_t),
\qquad
\lambda_t=0.05\,\eta_0/\sqrt{t},
\]
where the shrinkage operator is applied coordinatewise as
\[
[\operatorname{shrink}(z,\lambda)]_j
=
\operatorname{sign}(z_j)(|z_j|-\lambda)_+ .
\]

Performance is measured using the instantaneous estimation error
\[
\|\beta_t-\beta^\ast\|_2^2,
\]
together with a steady-state error defined as the average of \(\|\beta_t-\beta^\ast\|_2^2\) over the final \(2000\) iterations. The steady-state metric summarizes long-run accuracy after transient effects have diminished. We examine robustness under noise levels
\[
\sigma\in\{0.5,1,2,3\},
\]
which are displayed through the signal-to-noise ratio \(1/\sigma\), and under contamination levels
\[
\pi\in\{0,0.05,0.10,0.15,0.20\}.
\]

\subsection{Simulation results}
\label{subsec:simulation-results}

Figure~\ref{fig:sim02} reports the full error trajectory, the signal-to-noise sweep, and the contamination sweep. AIHT converges more rapidly than Online SGD and TG and attains substantially smaller steady-state error. The dense SGD baseline remains inaccurate because it does not exploit sparsity, while TG improves over dense SGD but is less effective than hard-thresholded sparse projection. Under heavier noise and increasing contamination, AIHT remains stable, illustrating the benefit of combining quantile subgradients with adaptive sparsity control.

We next isolate the algorithmic role of the hard-thresholding schedule. All methods in this ablation use the same sliding-window quantile subgradient, but differ in when the sparse projection is applied. The ablation uses
\[
p=400,\qquad s_0=15,\qquad s=30,\qquad W=60,\qquad T=5000.
\]
The covariates have an AR(1) covariance structure with
\[
\Sigma_{jk}=0.5^{|j-k|},
\]
and the noise follows a scaled Student-\(t_3\) distribution. AIHT uses the two-phase thresholding schedule from Algorithm~\ref{alg:aiht-adaptive}: in Phase~I it thresholds every \(50\) observations, and after the stabilization point \(T_0=1800\), it uses the Phase~II rule
\[
k_{j+1}=\max\{4,\lfloor0.88 k_j\rfloor\}.
\]
The fixed-period comparator keeps \(k_j\equiv 50\), while the every-step comparator applies \(H_s\) after each update. We fix \(T_0\) in this ablation to separate the effect of the thresholding schedule from the tuning of the phase-transition diagnostic.

For methods whose raw iterates are not exactly \(s\)-sparse at every time, the plotted post-thresholded error uses \(H_s(\beta_t)\). The cumulative excess check loss is computed as
\[
\sum_{u=1}^t
\left\{
\rho_\tau(Y_u-X_u^\top \widehat\beta_u)
-
\rho_\tau(Y_u-X_u^\top \beta^\ast)
\right\},
\]
where \(\widehat\beta_u\) denotes the estimator used for prediction at time \(u\); for the post-thresholded curves, \(\widehat\beta_u=H_s(\beta_u)\).

\begin{figure*}[htbp]
    \centering
    \includegraphics[width=1\linewidth]{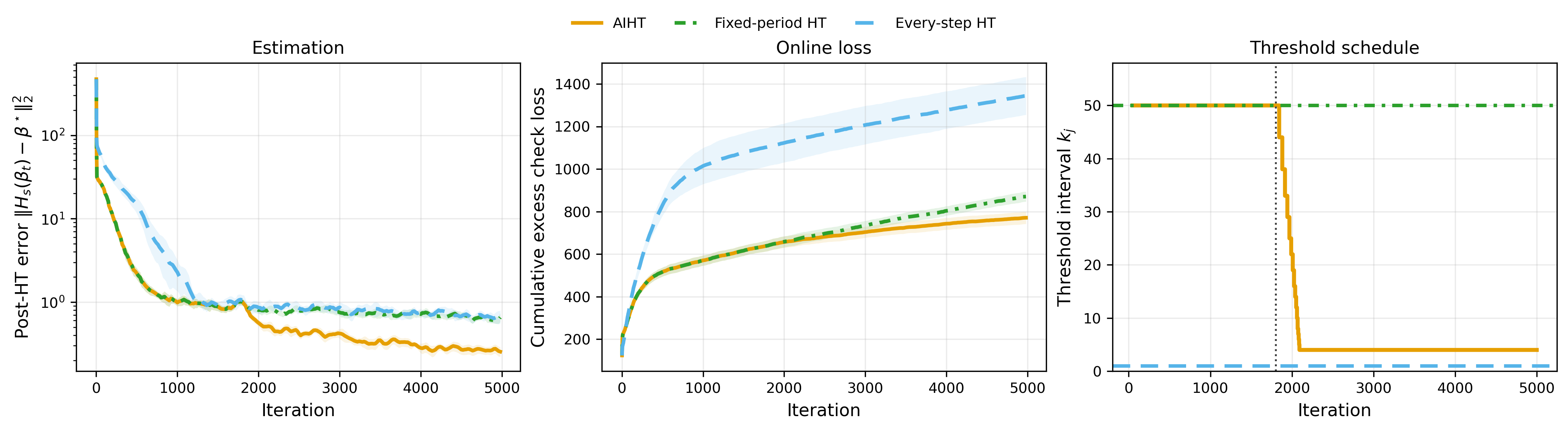}
    \caption{Threshold-scheduling ablation over \(12\) Monte Carlo replications.
    Left: post-thresholded estimation error.
    Middle: cumulative excess quantile check loss relative to the true sparse parameter.
    Right: threshold interval used by each method. Shaded bands show approximate \(95\%\)
    Monte Carlo standard-error intervals. AIHT keeps thresholding sufficiently delayed during
    support discovery and then increases projection frequency during refinement, leading to lower
    final error and smaller cumulative excess loss than fixed-period or every-step thresholding.}
    \label{fig:schedule-ablation}
\end{figure*}

Figure~\ref{fig:schedule-ablation} supports the epoch-mass mechanism developed in Section~\ref{subsec:support-entry-failure}. Every-step hard thresholding enforces sparsity aggressively but suppresses support entry, leading to larger error and larger cumulative excess loss. Fixed-period thresholding improves support discovery early in the run, but its projection interval does not adapt to the decaying step size. AIHT delays thresholding during support discovery and then increases the projection frequency during local refinement, producing the lowest final error among the hard-thresholding variants.

\paragraph{Distributional-shift experiment.}
Finally, we evaluate restart-enabled AIHT in a piecewise-stationary setting with one changepoint at \(t=2000\). The simulation uses \(p=200\), two disjoint supports of size \(s_0=10\), and \(T=4000\) observations. The restart detector is based on the adjacent-window score contrast in \eqref{eq:score-contrast}. For numerical stability, the implementation uses a persistence rule and cooldown period to avoid repeated restarts caused by short-lived score fluctuations; this practical modification is used only in the simulation, while the theory in Section~\ref{sec:piecewise} is stated for the simpler single-threshold detector.

All reported errors in this experiment are computed using the post-thresholded sparse estimator. We compare Restart-AIHT with four baselines: AIHT without restart, global periodic IHT, global every-step IHT, and restart-enabled every-step IHT.

\FloatBarrier

\begin{figure*}[htbp]
    \centering
    \begin{minipage}[t]{0.49\textwidth}
        \centering
        \includegraphics[width=\linewidth]{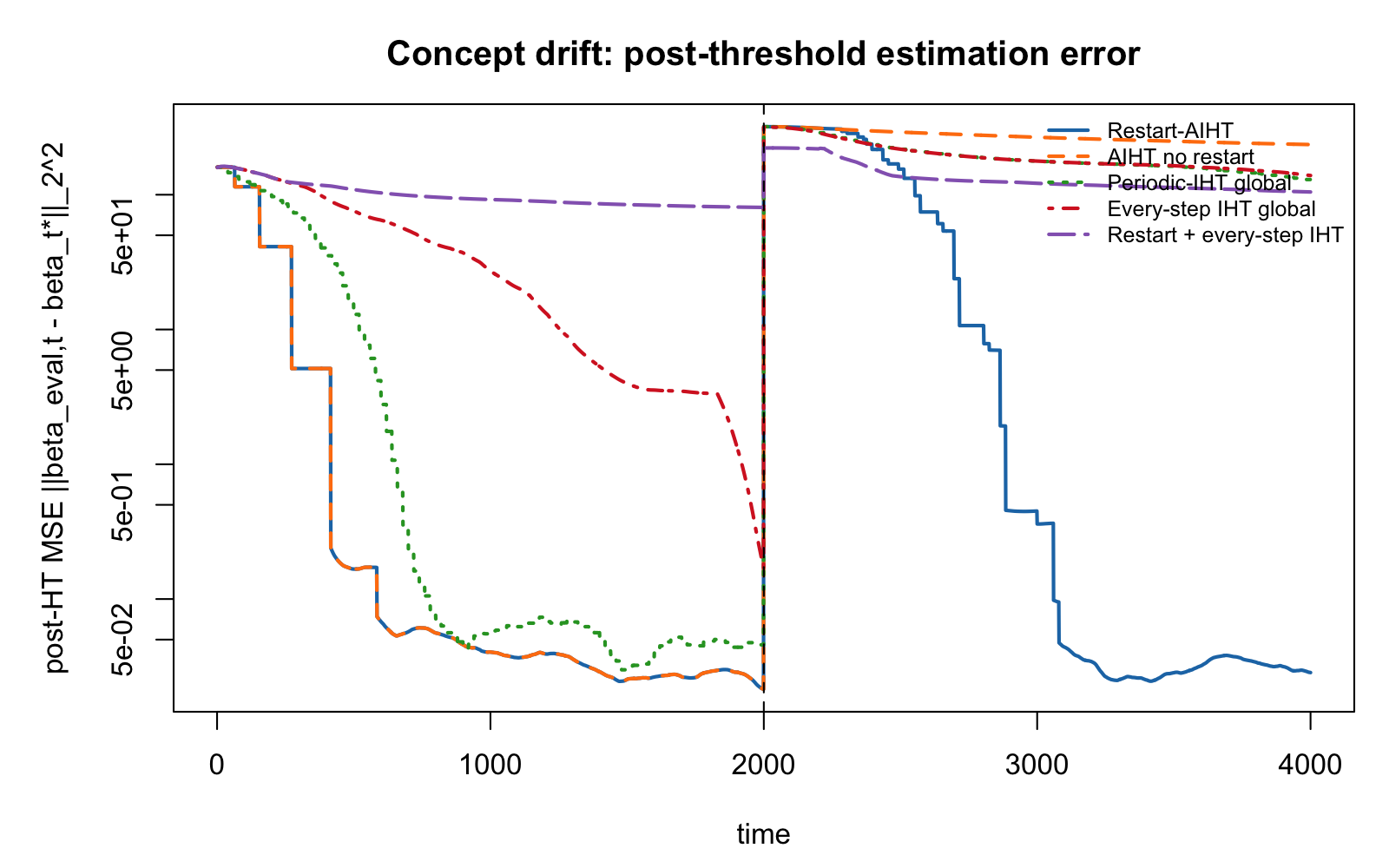}
    \end{minipage}\hfill
    \begin{minipage}[t]{0.49\textwidth}
        \centering
        \includegraphics[width=\linewidth]{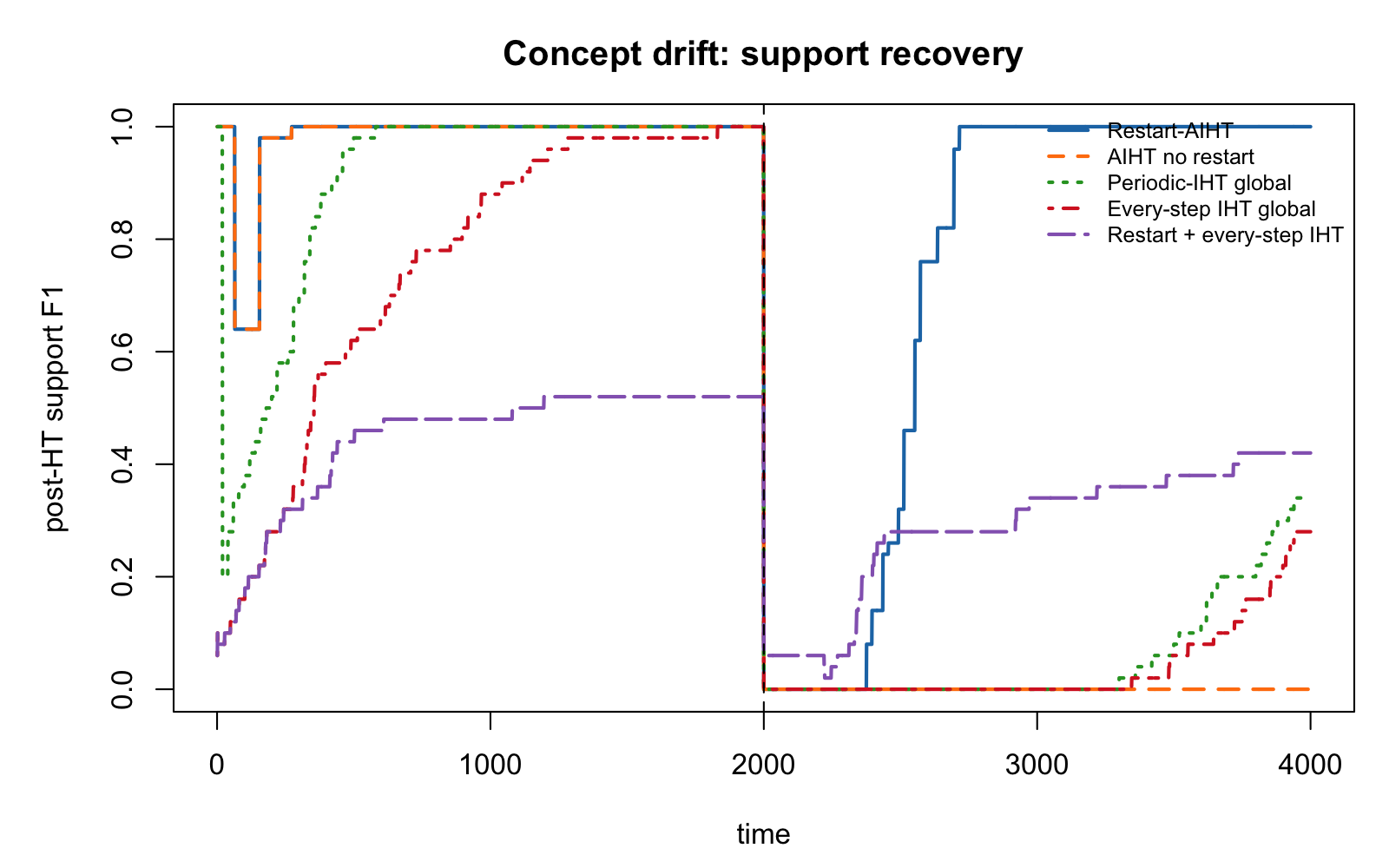}
    \end{minipage}
    \vspace{0.5em}

    \includegraphics[width=0.66\textwidth]{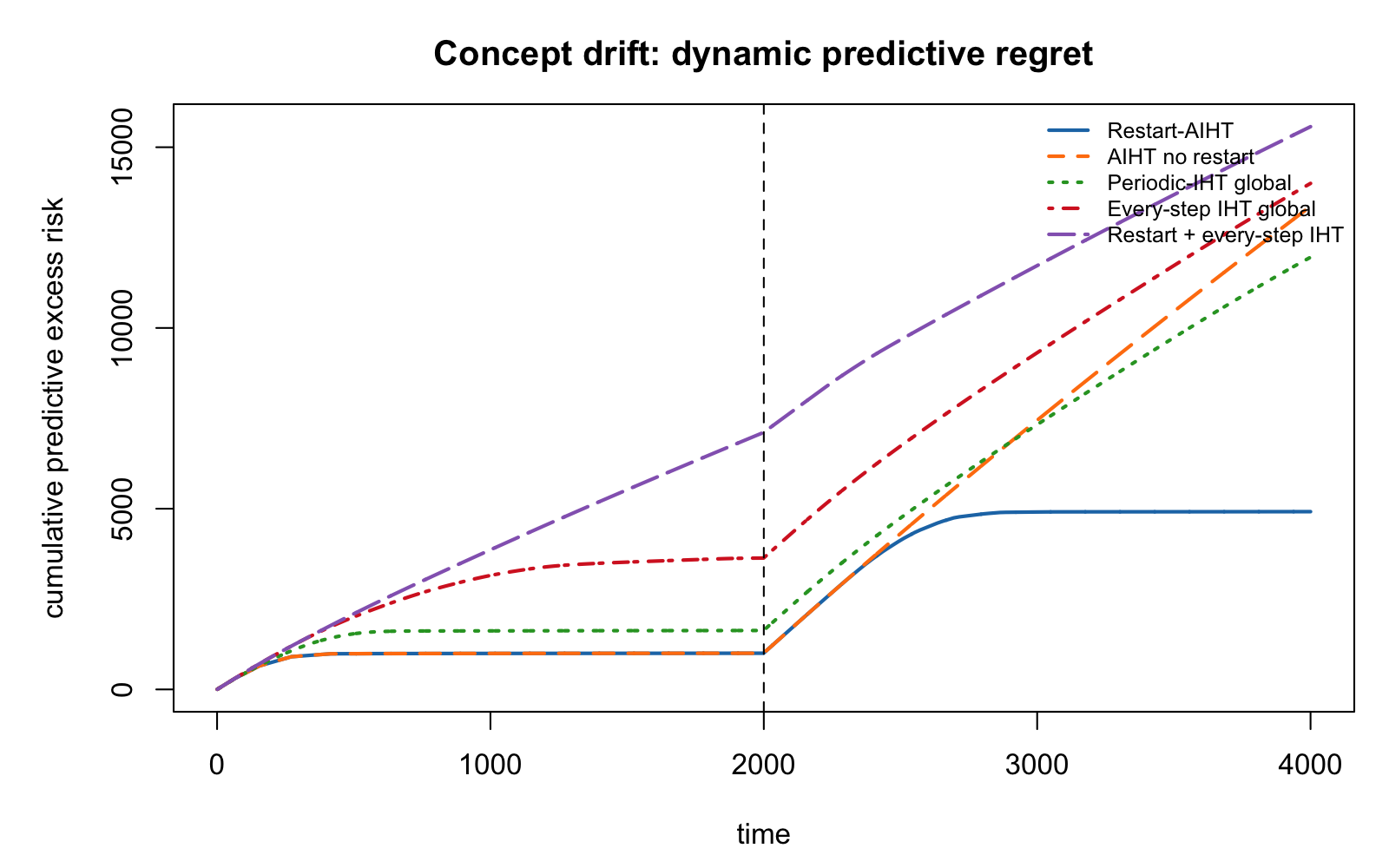}
    \caption{Concept-drift experiment with a changepoint at \(t=2000\).
    Top left: post-thresholded estimation error. Top right: support recovery.
    Bottom: cumulative dynamic predictive regret. Restart-AIHT detects the change,
    re-enters the discovery phase, and recovers the new support. In contrast, methods
    without restart remain anchored to the old support, while every-step thresholding after
    restart recovers only partially because it again suppresses support entry too aggressively.}
    \label{fig:concept-drift}
\end{figure*}

Figure~\ref{fig:concept-drift} complements the segmentwise guarantee in Theorem~\ref{thm:segmentwise-restart}. Before the changepoint, Restart-AIHT behaves like stationary AIHT. After the change, the restart flushes the stale buffer and resets the local clock, so the method pays a finite recovery cost rather than accumulating persistent error. Across five replications, the average detection delay is about \(265\) observations, the final support F1 is \(1.00\), and the final post-thresholded MSE is about \(0.03\). The no-restart and global-thresholding baselines have much larger final errors because their sparse trajectories continue to reflect the obsolete regime.

	\section{Conclusion}
	This paper proposed AIHT, an online algorithmic framework for high-dimensional sparse regression. By combining stochastic subgradient updates with an adaptive schedule for hard thresholding and step sizes, the method stabilizes the iterates, exploits local curvature when it appears, and maintains accurate sparsity throughout the learning process. For online quantile regression, our analysis shows that candidate-restricted AIHT has a two-phase convergence behavior and logarithmic refinement-phase regret for the sliding-window objective, with total regret \(C_{\mathrm I}(T_0)+C_{\mathrm{burn}}+O(\log T)\), while empirical results confirm its advantages over online SGD and truncated gradient baselines across a range of noise regimes and threshold-scheduling ablations. A promising direction for future work is to develop formal guarantees for additional losses, such as squared, Huber, or generalized linear losses, under the same restricted-curvature and gradient-leakage framework.

	\bibliographystyle{apalike}
	\bibliography{references}
	
	\appendix
	\FloatBarrier
	\onecolumn

	\section{Theoretical Proofs}
	% ---------------------------
	% 附录 B：独立的定理体系
	% ---------------------------
	\newtheorem{Btheorem}{Theorem}[section]
	\newtheorem{Blemma}[Btheorem]{Lemma}
	\newtheorem{Bproposition}[Btheorem]{Proposition}
	\newtheorem{Bcorollary}[Btheorem]{Corollary}
	
	% 编号改成 "B.1", "B.2" ...
	\renewcommand{\theBtheorem}{B.\arabic{Btheorem}}
	\renewcommand{\theBlemma}{B.\arabic{Btheorem}}
	\renewcommand{\theBproposition}{B.\arabic{Btheorem}}
	\renewcommand{\theBcorollary}{B.\arabic{Btheorem}}
	
	\label{appendix:theory}
	For clarity and completeness, we restate below the assumptions previously introduced in the main text and used throughout the proof of the theorem. For simplicity, throughout this section we write average gradient $\bar g_j$ just as $g_j$.\\
We work under the assumptions and post-burn-in setup stated in Section~\ref{subsec:assumptions-curvature}.
	\subsection*{Proofs for Section~\ref{sec:methodology}}
	
	\begin{proof}[Proof of Proposition~\ref{prop:one-step-barrier}]
		By definition, \(H_s(u_{t+1})\) keeps the \(s\) largest coordinates of
		\(u_{t+1}\) in absolute value. Since \(m\notin\widehat S_t\), we have
		\(\beta_{t,m}=0\), and therefore
		\[
		u_{t+1,m}
		=
		-\eta_tg_{t,m}(\beta_t).
		\]
		Hence \(|u_{t+1,m}|=\eta_t|g_{t,m}(\beta_t)|\). By the definition of
		\(\lambda_t^{(-m)}\), and ignoring ties, coordinate \(m\) is selected by
		\(H_s\) if and only if
		\[
		|u_{t+1,m}|>\lambda_t^{(-m)},
		\]
		which gives \eqref{eq:one-step-entry}. If
		\(|g_{t,m}(\beta_t)|\le G_m\), \(\lambda_t^{(-m)}\ge\lambda_0t^{-r}\), and
		\(\eta_t=\alpha t^{-q}\), then
		\[
		\eta_t|g_{t,m}(\beta_t)|
		\le
		\alpha G_m t^{-q}.
		\]
		For \(t^{q-r}\ge\alpha G_m/\lambda_0\), the right-hand side is at most
		\(\lambda_0t^{-r}\le\lambda_t^{(-m)}\), so the entry condition fails. Since
		\(q>r\), this is equivalent to
		\(t\ge(\alpha G_m/\lambda_0)^{1/(q-r)}\).
	\end{proof}
	
	\begin{proof}[Proof of Proposition~\ref{prop:wrong-fixed-point}]
		For the quadratic loss,
		\[
		\nabla L(\beta)=\Sigma(\beta-\beta^\ast).
		\]
		At \(\bar\beta=(0,\rho\theta)^\top\),
		\[
		\bar\beta-\beta^\ast=(-\theta,\rho\theta)^\top,
		\]
		and therefore
		\[
		\nabla L(\bar\beta)
		=
		\begin{pmatrix}
		1 & \rho\\
		\rho & 1
		\end{pmatrix}
		\begin{pmatrix}
		-\theta\\
		\rho\theta
		\end{pmatrix}
		=
		\begin{pmatrix}
		-(1-\rho^2)\theta\\
		0
		\end{pmatrix}.
		\]
		Thus
		\[
		\bar\beta-\eta\nabla L(\bar\beta)
		=
		\begin{pmatrix}
		\eta(1-\rho^2)\theta\\
		\rho\theta
		\end{pmatrix}.
		\]
		Because \(s=1\), \(H_1\) keeps the coordinate with larger absolute value.
		Since \(\theta>0\), the second coordinate is retained whenever
		\(\rho\theta>\eta(1-\rho^2)\theta\), equivalently
		\(\eta<\rho/(1-\rho^2)\). In that case the update returns exactly
		\(\bar\beta\). For \(\eta_t=\alpha t^{-q}\), the same inequality holds once
		\(t>\{\alpha(1-\rho^2)/\rho\}^{1/q}\), proving the absorbing
		wrong-support claim.
	\end{proof}
	
	\begin{proof}[Proof of Proposition~\ref{prop:fixed-period-vanishing-mass}]
		For fixed \(K\),
		\[
		B_j=\alpha\sum_{\ell=0}^{K-1}(\tau_j+\ell)^{-q}.
		\]
		If \(0<q<1\), then uniformly over \(\ell=0,\ldots,K-1\),
		\[
		(\tau_j+\ell)^{-q}
		=
		\tau_j^{-q}\{1+O(K/\tau_j)\},
		\]
		and hence
		\[
		B_j=\alpha K\tau_j^{-q}\{1+o(1)\}\to0.
		\]
		If \(q=1\), comparison with the integral of \(1/x\) gives
		\[
		\sum_{\ell=0}^{K-1}\frac{1}{\tau_j+\ell}
		=
		\log\left(1+\frac{K}{\tau_j}\right)+o(\tau_j^{-1})
		=
		K\tau_j^{-1}\{1+o(1)\},
		\]
		so again \(B_j\to0\). Therefore every fixed positive entry threshold
		\(B_{\mathrm{entry}}\) eventually exceeds \(B_j\).
	\end{proof}
	
	\begin{proof}[Proof of Proposition~\ref{prop:epoch-mass-cone-barrier}]
		Let
		\[
		a_t=\|\Delta_{t,S^c}\|_1,\qquad
		b_t=\|\Delta_{t,S}\|_1,\qquad
		r_t=\frac{a_t}{b_t+\zeta}.
		\]
		The update and the triangle inequality give
		\[
		a_{t+1}\le a_t+\eta_tL_{\mathrm{off}}(b_t+\zeta),
		\]
		while
		\[
		b_{t+1}+\zeta
		\ge
		(1-\eta_tL_{\mathrm{on}})(b_t+\zeta).
		\]
		Since \(\eta_tL_{\mathrm{on}}\le1/2\),
		\[
		r_{t+1}
		\le
		\frac{r_t+\eta_tL_{\mathrm{off}}}{1-\eta_tL_{\mathrm{on}}}
		\le
		(r_t+\eta_tL_{\mathrm{off}})(1+2\eta_tL_{\mathrm{on}}).
		\]
		Expanding and using \(\eta_tL_{\mathrm{on}}\le1/2\) yields
		\[
		r_{t+1}
		\le
		r_t+2\eta_tL_{\mathrm{off}}+2\eta_tL_{\mathrm{on}}r_t.
		\]
		As long as \(r_t\le c_{\mathrm{large}}\), this proves
		\eqref{eq:ratio-recursion-method}. Summing over the epoch gives
		\[
		r_t
		\le
		c_{\mathrm{small}}
		+
		2\{L_{\mathrm{off}}+c_{\mathrm{large}}L_{\mathrm{on}}\}B_j,
		\]
		and \eqref{eq:Bcone-method} is sufficient for \(r_t\le c_{\mathrm{large}}\).
		
		For the converse, set \(D=b_{\tau_j}+\zeta>0\), choose
		\(a_{\tau_j}=c_{\mathrm{small}}D\), and let \(h\) be a unit
		\(\ell_1\)-vector supported on \(S^c\) and aligned with the off-support
		error. Take
		\[
		g_{t,S}(\beta_t)=0,\qquad
		g_{t,S^c}(\beta_t)=-L_{\mathrm{off}}D\,h.
		\]
		Then \eqref{eq:method-gradient-guard} holds with \(L_{\mathrm{on}}=0\), and
		\(b_t+\zeta=D\) remains constant. Moreover
		\[
		r_t
		=
		c_{\mathrm{small}}
		+
		L_{\mathrm{off}}\sum_{u=\tau_j}^{t-1}\eta_u.
		\]
		If \(B_j>(c_{\mathrm{large}}-c_{\mathrm{small}})/L_{\mathrm{off}}\), then
		\(r_t>c_{\mathrm{large}}\) for some time in the epoch. Thus a uniform
		cone-invariance statement requires an upper bound on epoch mass.
	\end{proof}
	
	\subsection*{Restricted strong convexity}
	
\begin{Blemma}[Sparse restricted curvature for sliding-window quantile loss]
\label{app:lem:rsc}
Under {\normalfont (A1)}--{\normalfont (A3)}, there exist constants \(a^\ast,c^\ast>0\) such that, with high probability, uniformly over \(t\ge W\) and all \(\Delta\) satisfying \(\|\Delta\|_0\le \bar s\) and \(\|\Delta\|_2\le R\),
\[
\bigl\langle
\bar g_t(\beta^\ast+\Delta)-\bar g_t(\beta^\ast),\Delta
\bigr\rangle
\ge
a^\ast\|\Delta\|_2^2
-
c^\ast\sqrt{\frac{\bar s\log(pT/\delta)}{W}}\|\Delta\|_2.
\]
Consequently,
\[
\bigl\langle
\bar g_t(\beta^\ast+\Delta)-\bar g_t(\beta^\ast),\Delta
\bigr\rangle
\ge
\mu\|\Delta\|_2^2
-
C_{\mathrm{rsc}}\frac{\bar s\log(pT/\delta)}{W}.
\]
\end{Blemma}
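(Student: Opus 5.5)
We would prove Lemma~\ref{app:lem:rsc} by splitting the sliding-window subgradient difference into its conditional expectation and a centered empirical-process remainder:
\[
\bigl\langle \bar g_t(\beta^\ast+\Delta)-\bar g_t(\beta^\ast),\Delta\bigr\rangle
=
\frac1W\sum_{i=t-W+1}^{t} X_i^\top\Delta\,\bigl\{\mathbf 1(\varepsilon_i\le X_i^\top\Delta)-\mathbf 1(\varepsilon_i\le 0)\bigr\}.
\]
Each summand is nonnegative, since the indicator difference has the sign of \(X_i^\top\Delta\).

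\textbf{Population term.} Conditioning on \(X_i\), the expectation of the \(i\)th summand is
\[
X_i^\top\Delta\,\{F(X_i^\top\Delta\mid X_i)-F(0\mid X_i)\}
=
\int_0^{X_i^\top\Delta}\!\!\cdots .
\]
By (A2), this is at least \(m_0\,(X_i^\top\Delta)^2\) whenever \(|X_i^\top\Delta|\le b_0\). To handle large \(|X_i^\top\Delta|\), we use the truncation \(\min\{(X^\top\Delta)^2, b_0|X^\top\Delta|\}\) as a lower bound. Sub-Gaussianity from (A1), together with \(\|\Delta\|_2\le R\) for \(R\) small relative to \(b_0/K\), gives \(\mathbb E[(X^\top\Delta)^2\mathbf 1\{|X^\top\Delta|>b_0\}]\le \tfrac12\Delta^\top\Sigma\Delta\). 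Combined with (A3), this yields a population lower bound of \(\tfrac{m_0m_1}{2}\|\Delta\|_2^2=:2a^\ast\|\Delta\|_2^2\). If \(R\) is not small, we would instead use the convexity/one-point trick: the population map \(u\mapsto\langle \Psi(\beta^\ast+u\Delta/\|\Delta\|),\Delta/\|\Delta\|\rangle\) is nondecreasing, so curvature at radius \(r_0\) propagates linearly. That route gives only a linear-in-\(\|\Delta\|\) bound outside \(r_0\), so we will state the result for \(R\le r_0\).

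\textbf{Fluctuation term (main obstacle).} We need, uniformly over \(t\ge W\) and over \(\bar s\)-sparse \(\Delta\) with \(\|\Delta\|_2\le R\),
\[
\Bigl|\frac1W\sum_i (Z_i(\Delta)-\mathbb E Z_i(\Delta))\Bigr|
\le
c^\ast\sqrt{\bar s\log(pT/\delta)/W}\,\|\Delta\|_2,
\]
where \(Z_i(\Delta)\) denotes the \(i\)th summand. The plan is as follows. First, fix a support \(J\) with \(|J|\le\bar s\). The class \(\{x^\top\Delta\,\mathbf 1(e\le x^\top\Delta)\}\) indexed by \(\Delta\in\mathbb R^J\) has VC-type complexity of order \(\bar s\), since it is a product of a linear function and a halfspace indicator. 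Second, symmetrize and apply the contraction principle to the indicator part (or bound the indicator differences by 1 and use \(|Z_i|\le|X_i^\top\Delta|\)). A Bernstein/Talagrand bound for sub-Gaussian envelopes of scale \(K\|\Delta\|_2\) then gives deviations of order \(K\|\Delta\|_2\sqrt{(\bar s+\log(1/\delta'))/W}\) on a fixed shell. Third, peel over dyadic shells of \(\|\Delta\|_2\). Fourth, take a union bound over \(\binom{p}{\bar s}\le p^{\bar s}\) supports and over the at most \(T\) windows, setting \(\delta'=\delta/(Tp^{\bar s}\log)\). This produces the \(\bar s\log(pT/\delta)\) factor. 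The delicate point is the non-Lipschitz indicator. If contraction is awkward, we would fall back on a bracketing argument using the bounded density \(M_1\): \(\mathbb E|\mathbf 1(\varepsilon\le a)-\mathbf 1(\varepsilon\le a')|\le M_1|a-a'|\). This gives a Lipschitz-in-mean modulus, which suffices for a net argument at resolution \(1/W\).

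\textbf{Conclusion.} Combining the two terms gives the first inequality. The second follows from \(c^\ast\epsilon\|\Delta\|_2\le \tfrac{a^\ast}{2}\|\Delta\|_2^2+\tfrac{(c^\ast)^2}{2a^\ast}\epsilon^2\), which gives \(\mu=a^\ast/2\) and \(C_{\mathrm{rsc}}=(c^\ast)^2/(2a^\ast)\).
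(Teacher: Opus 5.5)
The paper gives no proof of its own. It identifies the lemma with Theorem~1 of \citet{wang2024analysis}, which it describes as a restricted strong convexity statement on $\{\|\Delta\|_2\le 1\}\cap\mathcal C$. Your proposal is therefore a genuinely different, self-contained route, and it is sound in outline. The population term gets its curvature from the density floor $m_0$ on $[-b_0,b_0]$ together with (A3). The uniform deviation bound comes from a union over the $\binom{p}{\bar s}$ supports and the at most $T$ windows, which is exactly where the $\bar s\log(pT/\delta)$ factor originates. Your route delivers the lemma in the form actually claimed: sparse directions, radius $R$, and uniformity over all sliding windows. The citation leaves implicit both the union over windows and the passage from a unit-radius cone result to radius $R$. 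What the citation buys is brevity and a completed empirical-process argument, which in your proposal is still only sketched.

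Three places need tightening.

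\emph{The restriction $R\le r_0$ is unnecessary.} By the monotonicity you already invoke, for $r_0\le\|\Delta\|_2\le R$ the population term is at least $2a^\ast r_0\|\Delta\|_2\ge 2a^\ast(r_0/R)\|\Delta\|_2^2$. So the quadratic bound holds on the whole ball, with a constant depending on $R$; this is why the lemma carries a radius.

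\emph{Contraction does not apply as stated.} The map $v\mapsto v\{\mathbf 1(\varepsilon_i\le v)-\mathbf 1(\varepsilon_i\le 0)\}$ jumps at $v=\varepsilon_i$. Bounding the indicator by $1$ inside a supremum only gives $W^{-1}\sum_i|X_i^\top\Delta|$, which does not decay in $W$. Your bracketing fallback does work, provided two things hold. First, the net resolution must be proportional to $\|\Delta\|_2$ on each shell. Second, the complexity must be paid at the single-function scale $K\|\Delta\|_2$, not through the class envelope $\|\Delta\|_2\|X_{i,J}\|_2$; a crude entropy bound with that envelope gives $\bar s$ instead of $\sqrt{\bar s}$.

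A cleaner fix uses the fact that only a lower bound is needed. Since the indicator difference is nondecreasing in its threshold, each summand dominates $\int_0^{X_i^\top\Delta}\{\mathbf 1(\varepsilon_i\le u)-\mathbf 1(\varepsilon_i\le 0)\}\,du$. This is a $1$-Lipschitz function of $X_i^\top\Delta$ vanishing at $0$, and its conditional mean is still at least $\tfrac{m_0}{2}\min\{(X_i^\top\Delta)^2,b_0|X_i^\top\Delta|\}$. Symmetrization and Ledoux--Talagrand contraction then bound its fluctuation by a multiple of $\sqrt{\log(pT/\delta)/W}\,\|\Delta\|_1\le\sqrt{\bar s\log(pT/\delta)/W}\,\|\Delta\|_2$, with no union over supports.

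\emph{Dyadic peeling must stop at a positive radius.} This is harmless. Set $\epsilon=\sqrt{\bar s\log(pT/\delta)/W}$. For $\|\Delta\|_2\le c^\ast\epsilon/a^\ast$, the right-hand side of the claimed inequality is nonpositive, while the left-hand side is nonnegative by your own observation. So only $O(\log W)$ shells are needed.
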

	
	This lemma establishes the restricted strong convexity (RSC) condition on the set 
	$\{\|\Delta\|_2 \le 1\} \cap \mathcal{C}$. 
	In fact, this result is exactly Theorem~1 in \citet{wang2024analysis}. 
	Hence, we do not reprove it here and simply invoke the established lemma for our analysis.\\
	\begin{Bproposition}
		\label{prop:bounded-score-truth} Assume \textnormal{(A1)} and the conditional quantile condition $Q_\tau(\varepsilon_i\mid X_i)=0$ for all $i$. Then there exists a constant $C_1>0$ such that for any $\delta\in(0,1)$,
		with probability at least $1-\delta$,
		\[
		\max_{t=W,\dots,T}
		\|\bar g_t(\beta^\ast)\|_\infty
		\le
		C_1\sqrt{\frac{\log(pT/\delta)}{W}}.
		\]
	\end{Bproposition}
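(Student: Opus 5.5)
The plan is to write the score at the truth as an average of bounded, mean-zero summands and then apply a sub-Gaussian maximal inequality with a union bound over coordinates and windows. For each \(i\), set
\[
\xi_i=-X_i\{\tau-\mathbf 1(\varepsilon_i\le 0)\},
\]
since \(Y_i\le X_i^\top\beta^\ast\) is equivalent to \(\varepsilon_i\le0\). Then \(\bar g_t(\beta^\ast)=W^{-1}\sum_{i=t-W+1}^t\xi_i\).

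\textbf{Step 1: each summand is centered.} Condition on \(X_i\) and use \(Q_\tau(\varepsilon_i\mid X_i)=0\), which gives \(\mathbb P(\varepsilon_i\le0\mid X_i)=\tau\). Hence \(\mathbb E[\tau-\mathbf 1(\varepsilon_i\le0)\mid X_i]=0\), so every coordinate \(\xi_{i,\ell}\) has mean zero. The sequence \((\xi_i)\) is independent across \(i\) under the i.i.d. sampling in (A1). More generally, \(\xi_i\) is a martingale difference sequence with respect to the natural filtration.

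\textbf{Step 2: each coordinate is uniformly sub-Gaussian.} We have \(|\tau-\mathbf 1(\cdot)|\le\max(\tau,1-\tau)\le1\). Therefore \(|\xi_{i,\ell}|\le|X_{i,\ell}|=|e_\ell^\top X_i|\), and by (A1) this gives \(\|\xi_{i,\ell}\|_{\psi_2}\le K\), uniformly in \(i\) and \(\ell\). Note that no moment condition on \(\varepsilon_i\) is needed, since the noise enters only through a bounded indicator.

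\textbf{Step 3: concentration for one window.} Fix \(t\) and \(\ell\). Hoeffding's inequality for sums of independent mean-zero sub-Gaussian variables (e.g. Vershynin, Prop. 2.6.1) gives \(\|\sum_{i}\xi_{i,\ell}\|_{\psi_2}\le CK\sqrt W\). Consequently
\[
\mathbb P\Bigl(|\bar g_{t,\ell}(\beta^\ast)|>u\Bigr)\le 2\exp\!\Bigl(-\frac{cWu^2}{K^2}\Bigr).
\]

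\textbf{Step 4: union bound.} Take a union bound over \(\ell\in\{1,\dots,p\}\) and \(t\in\{W,\dots,T\}\), which is at most \(pT\) events. Choosing
\[
u=C_1\sqrt{\log(pT/\delta)/W}, \qquad C_1=K\sqrt{2/c},
\]
makes the total failure probability at most \(2pT\exp(-2\log(pT/\delta))\le\delta\), after adjusting constants. This yields the claim.

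The windows overlap, so the \(\bar g_t\) are dependent across \(t\). This does not matter, because the union bound needs no independence across \(t\); each single window is an average of independent terms.

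The only delicate point is Step 1: one must confirm that the subgradient convention \(\mathbf 1\{Y_i\le X_i^\top\beta\}\) matches the quantile condition \(\mathbb P(\varepsilon\le0\mid X)=\tau\), including a possible atom at \(0\). Assumption (A2) provides a density near zero, so there is no atom and the conventions \(\le\) and \(<\) agree almost surely.
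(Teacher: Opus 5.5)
Your proposal is correct and follows the paper's proof essentially step for step: centering via $\mathbb P(\varepsilon_i\le 0\mid X_i)=\tau$, sub-Gaussianity of $X_{ij}(\tau-\mathbf 1\{\varepsilon_i\le 0\})$ from the bounded multiplier, a sub-Gaussian (Hoeffding-type) tail bound for each window average, and a union bound over the at most $pT$ coordinate--window pairs. Your extra remarks on the overlapping windows and on a possible atom of $\varepsilon_i$ at zero are sound; the paper leaves both points implicit.
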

	\begin{proof}
		For each coordinate $j=1,\dots,p$, define
		\[
		Z_{ij}
		:=
		X_{ij}\bigl(\tau-\mathbf{1}\{\varepsilon_i\le 0\}\bigr).
		\]
		Since
		\[
		Y_i=X_i^\top\beta^\ast+\varepsilon_i,
		\]
		we have
		\[
		\bar g_{t,j}(\beta^\ast)
		=
		-\frac{1}{W}\sum_{i=t-W+1}^t Z_{ij}.
		\]
		
		By the conditional quantile assumption,
		\[
		\mathbb P(\varepsilon_i\le 0\mid X_i)=\tau,
		\]
		and therefore
		\[
		\mathbb E[Z_{ij}\mid X_i]
		=
		X_{ij}\,\mathbb E\bigl[\tau-\mathbf{1}\{\varepsilon_i\le 0\}\mid X_i\bigr]
		=
		0.
		\]
		Hence $\mathbb E[Z_{ij}]=0$.
		
		Under \textnormal{(A1)}, each coordinate $X_{ij}$ is sub-Gaussian with
		$\|X_{ij}\|_{\psi_2}\le K$, while
		\[
		\bigl|\tau-\mathbf{1}\{\varepsilon_i\le 0\}\bigr|\le 1.
		\]
		It follows that $Z_{ij}$ is centered sub-Gaussian and
		\[
		\|Z_{ij}\|_{\psi_2}\le C K
		\]
		for some universal constant $C>0$. Therefore, for each fixed $t$ and $j$,
		a standard tail bound for averages of independent centered sub-Gaussian random variables gives
		\[
		\mathbb P\!\left(
		\left|
		\frac{1}{W}\sum_{i=t-W+1}^t Z_{ij}
		\right|\ge u
		\right)
		\le
		2\exp\!\left(-\frac{cWu^2}{K^2}\right)
		\]
		for some constant $c>0$.
		
		Applying a union bound over all $j=1,\dots,p$ and all $t=W,\dots,T$ yields
		\[
		\mathbb P\!\left(
		\max_{t=W,\dots,T}\max_{1\le j\le p}
		|\bar g_{t,j}(\beta^\ast)|\ge u
		\right)
		\le
		2pT\exp\!\left(-\frac{cWu^2}{K^2}\right).
		\]
		Choosing
		\[
		u=C_1\sqrt{\frac{\log(pT/\delta)}{W}}
		\]
		with $C_1$ sufficiently large makes the right-hand side at most $\delta$, which proves the claim.
	\end{proof}
	
	\subsection*{Gradient Leakage for Quantile Loss}

	\begin{proof}
		Let
		\[
		Y_i=X_i^\top\beta^\ast+\varepsilon_i,
		\qquad
		\Delta=\beta-\beta^\ast.
		\]
		Then the empirical window subgradient at $\beta$ is
		\[
		g_t(\beta)
		=
		-\frac{1}{W}\sum_{i=t-W+1}^t
		X_i\bigl(\tau-\mathbf 1\{\varepsilon_i\le X_i^\top\Delta\}\bigr),
		\]
		while at the truth $\beta^\ast$,
		\[
		g_t(\beta^\ast)
		=
		-\frac{1}{W}\sum_{i=t-W+1}^t
		X_i\bigl(\tau-\mathbf 1\{\varepsilon_i\le 0\}\bigr).
		\]
		Hence
		\begin{equation}
			\label{eq:g-diff-start}
			g_t(\beta)-g_t(\beta^\ast)
			=
			\frac{1}{W}\sum_{i=t-W+1}^t
			X_i
			\Bigl[
			\mathbf 1\{\varepsilon_i\le X_i^\top\Delta\}
			-
			\mathbf 1\{\varepsilon_i\le 0\}
			\Bigr].
		\end{equation}
		
		For notational convenience, re-index the window as $i=1,\dots,W$.
		Then
		\[
		g_t(\beta)-g_t(\beta^\ast)
		=
		\frac{1}{W}\sum_{i=1}^W
		X_i
		\Bigl[
		\mathbf 1\{\varepsilon_i\le X_i^\top\Delta\}
		-
		\mathbf 1\{\varepsilon_i\le 0\}
		\Bigr].
		\]
		
		We begin with the decomposition
		\begin{equation}
			\label{eq:main-triangle}
			\|g_{t,S^c}(\beta)\|_\infty
			\le
			\|(g_t(\beta)-g_t(\beta^\ast))_{S^c}\|_\infty
			+
			\|g_{t,S^c}(\beta^\ast)\|_\infty.
		\end{equation}
		The two terms on the right-hand side are controlled separately.
		
		\medskip
		By Proposition~\ref{prop:bounded-score-truth} in its fixed-$t$ form, there exists a constant
		$C_0>0$ such that with probability at least $1-\delta/3$,
		\begin{equation}
			\label{eq:truth-score-bound}
			\|g_{t,S^c}(\beta^\ast)\|_\infty
			\le
			C_0\sqrt{\frac{\log(p/\delta)}{W}}.
		\end{equation}
		Define
		\[
		\zeta_i(\Delta)
		:=
		\mathbf 1\{\varepsilon_i\le X_i^\top\Delta\}
		-
		\mathbf 1\{\varepsilon_i\le 0\}.
		\]
		Then \eqref{eq:g-diff-start} becomes
		\[
		g_t(\beta)-g_t(\beta^\ast)
		=
		\frac{1}{W}\sum_{i=1}^W X_i\,\zeta_i(\Delta).
		\]
		Conditioning on $X_1,\dots,X_W$, we have
		\[
		\mathbb E[\zeta_i(\Delta)\mid X_i]
		=
		F_{\varepsilon|X}(X_i^\top\Delta\mid X_i)
		-
		F_{\varepsilon|X}(0\mid X_i).
		\]
		By the mean-value theorem, for each $i$ there exists a random point
		$\xi_i$ between $0$ and $X_i^\top\Delta$ such that
		\[
		F_{\varepsilon|X}(X_i^\top\Delta\mid X_i)
		-
		F_{\varepsilon|X}(0\mid X_i)
		=
		f_{\varepsilon|X}(\xi_i\mid X_i)\,X_i^\top\Delta.
		\]
		By {\normalfont (A2)}, \(f_{\varepsilon|X}(u\mid X_i)\le M_1\) for all \(u\in\mathbb R\). Hence
\[
\bigl|
F_{\varepsilon|X}(X_i^\top\Delta\mid X_i)
-
F_{\varepsilon|X}(0\mid X_i)
\bigr|
\le
M_1|X_i^\top\Delta|.
\]
		Therefore,
		\begin{align*}
			\Bigl\|
			\mathbb E\bigl[(g_t(\beta)-g_t(\beta^\ast))_{S^c}\mid X\bigr]
			\Bigr\|_\infty
			&=
			\Bigl\|
			\frac{1}{W}\sum_{i=1}^W
			X_{i,S^c}\,
			\mathbb E[\zeta_i(\Delta)\mid X_i]
			\Bigr\|_\infty \\
			&\le
			M_0
			\Bigl\|
			\frac{1}{W}\sum_{i=1}^W
			X_{i,S^c} X_i^\top \Delta
			\Bigr\|_\infty \\
			&=
			M_0
			\Bigl\|
			\frac{1}{W}X_{S^c}^\top X\Delta
			\Bigr\|_\infty.
		\end{align*}
		Since \(\operatorname{supp}(\Delta)\subseteq J\), we have
\[
\frac{1}{W}X_{S^c}^\top X\Delta
=
\frac{1}{W}X_{S^c}^\top X_J\Delta_J.
\]
Therefore, by the sparse Gram control in {\normalfont (A4)},
\[
\left\|
\frac{1}{W}X_{S^c}^\top X\Delta
\right\|_\infty
\le
\left\|
\frac{1}{W}X_{S^c}^\top X_J
\right\|_{\infty\to\infty}
\|\Delta_J\|_1
\le
C_\infty\|\Delta_J\|_1.
\]
Because \(S\subseteq J\) and \(\Delta\in\Gamma(c,S)\),
\[
\|\Delta_J\|_1
\le
\|\Delta_S\|_1+\|\Delta_{S^c}\|_1
\le
(1+c)\|\Delta_S\|_1.
\]
		Hence
		\begin{align*}
			\Bigl\|
			\frac{1}{W}X_{S^c}^\top X\Delta
			\Bigr\|_\infty
			&\le
			\Bigl\|
			\frac{1}{W}X_{S^c}^\top X_S
			\Bigr\|_{\infty\to\infty}
			\|\Delta_S\|_1
			+
			\Bigl\|
			\frac{1}{W}X_{S^c}^\top X_{S^c}
			\Bigr\|_{\infty\to\infty}
			\|\Delta_{S^c}\|_1.
		\end{align*}
		On the high-probability event from {\normalfont (A4)}, both operator norms are bounded by
		$C_\infty$. Therefore,
		\[
		\Bigl\|
		\mathbb E\bigl[(g_t(\beta)-g_t(\beta^\ast))_{S^c}\mid X\bigr]
		\Bigr\|_\infty
		\le
		M_1 C_\infty\bigl(\|\Delta_S\|_1+\|\Delta_{S^c}\|_1\bigr).
		\]
		Since $\Delta\in \Gamma_H(c,S)$, we have
		\[
		\|\Delta_{S^c}\|_1\le c\|\Delta_S\|_1,
		\]
		so
		\begin{equation}
			\label{eq:conditional-mean-bound}
			\Bigl\|
			\mathbb E\bigl[(g_t(\beta)-g_t(\beta^\ast))_{S^c}\mid X\bigr]
			\Bigr\|_\infty
			\le
			\rho_0\,\|\Delta_S\|_1,
			\qquad
			\rho_0:=M_0C_\infty(1+c).
		\end{equation}
		Define the centered variables
		\[
		U_{ij}
		:=
		X_{ij}
		\Bigl(
		\zeta_i(\Delta)-\mathbb E[\zeta_i(\Delta)\mid X_i]
		\Bigr),
		\qquad j\in S^c.
		\]
		Then
		\[
		(g_t(\beta)-g_t(\beta^\ast))_{S^c}
		-
		\mathbb E[(g_t(\beta)-g_t(\beta^\ast))_{S^c}\mid X]
		=
		\frac{1}{W}\sum_{i=1}^W U_i,
		\]
		where $U_i=(U_{ij})_{j\in S^c}$.
		
		For each fixed $j\in S^c$, the random variables $\{U_{ij}\}_{i=1}^W$ are independent,
		centered given $X$, and satisfy
		\[
		|\,\zeta_i(\Delta)-\mathbb E[\zeta_i(\Delta)\mid X_i]\,|\le 2.
		\]
		Since $X_{ij}$ is sub-Gaussian by {\normalfont (A1)}, it follows that each $U_{ij}$ is centered
		sub-Gaussian with parameter bounded by a constant multiple of $K$. Hence there exists
		a constant $c_0>0$ such that for every $u>0$,
		\[
		\mathbb P\!\left(
		\left|
		\frac{1}{W}\sum_{i=1}^W U_{ij}
		\right|
		\ge u
		\right)
		\le
		2\exp\!\left(-\frac{c_0Wu^2}{K^2}\right).
		\]
		Applying a union bound over $j\in S^c$ and choosing
		\[
		u=C_0\sqrt{\frac{\log(p/\delta)}{W}}
		\]
		with $C_0$ sufficiently large gives
		\begin{equation}
			\label{eq:fluctuation-bound}
			\Bigl\|
			(g_t(\beta)-g_t(\beta^\ast))_{S^c}
			-
			\mathbb E[(g_t(\beta)-g_t(\beta^\ast))_{S^c}\mid X]
			\Bigr\|_\infty
			\le
			C_0\sqrt{\frac{\log(p/\delta)}{W}}
		\end{equation}
		with probability at least $1-\delta/3$.
		
		Combining \eqref{eq:conditional-mean-bound} and \eqref{eq:fluctuation-bound}, we obtain with probability at least
		$1-2\delta/3$,
		\begin{equation}
			\label{eq:diff-bound-final}
			\|(g_t(\beta)-g_t(\beta^\ast))_{S^c}\|_\infty
			\le
			\rho_0\,\|\Delta_S\|_1
			+
			C_0\sqrt{\frac{\log(p/\delta)}{W}}.
		\end{equation}
		Substituting \eqref{eq:truth-score-bound} and \eqref{eq:diff-bound-final}
		into \eqref{eq:main-triangle}, and enlarging constants if necessary, yields
		\[
		\|g_{t,S^c}(\beta)\|_\infty
		\le
		\rho\,\|\Delta_S\|_1
		+
		C_0\sqrt{\frac{\log(p/\delta)}{W}}
		\]
		with probability at least $1-\delta$, where $\rho>0$ depends only on
		$K$, $(m_0,M_0)$, $c$, and the Gram bound constant in {\normalfont (A4)}.
		This proves the lemma.
	\end{proof}

	\subsection*{On–support gradient bound}
	
\begin{Blemma}[On-support coordinatewise gradient bound]
\label{app:lem:on-support-grad}
Assume \textnormal{(A1)--(A4)} and let \(t\ge W\). Fix any set \(J\subseteq\{1,\ldots,p\}\) such that \(S\subseteq J\) and \(|J|\le\bar s=s+m+s_0\). For any \(\beta\) with \(\Delta=\beta-\beta^\ast\), \(\operatorname{supp}(\Delta)\subseteq J\), \(\Delta\in\Gamma(c,S)\), and \(\|\Delta\|_2\le R\), there exist constants \(L_g>0\) and \(C_0>0\) such that, for any \(\delta\in(0,1)\) and \(W\ge C\log(p/\delta)\), with probability at least \(1-\delta\),
\[
\|\bar g_{t,S}(\beta)\|_\infty
\le
L_g\|\Delta_S\|_1
+
C_0\sqrt{\frac{\log(p/\delta)}{W}}.
\]
\end{Blemma}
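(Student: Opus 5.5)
The plan is to mirror the proof of Lemma~\ref{lem:grad-leak-quantile}, replacing the index set \(S^c\) by \(S\) throughout. We split
\[
\|\bar g_{t,S}(\beta)\|_\infty
\le
\|(\bar g_t(\beta)-\bar g_t(\beta^\ast))_S\|_\infty
+
\|\bar g_{t,S}(\beta^\ast)\|_\infty .
\]
The score-at-truth term is handled by Proposition~\ref{prop:bounded-score-truth} in its fixed-\(t\) form. Restricting the union bound to the \(s_0\le p\) coordinates in \(S\), this gives \(C_0\sqrt{\log(p/\delta)/W}\) with probability at least \(1-\delta/3\).

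For the difference term, write \(\bar g_t(\beta)-\bar g_t(\beta^\ast)=W^{-1}\sum_i X_i\zeta_i(\Delta)\), with \(\zeta_i(\Delta)=\mathbf 1\{\varepsilon_i\le X_i^\top\Delta\}-\mathbf 1\{\varepsilon_i\le 0\}\), and decompose it into its conditional mean given the design plus a centered fluctuation.

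\begin{itemize}
\item \emph{Conditional mean.} By the mean-value theorem and the uniform density bound \(M_1\) in (A2), \(|\mathbb E[\zeta_i\mid X_i]|\le M_1|X_i^\top\Delta|\). Bounding \(|X_{ij}|\) against these factors coordinatewise, the \(S\)-block of the conditional mean is controlled in the same way as the leakage bound, via the rows \(j\in S\) of \(\widehat\Sigma_t\), i.e.\ by \(M_1\|W^{-1}X_S^\top X_J\Delta_J\|_\infty\) up to that coordinatewise majorization. Since \(\operatorname{supp}(\Delta)\subseteq J\) with \(S\subseteq J\) and \(|J|\le\bar s\), assumption (A4) applies to the block \(\widehat\Sigma_{t,S,J}\) and gives \(\|\widehat\Sigma_{t,S,J}\|_{\infty\to\infty}\le C_\infty\). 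Hence the mean term is at most \(M_1C_\infty\|\Delta_J\|_1\). The cone condition \(\Delta\in\Gamma(c,S)\) then gives \(\|\Delta_J\|_1\le(1+c)\|\Delta_S\|_1\), so we may take \(L_g:=M_1C_\infty(1+c)\).
\item \emph{Fluctuation.} Conditionally on the design, the summands \(X_{ij}(\zeta_i-\mathbb E[\zeta_i\mid X_i])\) are independent and centered, with the bracket bounded by \(2\). Since \(X_{ij}\) is sub-Gaussian under (A1), each summand is sub-Gaussian with parameter of order \(K\). Hoeffding-type concentration plus a union bound over \(j\in S\) bounds the average by \(C_0\sqrt{\log(p/\delta)/W}\) with probability at least \(1-\delta/3\).
\end{itemize}

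Combining the three pieces on an event of probability at least \(1-\delta\), and enlarging \(C_0\), gives the stated bound.

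The main obstacle is that the fluctuation bound is for a fixed \(\Delta\), whereas \(\Delta\) is data-dependent along the AIHT path. The statement as written is pointwise in \(\beta\), matching the convention of Lemma~\ref{lem:grad-leak-quantile}, so the fixed-\(\Delta\) argument suffices here. If uniformity were required, I would first try to add a net argument over \(\bar s\)-sparse vectors in the radius-\(R\) ball, using the VC structure of the half-space indicators \(\mathbf 1\{\varepsilon_i\le X_i^\top\Delta\}\). That would contribute an extra \(\sqrt{\bar s\log p/W}\)-type term, which is absorbed under \(W\ge C\log(p/\delta)\) only up to constants depending on \(\bar s\).

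A secondary subtlety is the mean step. Bounding \(|\mathbb E[\zeta_i\mid X_i]|\) by \(M_1|X_i^\top\Delta|\) loses the sign of \(X_i^\top\Delta\), so strictly the majorization is by the absolute-value product \(W^{-1}\sum_i|X_{ij}||X_i^\top\Delta|\). I would handle this either by invoking (A4) in its evident absolute-value form, or by rewriting the mean as \(W^{-1}\sum_i \bar f_i X_{ij}X_i^\top\Delta\) with \(\bar f_i\in[0,M_1]\) a weighted Gram matrix and bounding its row sums by \(M_1\) times those of \(|\widehat\Sigma_t|\).
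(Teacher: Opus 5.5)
Your proposal follows the paper's proof essentially step for step: the same split into score at the truth, conditional mean and centered fluctuation, the same use of (A4) on the block $\widehat\Sigma_{t,S,J}$ together with the cone to get $L_g=M_1C_\infty(1+c)$, and the same sub-Gaussian union bound over $j\in S$. The sign issue you flag in the mean step is real and the paper passes over it silently, but it can be closed under (A4) as stated, with no absolute-value strengthening; note that your weighted-Gram variant would fail if $|\widehat\Sigma_t|$ meant the entrywise absolute value of $\widehat\Sigma_t$, since the weights $\bar f_i$ can undo cancellations. Instead, by Cauchy--Schwarz, $W^{-1}\sum_i|X_{ij}||X_i^\top\Delta|\le(\widehat\Sigma_{t,jj})^{1/2}(\Delta^\top\widehat\Sigma_t\Delta)^{1/2}\le C_\infty\|\Delta_J\|_1$, because (A4) with $A=J$ bounds every entry of $\widehat\Sigma_{t,J,J}$ by $C_\infty$.
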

	
	\begin{proof}
		As before, write
		\[
		Y_i=X_i^\top\beta^\ast+\varepsilon_i,
		\qquad
		\Delta=\beta-\beta^\ast,
		\]
		and re-index the window as $i=1,\dots,W$ for convenience. Then
		\[
		g_t(\beta)-g_t(\beta^\ast)
		=
		\frac{1}{W}\sum_{i=1}^W
		X_i
		\Bigl[
		\mathbf 1\{\varepsilon_i\le X_i^\top\Delta\}
		-
		\mathbf 1\{\varepsilon_i\le 0\}
		\Bigr].
		\]
		We begin with the decomposition
		\begin{equation}
			\label{eq:on-support-triangle}
			\|g_{t,S}(\beta)\|_\infty
			\le
			\|(g_t(\beta)-g_t(\beta^\ast))_S\|_\infty
			+
			\|g_{t,S}(\beta^\ast)\|_\infty.
		\end{equation}
		By the fixed-$t$ version of Proposition~\ref{prop:bounded-score-truth}, there exists
		a constant $C_0>0$ such that with probability at least $1-\delta/3$,
		\begin{equation}
			\label{eq:on-support-truth}
			\|g_{t,S}(\beta^\ast)\|_\infty
			\le
			C_0\sqrt{\frac{\log(p/\delta)}{W}}.
		\end{equation}
		Define
		\[
		\zeta_i(\Delta)
		:=
		\mathbf 1\{\varepsilon_i\le X_i^\top\Delta\}
		-
		\mathbf 1\{\varepsilon_i\le 0\}.
		\]
		Then
		\[
		g_t(\beta)-g_t(\beta^\ast)
		=
		\frac{1}{W}\sum_{i=1}^W X_i\,\zeta_i(\Delta).
		\]
		Conditioning on $X_1,\dots,X_W$, we have
		\[
		\mathbb E[\zeta_i(\Delta)\mid X_i]
		=
		F_{\varepsilon|X}(X_i^\top\Delta\mid X_i)
		-
		F_{\varepsilon|X}(0\mid X_i).
		\]
		By the mean-value theorem, for each $i$ there exists $\xi_i$ between $0$ and
		$X_i^\top\Delta$ such that
		\[
		F_{\varepsilon|X}(X_i^\top\Delta\mid X_i)-F_{\varepsilon|X}(0\mid X_i)
		=
		f_{\varepsilon|X}(\xi_i\mid X_i)\,X_i^\top\Delta.
		\]
		Under \textnormal{(A2)}, $|f_{\varepsilon|X}(\xi_i\mid X_i)|\le M_0$, hence
		\[
		\bigl|
		F_{\varepsilon|X}(X_i^\top\Delta\mid X_i)-F_{\varepsilon|X}(0\mid X_i)
		\bigr|
		\le
		M_0 |X_i^\top\Delta|.
		\]
		Therefore,
		\begin{align*}
			\Bigl\|
			\mathbb E[(g_t(\beta)-g_t(\beta^\ast))_S\mid X]
			\Bigr\|_\infty
			&=
			\Bigl\|
			\frac{1}{W}\sum_{i=1}^W
			X_{i,S}\,\mathbb E[\zeta_i(\Delta)\mid X_i]
			\Bigr\|_\infty \\
			&\le
			M_0
			\Bigl\|
			\frac{1}{W}\sum_{i=1}^W
			X_{i,S}X_i^\top\Delta
			\Bigr\|_\infty \\
			&=
			M_0
			\Bigl\|
			\frac{1}{W}X_S^\top X\Delta
			\Bigr\|_\infty.
		\end{align*}
	Since \(\operatorname{supp}(\Delta)\subseteq J\),
\[
\frac{1}{W}X_S^\top X\Delta
=
\frac{1}{W}X_S^\top X_J\Delta_J.
\]
By {\normalfont (A4)},
\[
\left\|
\frac{1}{W}X_S^\top X_J
\right\|_{\infty\to\infty}
\le C_\infty.
\]
Therefore,
\[
\left\|
\mathbb E[(g_t(\beta)-g_t(\beta^\ast))_S\mid X]
\right\|_\infty
\le
M_1C_\infty\|\Delta_J\|_1
\le
M_1C_\infty(1+c)\|\Delta_S\|_1.
\]
		Hence
		\[
		\Bigl\|
		\frac{1}{W}X_S^\top X\Delta
		\Bigr\|_\infty
		\le
		\Bigl\|\frac{1}{W}X_S^\top X_S\Bigr\|_{\infty\to\infty}\|\Delta_S\|_1
		+
		\Bigl\|\frac{1}{W}X_S^\top X_{S^c}\Bigr\|_{\infty\to\infty}\|\Delta_{S^c}\|_1.
		\]
		Since $|S|=s$ is fixed and $X_i$ is sub-Gaussian, standard concentration for the
		finite active block implies that with probability at least $1-\delta/3$,
		\[
		\Bigl\|\frac{1}{W}X_S^\top X_S\Bigr\|_{\infty\to\infty}\le C_{SS},
		\qquad
		\Bigl\|\frac{1}{W}X_S^\top X_{S^c}\Bigr\|_{\infty\to\infty}\le C_{S S^c},
		\]
		for finite constants $C_{SS},C_{S S^c}>0$ depending only on $K$ and $s$.
		Using the cone condition $\|\Delta_{S^c}\|_1\le c\|\Delta_S\|_1$, we get
		\begin{equation}
			\label{eq:on-support-condmean}
			\Bigl\|
			\mathbb E[(g_t(\beta)-g_t(\beta^\ast))_S\mid X]
			\Bigr\|_\infty
			\le
			L_0\,\|\Delta_S\|_1,
			\qquad
			L_0:=M_0(C_{SS}+cC_{S S^c}).
		\end{equation}
		For each $j\in S$, define
		\[
		U_{ij}
		:=
		X_{ij}
		\Bigl(
		\zeta_i(\Delta)-\mathbb E[\zeta_i(\Delta)\mid X_i]
		\Bigr).
		\]
		Then
		\[
		(g_t(\beta)-g_t(\beta^\ast))_S
		-
		\mathbb E[(g_t(\beta)-g_t(\beta^\ast))_S\mid X]
		=
		\frac{1}{W}\sum_{i=1}^W U_i,
		\]
		where $U_i=(U_{ij})_{j\in S}$.
		For each fixed $j\in S$, the variables $\{U_{ij}\}_{i=1}^W$ are independent,
		centered given $X$, and satisfy
		\[
		|\zeta_i(\Delta)-\mathbb E[\zeta_i(\Delta)\mid X_i]|\le 2.
		\]
		Since $X_{ij}$ is sub-Gaussian, each $U_{ij}$ is centered sub-Gaussian with parameter
		bounded by a constant multiple of $K$. Therefore there exists $c_0>0$ such that
		for any $u>0$,
		\[
		\mathbb P\!\left(
		\left|
		\frac{1}{W}\sum_{i=1}^W U_{ij}
		\right|
		\ge u
		\right)
		\le
		2\exp\!\left(-\frac{c_0Wu^2}{K^2}\right).
		\]
		Applying a union bound over the $s$ support coordinates and choosing
		\[
		u=C_0\sqrt{\frac{\log(p/\delta)}{W}}
		\]
		with $C_0$ sufficiently large yields, with probability at least $1-\delta/3$,
		\begin{equation}
			\label{eq:on-support-fluctuation}
			\Bigl\|
			(g_t(\beta)-g_t(\beta^\ast))_S
			-
			\mathbb E[(g_t(\beta)-g_t(\beta^\ast))_S\mid X]
			\Bigr\|_\infty
			\le
			C_0\sqrt{\frac{\log(p/\delta)}{W}}.
		\end{equation}
		Combining \eqref{eq:on-support-condmean} and \eqref{eq:on-support-fluctuation}, we obtain
		with probability at least $1-2\delta/3$,
		\[
		\|(g_t(\beta)-g_t(\beta^\ast))_S\|_\infty
		\le
		L_0\,\|\Delta_S\|_1
		+
		C_0\sqrt{\frac{\log(p/\delta)}{W}}.
		\]
		Together with \eqref{eq:on-support-truth} and \eqref{eq:on-support-triangle}, this gives
		\[
		\|g_{t,S}(\beta)\|_\infty
		\le
		L_g\,\|\Delta_S\|_1
		+
		C_0\sqrt{\frac{\log(p/\delta)}{W}}
		\]
		with probability at least $1-\delta$, after enlarging constants if necessary. Here
		$L_g>0$ depends only on $K$, $s$, $(m_0,M_0)$, $c$, and $R$.
		This proves the claim.
	\end{proof}
	\begin{Blemma}[Candidate-restricted \(\ell_1\) leakage]
\label{app:lem:candidate-l1-leakage}
Suppose that the coordinatewise off-support leakage bound
\[
\|\bar g_{t,S^c}(\beta_t)\|_\infty
\le
\rho\|\Delta_{t,S}\|_1+\sigma_W
\]
and the coordinatewise on-support bound
\[
\|\bar g_{t,S}(\beta_t)\|_\infty
\le
L_g\|\Delta_{t,S}\|_1+\sigma_W
\]
hold on an event \(\mathcal E_\infty\), where
\[
\sigma_W=C_0\sqrt{\frac{\log(p/\delta)}{W}}.
\]
Let \(d_t=P_{A_j}\bar g_t(\beta_t)\) with \(|A_j|\le s+m\). Then on \(\mathcal E_\infty\),
\[
\|d_{t,S^c}\|_1
\le
(s+m)\rho\|\Delta_{t,S}\|_1+(s+m)\sigma_W,
\]
and
\[
\|d_{t,S}\|_1
\le
s_0L_g\|\Delta_{t,S}\|_1+s_0\sigma_W.
\]
Equivalently, with \(\zeta_W=\sigma_W\),
\[
\|d_{t,S^c}\|_1
\le
L_{\mathrm{off}}\{\|\Delta_{t,S}\|_1+\zeta_W\},
\qquad
\|d_{t,S}\|_1
\le
L_{\mathrm{on}}\{\|\Delta_{t,S}\|_1+\zeta_W\},
\]
where one may take
\[
L_{\mathrm{off}}=(s+m)\max\{\rho,1\},
\qquad
L_{\mathrm{on}}=s_0\max\{L_g,1\}.
\]
In particular, \(L_{\mathrm{off}}\) and \(L_{\mathrm{on}}\) depend on the working sparsity and candidate size, but not on the ambient dimension \(p\), except through the logarithmic factor in \(\zeta_W\).
\end{Blemma}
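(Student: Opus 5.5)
The proof will be a deterministic counting argument carried out on the event \(\mathcal E_\infty\). No new probabilistic input is needed. The key structural fact is that \(d_t=P_{A_j}\bar g_t(\beta_t)\) vanishes outside \(A_j\). Hence \(d_{t,S^c}\) has at most \(|A_j\cap S^c|\le|A_j|\le s+m\) nonzero entries, and \(d_{t,S}\) has at most \(|S|=s_0\) nonzero entries. On each of its nonzero coordinates, \(d_t\) agrees with \(\bar g_t(\beta_t)\).

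First, I will apply the elementary inequality \(\|v\|_1\le|\operatorname{supp}(v)|\,\|v\|_\infty\) together with \(\|P_A v\|_\infty\le\|v\|_\infty\). This gives
\[
\|d_{t,S^c}\|_1\le(s+m)\|\bar g_{t,S^c}(\beta_t)\|_\infty,
\qquad
\|d_{t,S}\|_1\le s_0\|\bar g_{t,S}(\beta_t)\|_\infty .
\]
Next, I will substitute the two coordinatewise bounds assumed on \(\mathcal E_\infty\). For the on-support part these are supplied by Lemma~\ref{app:lem:on-support-grad}, and for the off-support part by Lemma~\ref{lem:grad-leak-quantile}. The substitution yields the two displayed \(\ell_1\) bounds immediately.

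For the inflated-cone form, I take \(\zeta_W=\sigma_W\) and bound
\[
(s+m)\rho\,x+(s+m)\sigma_W\le(s+m)\max\{\rho,1\}\,(x+\sigma_W)
\]
with \(x=\|\Delta_{t,S}\|_1\). The on-support term is handled the same way with \(s_0\max\{L_g,1\}\). The constants \(\rho\) and \(L_g\) come from the earlier lemmas and do not depend on \(p\). Therefore \(L_{\mathrm{off}}\) and \(L_{\mathrm{on}}\) depend only on \(s,m,s_0,\rho,L_g\), and \(p\) enters only through the \(\sqrt{\log(p/\delta)}\) factor in \(\sigma_W\).

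There is no real obstacle here. The only point needing care is to state clearly that the coordinatewise bounds are assumed to hold on \(\mathcal E_\infty\) at the actual iterates \(\beta_t\), for all \(t\) in the epoch. Making those bounds uniform over the random path is the job of the earlier lemmas together with the cone-invariance argument. This lemma only converts \(\ell_\infty\) control into \(\ell_1\) control via the support-size count.
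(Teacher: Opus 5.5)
Your proposal is correct and follows the paper's own proof essentially verbatim. Both arguments bound \(\|d_{t,S^c}\|_1\le|A_j\cap S^c|\,\|\bar g_{t,S^c}(\beta_t)\|_\infty\le(s+m)\|\bar g_{t,S^c}(\beta_t)\|_\infty\) and \(\|d_{t,S}\|_1\le s_0\|\bar g_{t,S}(\beta_t)\|_\infty\), substitute the coordinatewise bounds assumed on \(\mathcal E_\infty\), and absorb \(\sigma_W\) into the constants \((s+m)\max\{\rho,1\}\) and \(s_0\max\{L_g,1\}\).
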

\begin{proof}
Since \(d_t=P_{A_j}\bar g_t(\beta_t)\),
\[
\|d_{t,S^c}\|_1
=
\|\{\bar g_t(\beta_t)\}_{A_j\cap S^c}\|_1
\le
|A_j\cap S^c|\,\|\bar g_{t,S^c}(\beta_t)\|_\infty.
\]
Because \(|A_j\cap S^c|\le |A_j|\le s+m\), the off-support bound follows from the coordinatewise leakage inequality. Similarly,
\[
\|d_{t,S}\|_1
\le
|S|\,\|\bar g_{t,S}(\beta_t)\|_\infty
=
s_0\|\bar g_{t,S}(\beta_t)\|_\infty,
\]
which gives the on-support bound. The inflated-cone form follows by taking \(\zeta_W=\sigma_W\) and enlarging constants.
\end{proof}

\subsection*{Cone invariance}
\begin{proof}[Proof of Theorem~\ref{thm:cone-invariance}]
Work on the gradient-control event described in Section~\ref{subsec:assumptions-curvature}. Fix an epoch \(j\ge J_0\). To avoid ambiguity at the final thresholding step, write \(z_t\) for the pre-threshold trajectory within this epoch:
\[
z_{\tau_j}=\beta_{\tau_j},
\qquad
z_{t+1}=z_t-\eta_t d_t,
\qquad
t\in\mathcal T_j.
\]
Thus \(z_t=\beta_t\) for all non-final times inside the epoch, and
\(z_{\tau_{j+1}}=\widetilde\beta_{\tau_{j+1}}\) is the provisional endpoint before hard thresholding. Let \(e_t=z_t-\beta^\ast\). We prove that
\[
e_t\in\Gamma_\zeta(c_{\mathrm{large}},S),
\qquad
\|e_t\|_2\le R,
\qquad
t=\tau_j,\ldots,\tau_{j+1}.
\]

At the start of the epoch, the post-burn-in condition gives
\[
e_{\tau_j}=\Delta_{\tau_j}\in\Gamma_\zeta(c_{\mathrm{small}},S)
\subseteq
\Gamma_\zeta(c_{\mathrm{large}},S),
\qquad
\|e_{\tau_j}\|_2\le R_{\mathrm{in}}<R.
\]

Define
\[
a_t=\|e_{t,S^c}\|_1,
\qquad
b_t=\|e_{t,S}\|_1,
\qquad
D_t=b_t+\zeta_W,
\qquad
r_t=\frac{a_t}{D_t}.
\]
Then \(e_t\in\Gamma_\zeta(c,S)\) is equivalent to \(r_t\le c\).

We first derive the ratio recursion. Suppose that \(e_t\in\Gamma_\zeta(c_{\mathrm{large}},S)\) and \(\|e_t\|_2\le R\). The update \(e_{t+1}=e_t-\eta_t d_t\) gives, by the triangle inequality and the gradient guards,
\[
a_{t+1}
=
\|e_{t+1,S^c}\|_1
\le
a_t+\eta_t\|d_{t,S^c}\|_1
\le
a_t+\eta_tL_{\mathrm{off}}D_t.
\]
For the on-support component, the reverse triangle inequality gives
\[
b_{t+1}
=
\|e_{t+1,S}\|_1
\ge
b_t-\eta_t\|d_{t,S}\|_1
\ge
b_t-\eta_tL_{\mathrm{on}}D_t.
\]
Therefore
\[
D_{t+1}
=
b_{t+1}+\zeta_W
\ge
(1-\eta_tL_{\mathrm{on}})D_t.
\]
Since \(\eta_tL_{\mathrm{on}}\le1/2\), the denominator is positive and
\[
r_{t+1}
=
\frac{a_{t+1}}{D_{t+1}}
\le
\frac{r_t+\eta_tL_{\mathrm{off}}}
{1-\eta_tL_{\mathrm{on}}}.
\]
Using \(1/(1-x)\le 1+2x\) for \(0\le x\le1/2\), we obtain
\[
r_{t+1}
\le
(r_t+\eta_tL_{\mathrm{off}})(1+2\eta_tL_{\mathrm{on}}).
\]
As long as \(r_t\le c_{\mathrm{large}}\), this implies
\[
r_{t+1}
\le
r_t
+
2\eta_t\{L_{\mathrm{off}}+c_{\mathrm{large}}L_{\mathrm{on}}\}.
\]
Indeed, the additional term \(2\eta_t^2L_{\mathrm{off}}L_{\mathrm{on}}\) is bounded by
\(\eta_tL_{\mathrm{off}}\) because \(\eta_tL_{\mathrm{on}}\le1/2\), and is absorbed into the displayed bound.

Summing the recursion over the epoch gives, for every \(t\le\tau_{j+1}\) before a possible exit from the large cone,
\[
r_t
\le
c_{\mathrm{small}}
+
2\{L_{\mathrm{off}}+c_{\mathrm{large}}L_{\mathrm{on}}\}
\sum_{u=\tau_j}^{t-1}\eta_u .
\]
Using the epoch-mass condition,
\[
\sum_{u\in\mathcal T_j}\eta_u
\le
\frac{c_{\mathrm{large}}-c_{\mathrm{small}}}
{2\{L_{\mathrm{off}}+c_{\mathrm{large}}L_{\mathrm{on}}\}},
\]
we obtain \(r_t\le c_{\mathrm{large}}\). Thus the pre-threshold trajectory cannot exit the enlarged inflated cone during the epoch.

It remains to check that the trajectory stays inside the local radius \(R\). While \(e_t\in\Gamma_\zeta(c_{\mathrm{large}},S)\) and \(\|e_t\|_2\le R\), the gradient guards imply
\[
\|d_t\|_2
\le
\|d_t\|_1
\le
\|d_{t,S}\|_1+\|d_{t,S^c}\|_1
\le
(L_{\mathrm{on}}+L_{\mathrm{off}})D_t.
\]
Since \(D_t=\|e_{t,S}\|_1+\zeta_W\le \sqrt{s_0}\|e_t\|_2+\zeta_W\le \sqrt{s_0}R+\zeta_W\), we have
\[
\|d_t\|_2
\le
(L_{\mathrm{on}}+L_{\mathrm{off}})(\sqrt{s_0}R+\zeta_W).
\]
Therefore, for any \(t\le\tau_{j+1}\),
\[
\|e_t\|_2
\le
\|e_{\tau_j}\|_2
+
\sum_{u=\tau_j}^{t-1}\eta_u\|d_u\|_2
\le
R_{\mathrm{in}}
+
B_j(L_{\mathrm{on}}+L_{\mathrm{off}})(\sqrt{s_0}R+\zeta_W).
\]
By the local-radius part of the epoch-mass condition, the right-hand side is at most \(R\). Hence the trajectory also remains inside the local ball.

Combining the cone-ratio argument and the radius argument proves that
\[
e_t\in\Gamma_\zeta(c_{\mathrm{large}},S),
\qquad
\|e_t\|_2\le R,
\qquad
t=\tau_j,\ldots,\tau_{j+1}.
\]
Since \(z_t=\beta_t\) for \(t\in\mathcal T_j\) before the final thresholding step and \(z_{\tau_{j+1}}=\widetilde\beta_{\tau_{j+1}}\), the theorem follows.
\end{proof}

\subsection*{Convergence}

\begin{Blemma}[Support-preserving thresholding]
\label{app:lem:support-preserving-ht}
Let \(v\in\mathbb R^p\), and suppose \(S\subseteq\operatorname{supp}\{H_s(v)\}\). Then
\[
\|H_s(v)-\beta^\ast\|_2
\le
\|v-\beta^\ast\|_2.
\]
Moreover, if \(\beta_{\min}=\min_{j\in S}|\beta_j^\ast|>0\) and
\[
\|v-\beta^\ast\|_2<\frac{\beta_{\min}}{2},
\]
then \(S\subseteq\operatorname{supp}\{H_s(v)\}\) whenever \(s\ge s_0\).
\end{Blemma}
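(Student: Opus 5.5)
The plan is a direct two-part argument: an orthogonal decomposition for the nonexpansiveness claim, then a coordinatewise magnitude comparison for the support-inclusion claim. No probabilistic input is needed; the lemma is purely deterministic.

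\textbf{Nonexpansiveness.} Let \(T=\operatorname{supp}\{H_s(v)\}\).
\begin{enumerate}
\item By definition of \(H_s\), we have \(H_s(v)=P_Tv\): it agrees with \(v\) on \(T\) and vanishes on \(T^c\).
\item Since \(S\subseteq T\), the target \(\beta^\ast\) vanishes on \(T^c\), so \(\beta^\ast=P_T\beta^\ast\).
\item Hence
\[
H_s(v)-\beta^\ast=P_T(v-\beta^\ast),
\qquad
v-\beta^\ast=P_T(v-\beta^\ast)+P_{T^c}v .
\]
\item The two summands in the second identity have disjoint supports, so they are orthogonal. By Pythagoras,
\[
\|v-\beta^\ast\|_2^2=\|H_s(v)-\beta^\ast\|_2^2+\|v_{T^c}\|_2^2,
\]
which gives the inequality, with the discarded mass \(\|v_{T^c}\|_2^2\) as the gap.
\end{enumerate}

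\textbf{Support inclusion.} Write \(\epsilon=\|v-\beta^\ast\|_2<\beta_{\min}/2\). I will use the coordinatewise bound \(|v_k-\beta^\ast_k|\le\epsilon\) for every \(k\).
\begin{enumerate}
\item For \(k\notin S\), we have \(\beta^\ast_k=0\), so \(|v_k|\le\epsilon<\beta_{\min}/2\).
\item For \(j\in S\), the reverse triangle inequality gives
\[
|v_j|\ge|\beta^\ast_j|-\epsilon\ge\beta_{\min}-\epsilon>\beta_{\min}/2 .
\]
\item Therefore every coordinate in \(S\) is strictly larger in magnitude than every coordinate outside \(S\).
\item It follows that the \(s_0\) largest magnitudes of \(v\) are attained exactly on \(S\), and this identification holds regardless of how ties are broken. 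Ties can only occur within \(S\) or within \(S^c\), never across the two sets.
\item Since \(s\ge s_0\), any choice of the \(s\) largest coordinates must include all of \(S\), because excluding some \(j\in S\) would require keeping \(s\ge s_0\) coordinates each at least as large as \(|v_j|\), but at most \(s_0-1\) such coordinates exist outside \(\{j\}\). Thus \(S\subseteq\operatorname{supp}\{H_s(v)\}\).
\end{enumerate}

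\textbf{Possible subtlety.} The only step needing care is the tie-breaking in \(H_s\). The strict separation \(|v_j|>\beta_{\min}/2>|v_k|\) between \(j\in S\) and \(k\notin S\) makes it harmless, so I do not expect a real obstacle.
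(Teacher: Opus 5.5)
Your proposal is correct and follows essentially the same route as the paper. For nonexpansiveness you restrict to \(T=\operatorname{supp}\{H_s(v)\}\); your Pythagorean identity just makes explicit the paper's step \(\|(v-\beta^\ast)_T\|_2\le\|v-\beta^\ast\|_2\). For support inclusion you use the same coordinatewise separation \(|v_j|>\beta_{\min}/2>|v_k|\) for \(j\in S\), \(k\notin S\), and your tie-breaking remark spells out the counting step that the paper leaves implicit.
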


\begin{proof}
Let \(T=\operatorname{supp}\{H_s(v)\}\). If \(S\subseteq T\), then \(\beta^\ast_{T^c}=0\), and therefore
\[
\|H_s(v)-\beta^\ast\|_2^2
=
\|(v-\beta^\ast)_T\|_2^2
\le
\|v-\beta^\ast\|_2^2.
\]

For the beta-min claim, if \(j\in S\), then
\[
|v_j|\ge |\beta_j^\ast|-|v_j-\beta_j^\ast|
>
\beta_{\min}-\frac{\beta_{\min}}{2}
=
\frac{\beta_{\min}}{2}.
\]
If \(j\notin S\), then \(\beta_j^\ast=0\), so
\[
|v_j|=|v_j-\beta_j^\ast|
<
\frac{\beta_{\min}}{2}.
\]
Thus every coordinate in \(S\) has magnitude larger than every coordinate in \(S^c\). Since \(s\ge s_0=|S|\), the hard-thresholded support must contain \(S\).
\end{proof}

\begin{proof}[Proof of Theorem~\ref{thm:two-phase-convergence}]
Work on the local high-probability event described in Section~\ref{subsec:assumptions-curvature}. Let
\(\Delta_t=\beta_t-\beta^\ast\). During epoch \(j\), AIHT uses the candidate-restricted direction
\(d_t=P_{A_j}\bar g_t(\beta_t)\). By post-burn-in support coverage, \(S\subseteq A_j\), and since the iterate is updated only on \(A_j\), we have \(\operatorname{supp}(\Delta_t)\subseteq A_j\). Hence
\[
\langle d_t,\Delta_t\rangle
=
\langle \bar g_t(\beta_t),\Delta_t\rangle .
\]

Consider first a non-thresholded provisional step:
\[
\widetilde\Delta_{t+1}
=
\Delta_t-\eta_t d_t .
\]
Expanding the squared norm gives
\[
\|\widetilde\Delta_{t+1}\|_2^2
=
\|\Delta_t\|_2^2
-
2\eta_t\langle d_t,\Delta_t\rangle
+
\eta_t^2\|d_t\|_2^2 .
\]
Using the support-coverage identity above,
\[
\langle d_t,\Delta_t\rangle
=
\bigl\langle
\bar g_t(\beta_t)-\bar g_t(\beta^\ast),\Delta_t
\bigr\rangle
+
\bigl\langle
\bar g_t(\beta^\ast),\Delta_t
\bigr\rangle .
\]
The local restricted curvature gives
\[
\bigl\langle
\bar g_t(\beta_t)-\bar g_t(\beta^\ast),\Delta_t
\bigr\rangle
\ge
\mu\|\Delta_t\|_2^2.
\]
The score-at-truth control gives
\[
-2\eta_t
\bigl\langle
\bar g_t(\beta^\ast),\Delta_t
\bigr\rangle
\le
\frac{\mu}{2}\eta_t\|\Delta_t\|_2^2
+
C_{\mathrm{sc}}\eta_t^2.
\]
Together with \(\|d_t\|_2\le G_{\bar s}\), these inequalities imply
\[
\|\widetilde\Delta_{t+1}\|_2^2
\le
\left(1-\frac{3\mu}{2}\eta_t\right)\|\Delta_t\|_2^2
+
(C_{\mathrm{sc}}+G_{\bar s}^2)\eta_t^2.
\]
After reducing the contraction constant and enlarging \(C\), we obtain the simpler recursion
\begin{equation}
\label{eq:basic-local-recursion}
\|\widetilde\Delta_{t+1}\|_2^2
\le
(1-\mu\eta_t)\|\Delta_t\|_2^2
+
C\eta_t^2 .
\end{equation}

If \(t\) is not an epoch endpoint, then \(\Delta_{t+1}=\widetilde\Delta_{t+1}\). If \(t\) is an epoch endpoint, the support-preserving thresholding condition gives
\[
\|\Delta_{t+1}\|_2
=
\|H_s(\widetilde\beta_{t+1})-\beta^\ast\|_2
\le
\|\widetilde\beta_{t+1}-\beta^\ast\|_2
=
\|\widetilde\Delta_{t+1}\|_2 .
\]
Thus the recursion
\begin{equation}
\label{eq:post-threshold-local-recursion}
\|\Delta_{t+1}\|_2^2
\le
(1-\mu\eta_t)\|\Delta_t\|_2^2
+
C\eta_t^2
\end{equation}
holds for every post-burn-in step.

For Phase~I, iterate \eqref{eq:post-threshold-local-recursion} over an epoch \(\mathcal T_j\). Using \(1-x\le e^{-x}\), we get
\[
\|\Delta_{\tau_{j+1}}\|_2^2
\le
\exp\!\left(-\mu\sum_{t\in\mathcal T_j}\eta_t\right)
\|\Delta_{\tau_j}\|_2^2
+
C\sum_{t\in\mathcal T_j}\eta_t^2,
\]
after enlarging \(C\) to absorb the products of contraction factors multiplying the tolerance terms. This proves \eqref{eq:phaseI-epoch-recursion}. If \(B_j=\sum_{t\in\mathcal T_j}\eta_t\ge B_{\min}\), then
\[
a_j^{1/2}
\le
\exp(-\mu B_{\min}/2)
=:\lambda<1,
\]
which gives the displayed epochwise contraction in norm.

For Phase~II, let \(e_t=\|\Delta_t\|_2^2\) and use
\[
\eta_t=\frac{1}{c_{\mathrm{eff}}(t+b_2)},
\qquad
0<c_{\mathrm{eff}}\le \mu.
\]
Then \eqref{eq:post-threshold-local-recursion} implies
\[
e_{t+1}
\le
\left(1-\frac{\mu}{c_{\mathrm{eff}}(t+b_2)}\right)e_t
+
\frac{C}{(t+b_2)^2}.
\]
Let \(a=\mu/c_{\mathrm{eff}}\ge1\) and \(n_t=t+b_2\). Then
\[
e_{t+1}\le \left(1-\frac{a}{n_t}\right)e_t+\frac{C}{n_t^2}.
\]
Multiplying by \(n_t+1\) gives
\[
(n_t+1)e_{t+1}
\le
(n_t+1)\left(1-\frac{a}{n_t}\right)e_t
+
\frac{C(n_t+1)}{n_t^2}.
\]
Since \(a\ge1\),
\[
(n_t+1)\left(1-\frac{a}{n_t}\right)
=
n_t+1-a-\frac{a}{n_t}
\le
n_t.
\]
Therefore
\[
(n_t+1)e_{t+1}
\le
n_t e_t+\frac{C}{n_t}.
\]
Summing from \(T_0+1\) to \(t\) yields
\[
(t+b_2)e_t
\le
C+
C\sum_{u=T_0+1}^{t}\frac1{u+b_2}
\le
C\log(t+b_2).
\]
Dividing by \(t+b_2\) proves
\[
\|\Delta_t\|_2^2
\le
C\frac{\log(t+b_2)}{t+b_2}.
\]
\end{proof}
\subsection*{Regret Bound}
\begin{proof}[Proof of Theorem~\ref{thm:regret-aiht}]
Work on the same local high-probability event as in Theorem~\ref{thm:two-phase-convergence}. We first bound the refinement phase \(t\ge T_0+1\). By the local one-point curvature condition,
\[
Q_t(\beta_t)-Q_t(\beta^\ast)
\le
\langle \bar g_t(\beta_t),\Delta_t\rangle
-
\frac{\mu}{2}\|\Delta_t\|_2^2.
\]
Because \(\operatorname{supp}(\Delta_t)\subseteq A_j\), we have
\[
\langle \bar g_t(\beta_t),\Delta_t\rangle
=
\langle d_t,\Delta_t\rangle .
\]

The candidate-restricted update and support-preserving thresholding imply
\[
\|\Delta_{t+1}\|_2^2
\le
\|\Delta_t-\eta_t d_t\|_2^2
=
\|\Delta_t\|_2^2
-
2\eta_t\langle d_t,\Delta_t\rangle
+
\eta_t^2\|d_t\|_2^2.
\]
Rearranging and using \(\|d_t\|_2\le G_{\bar s}\), we obtain
\[
\langle d_t,\Delta_t\rangle
\le
\frac{\|\Delta_t\|_2^2-\|\Delta_{t+1}\|_2^2}{2\eta_t}
+
\frac{\eta_t}{2}G_{\bar s}^2.
\]
Therefore
\[
Q_t(\beta_t)-Q_t(\beta^\ast)
\le
\left(\frac1{2\eta_t}-\frac{\mu}{2}\right)\|\Delta_t\|_2^2
-
\frac1{2\eta_t}\|\Delta_{t+1}\|_2^2
+
\frac{\eta_t}{2}G_{\bar s}^2.
\]

In Phase~II, \(\eta_t=1/\{c_{\mathrm{eff}}(t+b_2)\}\) with \(c_{\mathrm{eff}}\le\mu\). Hence
\[
\frac1{2\eta_t}-\frac{\mu}{2}
=
\frac{c_{\mathrm{eff}}(t+b_2)}{2}-\frac{\mu}{2}
\le
\frac{c_{\mathrm{eff}}(t+b_2-1)}{2}.
\]
Thus
\[
Q_t(\beta_t)-Q_t(\beta^\ast)
\le
\frac{c_{\mathrm{eff}}}{2}
\left\{
(t+b_2-1)\|\Delta_t\|_2^2
-
(t+b_2)\|\Delta_{t+1}\|_2^2
\right\}
+
\frac{G_{\bar s}^2}{2c_{\mathrm{eff}}(t+b_2)}.
\]
Summing from \(t=T_0+1\) to \(T\), the first term telescopes and gives a finite constant depending on the local error at \(T_0+1\). The second term gives
\[
\sum_{t=T_0+1}^{T}
\frac{G_{\bar s}^2}{2c_{\mathrm{eff}}(t+b_2)}
\le
\frac{G_{\bar s}^2}{2c_{\mathrm{eff}}}
\log\!\left(\frac{T+b_2}{T_0+1+b_2}\right).
\]

It remains to account for the discovery phase. For \(t\le T_0\), convexity and the same update identity give the standard online-gradient bound
\[
Q_t(\beta_t)-Q_t(\beta^\ast)
\le
\frac{\|\Delta_t\|_2^2-\|\Delta_{t+1}\|_2^2}{2\eta_t}
+
\frac{\eta_t}{2}G_{\bar s}^2,
\]
on the local part of the trajectory. With \(\eta_t\asymp t^{-1/2}\), bounded local radius, and bounded candidate-restricted directions, summing this inequality gives
\[
C_{\mathrm I}(T_0)=O(\sqrt{T_0}).
\]
Any finite cost before the local post-burn-in event is collected into \(C_{\mathrm{burn}}\). Combining the burn-in, discovery, and refinement contributions proves \eqref{eq:regret-main}.
\end{proof}

\section{Proofs for Section~\ref{sec:piecewise}}
\label{app:proof-section4}

\begin{proof}[Proof of Lemma~\ref{lem:score-detector-concentration}]
Fix \(t\le T\) and consider one of the two windows \(\mathcal I_t^\pm\). Suppose it is contained in segment \(k\). The reference point \(\breve\beta_t=\beta_{t-2h}\) is measurable with respect to the sigma-field generated by observations up to time \(t-2h\), while both detection windows start after time \(t-2h\). Conditional on this past sigma-field, the observations in the detection window are independent of \(\breve\beta_t\).

For coordinate \(j\), define
\[
Z_{ij}(\breve\beta_t)
=
-X_{ij}\{\tau-\mathbf 1(Y_i\le X_i^\top\breve\beta_t)\}
-
[\Psi_k(\breve\beta_t)]_j .
\]
Conditional on \(\breve\beta_t\), the variables \(Z_{ij}(\breve\beta_t)\) are independent and centered. Since the factor \(\tau-\mathbf 1(Y_i\le X_i^\top\breve\beta_t)\) is bounded and \(X_{ij}\) is sub-Gaussian under the segmentwise design condition, \(Z_{ij}(\breve\beta_t)\) is sub-Gaussian with a parameter bounded by a constant depending only on the common segmentwise design constant. This bound is uniform in \(\breve\beta_t\).

Therefore, for some constants \(c>0\) and \(C>0\),
\[
\mathbb P\left(
\left|
[\bar g_t^\pm(\breve\beta_t)-\Psi_k(\breve\beta_t)]_j
\right|
>u
\;\middle|\;
\breve\beta_t
\right)
\le
2\exp(-ch u^2).
\]
Taking \(u=\lambda_h/2\), choosing \(C_D\) sufficiently large, and applying a union bound over the two signs, all coordinates \(j=1,\ldots,p\), and all \(t\le T\), gives the simultaneous event with probability at least \(1-\delta\).
\end{proof}

\begin{proof}[Proof of Lemma~\ref{lem:no-false-clean}]
Suppose both adjacent windows are contained in the same segment \(k\). On the event of Lemma~\ref{lem:score-detector-concentration},
\[
\|\bar g_t^+(\breve\beta_t)-\Psi_k(\breve\beta_t)\|_\infty
\le
\frac{\lambda_h}{2},
\qquad
\|\bar g_t^-(\breve\beta_t)-\Psi_k(\breve\beta_t)\|_\infty
\le
\frac{\lambda_h}{2}.
\]
By the triangle inequality,
\[
D_t
=
\|\bar g_t^+(\breve\beta_t)-\bar g_t^-(\breve\beta_t)\|_\infty
\le
\lambda_h
<
b_h,
\]
because \(b_h=2\lambda_h\). Hence no restart occurs on clean adjacent windows.
\end{proof}

\begin{proof}[Proof of Lemma~\ref{lem:detect-change}]
At \(t_k=\nu_k+h-1\), the left and right detection windows are
\[
\mathcal I_{t_k}^-=\{\nu_k-h,\ldots,\nu_k-1\},
\qquad
\mathcal I_{t_k}^+=\{\nu_k,\ldots,\nu_k+h-1\}.
\]
Thus the left window is contained in segment \(k-1\), while the right window is contained in segment \(k\).

By the triangle inequality and the Lipschitz condition in {\normalfont (D2)},
\[
\begin{aligned}
&\|\Psi_k(\breve\beta_{t_k})-\Psi_{k-1}(\breve\beta_{t_k})\|_\infty \\
&\quad\ge
\|\Psi_k(\theta_{k-1})-\Psi_{k-1}(\theta_{k-1})\|_\infty
-
\|\Psi_k(\breve\beta_{t_k})-\Psi_k(\theta_{k-1})\|_\infty
-
\|\Psi_{k-1}(\breve\beta_{t_k})-\Psi_{k-1}(\theta_{k-1})\|_\infty \\
&\quad\ge
\mathfrak J_k
-
2L_{\mathrm{det}}\|\breve\beta_{t_k}-\theta_{k-1}\|_2
\ge
\mathfrak J_k-2L_{\mathrm{det}}r_{\mathrm{det}}.
\end{aligned}
\]
By the score-jump separation in {\normalfont (D2)}, the last expression is at least \(4\lambda_h\). Hence
\[
\|\Psi_k(\breve\beta_{t_k})-\Psi_{k-1}(\breve\beta_{t_k})\|_\infty
\ge
4\lambda_h .
\]

On the event of Lemma~\ref{lem:score-detector-concentration},
\[
\|\bar g_{t_k}^+(\breve\beta_{t_k})-\Psi_k(\breve\beta_{t_k})\|_\infty
\le
\frac{\lambda_h}{2},
\qquad
\|\bar g_{t_k}^-(\breve\beta_{t_k})-\Psi_{k-1}(\breve\beta_{t_k})\|_\infty
\le
\frac{\lambda_h}{2}.
\]
Therefore,
\[
\begin{aligned}
D_{t_k}
&=
\|\bar g_{t_k}^+(\breve\beta_{t_k})
-
\bar g_{t_k}^-(\breve\beta_{t_k})\|_\infty \\
&\ge
\|\Psi_k(\breve\beta_{t_k})
-
\Psi_{k-1}(\breve\beta_{t_k})\|_\infty
-
\lambda_h \\
&\ge
3\lambda_h
>
2\lambda_h
=
b_h.
\end{aligned}
\]
Thus the detector triggers at \(t_k\), so the hard restart occurs at \(t_k+1=\nu_k+h\). If the detector triggers earlier on a contaminated adjacent-window pair, the restart occurs even sooner. Hence the changepoint is detected no later than \(\nu_k+h\).
\end{proof}

\begin{proof}[Proof of Theorem~\ref{thm:segmentwise-restart}]
Work on the intersection of the detector concentration event from Lemma~\ref{lem:score-detector-concentration} and the segmentwise stationary high-probability events in Assumption~\textnormal{(D1)}. By the choice of confidence budgets and a union bound, this event has probability at least \(1-\delta\).

We first prove the restart-delay claim. The argument proceeds segment by segment. Before changepoint \(\nu_k\), any adjacent detection windows fully contained in segment \(k-1\) are clean. By Lemma~\ref{lem:no-false-clean}, such windows do not trigger a restart. Hence the restart associated with segment \(k\) cannot occur before \(\nu_k\).

It remains to show that a restart occurs no later than \(\nu_k+h\). By the induction hypothesis for the previous segment, the restart associated with segment \(k-1\) occurs no later than \(\nu_{k-1}+h\). Assumption~\textnormal{(D3)} gives
\[
\nu_k-\nu_{k-1}
\ge
T_{\mathrm{rec}}(r_{\mathrm{det}})+3h.
\]
Therefore, at the frozen-reference time \(t_k-2h=\nu_k-h-1\),
the local age since the previous restart is at least
\[
(\nu_k-h-1)-(\nu_{k-1}+h)+1
=
\nu_k-\nu_{k-1}-2h
\ge
T_{\mathrm{rec}}(r_{\mathrm{det}})+h.
\]
Thus the previous segment has had enough clean observations for recovery, and Assumption~\textnormal{(D1)} yields
\[
\|\breve\beta_{t_k}-\theta_{k-1}\|_2
=
\|\beta_{t_k-2h}-\theta_{k-1}\|_2
\le
r_{\mathrm{det}}.
\]
Lemma~\ref{lem:detect-change} then gives \(D_{t_k}>b_h\), so the restart occurs no later than \(t_k+1=\nu_k+h\). Hence
\[
\nu_k\le \widehat\nu_k\le \nu_k+h,
\]
which proves part \textnormal{(i)}.

For segmentwise recovery, after the restart at \(\widehat\nu_k\), the algorithm sets \(\beta_{\widehat\nu_k}=0\), flushes stale observations, resets the local clock, returns to Phase~I, and rebuilds candidate sets using only observations from the new segment. Once \(T_{\mathrm{rec}}(r_{\mathrm{det}})\) clean observations have accumulated, Assumption~\textnormal{(D1)} ensures that the local post-burn-in conditions of Section~\ref{sec:main-theory} hold with target \(\theta_k\) and support \(S_k\). Applying Theorem~\ref{thm:two-phase-convergence} to the clean local run gives, during the local Phase~II refinement regime,
\[
\|\beta_t-\theta_k\|_2^2
\le
C\frac{\log(a_t^{(k)}+b_2)}{a_t^{(k)}+b_2},
\qquad
a_t^{(k)}=t-\widehat\nu_k+1.
\]
This proves part \textnormal{(ii)}.

It remains to bound dynamic regret. Decompose the contribution on each segment into three parts.

First, after each changepoint \(\nu_k\), part \textnormal{(i)} shows that at most \(h\) observations are processed before the restart associated with the new segment. By Assumption~\textnormal{(D1)}, the clean sliding-window excess loss of these iterates is bounded by \(M_{\mathrm{loc}}\) per round on the high-probability event. Thus the total detection-delay cost is at most \(KhM_{\mathrm{loc}}\).

Second, after each restart, the local buffer refill, burn-in, support-discovery period, and transition to the post-burn-in local regime contribute at most \(\mathcal R_{\mathrm I}\) by the segmentwise guarantee in Assumption~\textnormal{(D1)}. Across \(K+1\) segments, this contributes \((K+1)\mathcal R_{\mathrm I}\).

Third, during the local Phase~II refinement regime on segment \(k\), Theorem~\ref{thm:regret-aiht}, applied with \(\theta_k\) and \(S_k\), gives
\[
\mathcal R_{\mathrm{II},k}
\le
\frac{G_{\bar s}^2}{2c_{\mathrm{eff}}}
\log\{1+\nu_{k+1}-\widehat\nu_k\}.
\]
Summing the delay, discovery, and refinement contributions over all segments gives
\[
\mathcal R_T^{\mathrm{dyn}}
\le
KhM_{\mathrm{loc}}
+
(K+1)\mathcal R_{\mathrm I}
+
\frac{G_{\bar s}^2}{2c_{\mathrm{eff}}}
\sum_{k=0}^{K}
\log\{1+\nu_{k+1}-\widehat\nu_k\},
\]
which proves \eqref{eq:dynamic-regret-main}. Since \(\nu_{k+1}-\widehat\nu_k\le T\), the simplified bound \eqref{eq:dynamic-regret-simplified} follows immediately.
\end{proof}

\end{document}